\def\eqref#1{equation~\ref{#1}}
\def\1{\bm{1}}
\DeclareMathAlphabet{\mathsfit}{\encodingdefault}{\sfdefault}{m}{sl}
\SetMathAlphabet{\mathsfit}{bold}{\encodingdefault}{\sfdefault}{bx}{n}
\newcommand{\R}{\mathbb{R}}
\newtheorem{defn}{Definition}
\newtheorem{theorem}{Theorem}[section]
\newtheorem{lemma}[theorem]{Lemma}
\newtheorem{fact}[theorem]{Fact}
\newenvironment{proofs}{%
  \proof}{\endproof}
\def \P{\mathcal{P}}
\def \PC{\mathbb{P}}
\def \m{f}
\def \D{\mathcal{D}}
\def \S{\mathcal{S}}
\def \C{\mathscr{C}}
\def \x{x^*}
\def \N{\mathcal{N}}
\def \F{\mathcal{F}}
\def \name{\textit{FairProof}\xspace}
\def \calA{\mathcal{A}}
\def \W{\mathbf{W}}
\def \R{\mathbf{R}}
\def \RQ{\mathbf{R'}}
\def \A{\mathbf{A}}
\def \CW{\textsf{com}_{\mathbf{W}}}
\def \calS{\mathcal{S}}
\def \G{\scalebox{0.9}{\textsf{GeoCert}}}
\def \dproj{d_{proj}}
\def \dlp{d_{\ell_2}}
\def \VNeighbor{\scalebox{0.9}{\textsf{VerifyNeighbor}}}
\def \VPoly{\scalebox{0.9}{\textsf{VerifyPolytope}}}
\def \VBoundary{\scalebox{0.9}{\textsf{VerifyBoundary}}}
\def \VPQ{\scalebox{0.9}{\textsf{VerifyOrder}}}
\def \VDist{\scalebox{0.9}{\textsf{VerifyDistance}}}
\def \VMin{\scalebox{0.9}{\textsf{VerifyMin}}}
\def \VInf{\scalebox{0.9}{\textsf{VerifyInference}}}
\newcommand{\squishlist}{
	\begin{list}{$\bullet$}
		{
			\setlength{\itemsep}{0pt}
			\setlength{\parsep}{1pt}
			\setlength{\topsep}{1pt}
			\setlength{\partopsep}{0pt}
			\setlength{\leftmargin}{2.5em}
			\setlength{\labelwidth}{1em}
			\setlength{\labelsep}{0.5em} } }
\newcommand{\squishend}{
\end{list}  }
\newcommand{\arc}[1]{\textcolor{blue}{ ARC: #1}}
\title{\name: Confidential and Certifiable Fairness for Neural Networks}
\author{Chhavi Yadav$^1$, Amrita Roy Chowdhury$^1$, Dan Boneh$^2$, Kamalika Chaudhuri$^1$\thanks{Corresponding author : \texttt{cyadav@ucsd.edu}}\\
$^1$University of California, San Diego $^2$ Stanford University\\
}
\begin{document}

\maketitle

\begin{abstract}
Machine learning models are increasingly used in societal applications, yet legal and privacy concerns demand that they very often be kept confidential. Consequently, there is a growing distrust about the fairness properties of these models in the minds of consumers, who are often at the receiving end of model predictions. To this end, we propose \name -- a system that uses Zero-Knowledge Proofs (a cryptographic primitive) to publicly verify the fairness of a model, while maintaining confidentiality. We also propose a fairness certification algorithm for fully-connected neural networks which is befitting to ZKPs and is used in this system. We implement \name in Gnark and demonstrate empirically that our system is practically feasible. Code is available at \url{https://github.com/infinite-pursuits/FairProof}.
\end{abstract}

\section{Introduction}

Recent usage of ML models in high-stakes societal applications ~\cite{creditprediction,crimeprediction,adprediction} has raised serious concerns about their fairness ~\citep{unfair1, AppleCard, resume,jobunfair}. As a result, there is growing distrust in the minds of a consumer at the receiving end of ML-based decisions ~\cite{dwork2022distrust}. In order to increase consumer trust, there is a need for developing technology that enables public verification of the fairness properties of these models.

A major barrier to such verification is that legal and privacy concerns demand that models be kept confidential by organizations. The resulting lack of verifiability can lead to potential misbehavior, such as  model swapping, wherein a malicious entity uses different models for different customers leading to unfair behavior. Therefore what is needed is a solution which allows for public verification of the fairness of a model and ensures that the same model is used for every prediction (model uniformity) while maintaining model confidentiality. The canonical approach to evaluating fairness is a statistics-based third-party audit  ~\cite{yadav2022learningtheoretic, yan2022active, pentyala2022privfair, soares2023keeping}. This approach however is replete with problems arising from the usage of a reference dataset, the need for a trusted third-party, leaking details about the confidential model~\cite{casper2024black, hamman2023can} and lack of guarantees of model uniformity~\cite{fukuchi2019faking, confidant}.

We address the aforementioned challenges by proposing a system called \textit{FairProof} involving two parts: 
1) a fairness certification algorithm which outputs a certificate of fairness , and 
2) a cryptographic protocol using commitments and Zero-Knowledge Proofs (ZKPs) that guarantees model uniformity and gives a proof that the certificate is correct.

\begin{figure}[tb]
    \centering \includegraphics[width=\linewidth]{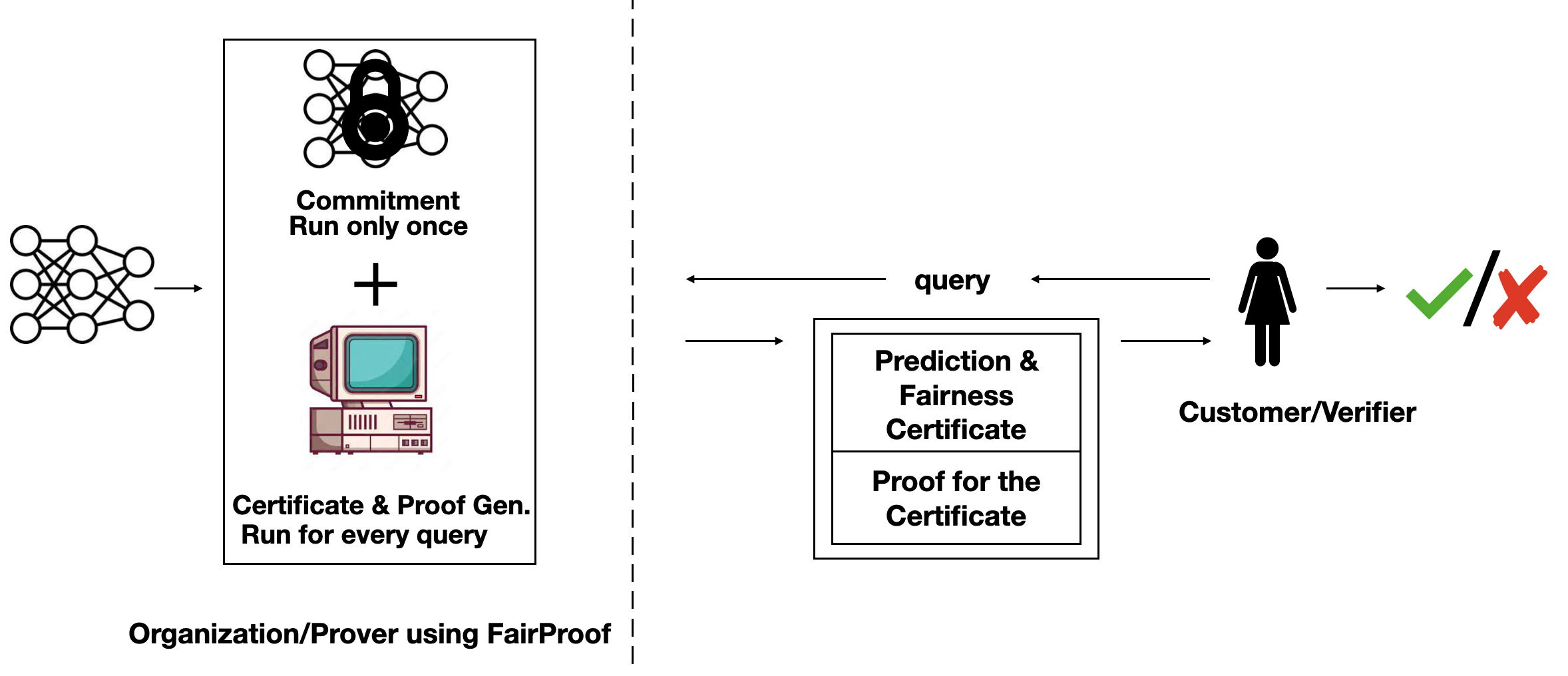}  
        \caption{Pictorial Representation of \name}
        \vspace{-17pt}
        \label{fig:polytope}
\end{figure}

Given an input query, the fairness certification algorithm outputs how fair the model is at that point \textit{according to a fairness metric}. The metric we use is local Individual Fairness (IF)~\cite{Dwork12,john2020verifying,Benussi2022IndividualFG,IF}, which is desirable for two reasons. First, it evaluates fairness of the model at a specific data point (rather than for the entire input space) -- this allows us to give a \textit{personalized} certificate to every customer, as would be required by customer-facing organizations.  Second, it works on the model post-training, making it completely \textit{agnostic} to the training pipeline.

How do we design a certification algorithm for the chosen metric? We observe that certifying local IF can be reduced to an instantiation of certifying robustness.\footnote{Certifiable Robustness quantifies a model's resistance to adversarial attacks by measuring the extent to which a data point can be perturbed without altering the model prediction.} We then leverage techniques from the robustness literature to design our algorithm. One of our key contributions is to design the algorithm so that it is ZKP-friendly. In particular, the computational overhead for ZKPs depends on the complexity of the statement being proved. To this end, we design a fairness certificate which results in relatively low complexity statements.

Once the fairness certificate has been computed, we want to enable the consumer to verify that the certificate was indeed computed correctly, but without revealing the model weights. To do this, we rely on Succinct Zero Knowledge Proofs ~\cite{GMR,GMW}. This cryptographic primitive enables a prover (eg. bank) to prove statements (eg. fairness certificate) about its private data (eg. model weights) without revealing the private data itself. It provides a proof of correctness as an output. Then a verifier (eg. customer) verifies this proof without access to the private data. In our case, if the proof passes verification, it implies that the fairness certificate was computed correctly with respect to the hidden model.

We design and implement a specialized ZKP  protocol to efficiently prove and verify the aforementioned fairness certification algorithm. Doing this naively would be very computationally expensive. We tackle this challenge with three insights.  First, we show that verification of the entire certification algorithm can be reduced to a few strategically chosen sub-functionalities, each of which can be proved and verified efficiently. Second, we provide a \textit{lower} bound on the certificate, i.e., a conservative estimate of the model's fairness, for performance optimization. Third, we observe that certain computations can be done in an offline phase thereby reducing the online computational overhead. 

Our solution ensures model uniformity through standard cryptographic commitments. A cryptographic commitment to the model weights binds the organization to those weights publicly while maintaining confidentiality of the weights. This has been widely studied in the ML security literature~\citep{gupta2023sigma, boemer2020mp2ml, juvekar2018gazelle, liu2017oblivious, srinivasan2019delphi, mohassel2017secureml, mohassel2018aby3}.


\textbf{Experiments.} In this work we focus on fully-connected neural networks with ReLU activations as the models. We implement and evaluate \name~on three standard fairness benchmark datasets to demonstrate its practical feasibility. For instance, for the \textit{German}~\cite{German} dataset, we observe that \name~takes around 1.17 minutes on an average to generate a verifiable fairness certificate per data point without parallelism or multi-threading on an Intel-i9 CPU chip. The communication cost is also low -- the size of the verifiable certificate is only 43.5KB.
\section{Preliminaries \& Setting}\label{sec:background}
\textbf{Fairness.}\label{sec:background:fairness}
Existing literature has put forth a wide variety of fairness definitions~\cite{survey,barocas-hardt-narayanan}. In this paper, we focus on the notion of \textit{local individual fairness}~\cite{john2020verifying,Dwork12, Benussi2022IndividualFG} defined below, as it best aligns with our application (see Sec. \ref{sec:setting} for more details).
\begin{defn}[Local Individual Fairness] A machine learning model $\m: \mathbb{R}^n \mapsto \mathcal{Y}$ is defined to be $\epsilon$-individually fair w.r.t to a data point $\x\sim \D$ under some distance metric $d:\mathbb{R}^n \times \mathbb{R}^n \mapsto \mathbb{R}$  if
\begin{gather}
  \forall x: \ d(x,\x)\leq \epsilon \implies \m(\x)=\m(x)\label{eq:IF:1}
\end{gather}\label{def:IF}
\end{defn}
\vspace{-22pt}
We say a model $\m$ is exactly $\epsilon^*$-individually fair w.r.t $\x$ if $\epsilon^*$ is the largest value that satisfies Eq. \ref{eq:IF:1}.  In particular, $\epsilon^*$ is known as the local individual fairness parameter. \textbf{For brevity we will be using $\epsilon$ to mean $\epsilon^*$ and fairness/individual fairness to refer to the notion of local individual fairness, unless stated otherwise, throughout the rest of the paper.}

Individual fairness formalizes the
notion that similar individuals should be treated similarly; more precisely, get the same classification. The similarity is defined according to a task dependent distance metric $d(\cdot)$ that can be provided
by a domain expert. Examples of such a metric could be weighted $\ell_p$ norm where the weights of the sensitive features (race, gender) are set to 0~\cite{Benussi2022IndividualFG}. 

\begin{figure}[tb]
    \centering \includegraphics[width=0.3\linewidth]{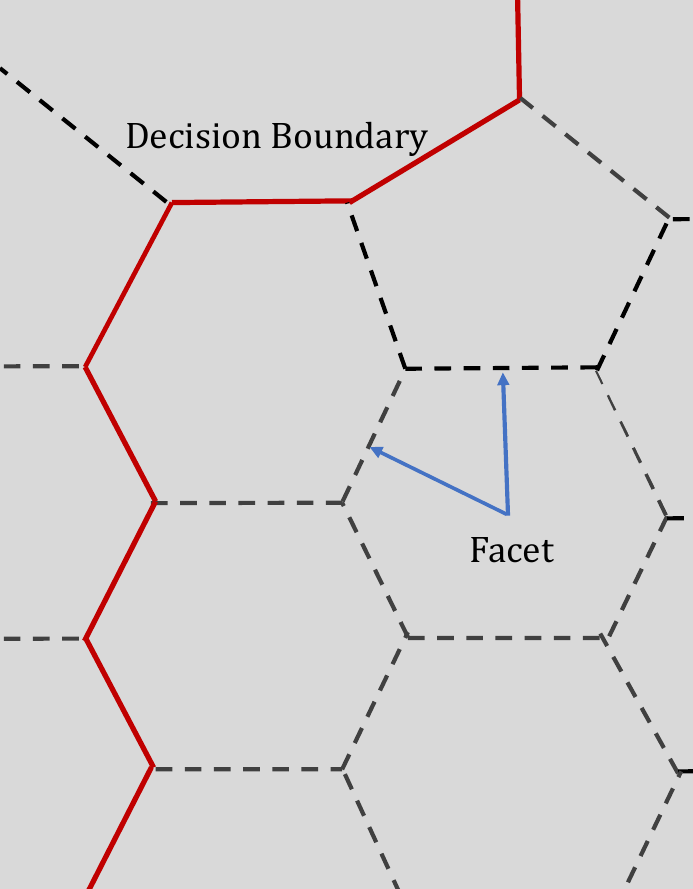}  
        \caption{A neural network with ReLU activations partitions the input space into polytopes.}
        \label{fig:polytope}
\end{figure}

\textbf{Neural Networks.}\label{sec:background:PLNN}
We focus on the classification task and 
consider neural network (NN) classifiers $\m: \mathcal{X} \mapsto \mathcal{Y}$, where $f$ is a fully-connected neural network with ReLU activations, $\mathcal{X} = \mathbb{R}^n$ is the input space and $\mathcal{Y}$ is a discrete label set. This NN classifier (pre-softmax) can also be viewed a collection of piecewise linear functions over a union of convex polytopes~\cite{xu2021traversing,hanin2019deep,robinson2019dissecting,croce2019provable,serra2018bounding}. Here each linear function corresponds to one polytope and each polytope corresponds to one activation pattern of the nodes in the NN. A polytope $\P$ is represented by a set of linear inequalities, $\P=\{x|\mathbf{A}x\leq \mathbf{b}\}$ ; then the collection of all such polytopes forms a partition of the input domain, $\mathcal{X} = \bigcup\P$ (Fig. \ref{fig:polytope}).

A facet is an $(n-1)$-\textit{face} of the polytope corresponding to the set $\{x|x \in \P \cap \mathbf{A}_ix = \mathbf{b}_i\}$ where $\mathbf{A}_i$ and $\mathbf{b}_i$ are the values of $\mathbf{A}$ and $\mathbf{b}$ at the $i^{th}$ dimension. Two polytopes that share a facet are known as neighboring polytopes. The decision region of $\m$ at a data point $\x$ is defined as the set of points for which the classifier returns the same
label as it does for $\x$, essentially  the set $ \{x | \m(x) = \m(\x)\}$. This decision region can also be expressed as a union of convex polytopes~\cite{Geocert}. A facet that coincides with the decision boundary of $\m$ is known as a boundary facet. See Fig.~\ref{fig:polytope}  and App. \ref{app:background} for more details.


\textbf{Cryptographic Primitives.}\label{sec:background:ZKP} We use two cryptographic primitives, namely commitment schemes and zero knowledge proof,  for verifying the individual fairness certification.

\noindent A \textit{Commitment Scheme} commits to a private input~$w$ without revealing anything about $w$; its output is a commitment string $\textsf{com}_{w}$. A commitment scheme has two  properties: 
\begin{compactenum}
    \item \textit{Hiding}: the commitment string $\textsf{com}_{w}$ reveals nothing about the committed value $w$. 
    \item \textit{Binding}: it is not possible to come up with another input $w'$ with the same commitment string as $w$, thus binding $w$ to $\textsf{com}_{w}$ (simplified). 
\end{compactenum}


\textit{Zero Knowledge Proofs}~\cite{GMR} describe a protocol between two parties -- a prover and a verifier, who both have access to a circuit $P$. A ZKP protocol enables the prover to convince the verifier that it \textit{possesses} an input $w$ such that $P(w) = 1$,
without revealing any additional information about $w$ to the verifier. 
A simple example is when $P_\varphi(w) = 1$ iff $\varphi$ is a SAT formula and $\varphi(w) = 1$;
a ZKP protocol enables the prover to convince a verifier that there is a $w$ for which $\varphi(w) = 1$,
while revealing nothing else about $w$. A ZKP protocol has the following properties:


\begin{compactenum}
\item\textit{Completeness}. For any input $w$ such that $P(w) = 1$, an honest prover who follows the protocol correctly can convince an honest verifier that $P(w) = 1$. 

\item\textit{Soundness.} Given an input $w$ that $P(w) \neq 1$, a malicious prover who deviates arbitrarily from the protocol cannot falsely convince an honest verifier that $P(w) = 1$, with more than negligible probability. 

\item\textit{Zero knowledge.} If the prover and verifier execute the protocol to prove that $P(w) = 1$, even a malicious
verifier, who deviates arbitrarily from the protocol, can learn no additional information about $w$ other than $P(w) = 1$.
\end{compactenum}

Theory suggests that it is possible to employ ZKPs to verify any predicate $P$ in the class NP~\cite{GMW}. Moreover, the resulting proofs are non-interactive and succinct. However, in practice, generating a proof for even moderately complex predicates often incurs significant computational costs. To this end, our main contribution lies in introducing a ZKP-friendly certification algorithm, to facilitate efficient fairness certificate generation.

\textbf{Problem Setting.}\label{sec:setting} Formally, a model owner holds a confidential classification model $\m$ that cannot be publicly released. A user supplies an input query $\x$ to the model owner, who provides the user with a prediction label $y=\m(\x)$ along with a fairness certificate $\C$ w.r.t to $\x$. This certificate can be verified by the user and the user is also guaranteed that the model owner uses the same model for everyone.

The above setting needs three tools. First, the model owner requires an algorithm for generating the fairness certificate with white-box access to the model weights. This algorithm is discussed in Sec. \ref{sec:certification}. Second, a mechanism is needed that enables the user to \textit{verify} the received certificate (public verification) \textit{without violating model confidentiality}. This mechanism is discussed in Sec. \ref{sec:verification}. Third, a mechanism is needed to guarantee that the same model is used for everyone (model uniformity), also without violating model confidentiality. For ensuring uniformity, the model owner should commit the model in the initially itself, before it is deployed for users. This has been widely studied and implemented by prior work as discussed in the introduction and an actual implementation of commitments is out of scope of this work.

\section{How to Certify Individual Fairness?}\label{sec:certification}

In this section we present an algorithm to compute a local individual fairness certificate. This certificate is computed by the model owner with white-box access to the model weights and is specific to each user query, thereby leading to a \textit{personalized} certificate. The certificate guarantees to the user that the model has certain fairness properties at their specific query. 


\textit{Preliminaries.} Starting with some notation, let $\mathcal{S}$ be the set of $k$ sensitive features, $\mathcal{S}:=\{S_1, \cdots, S_k\}$ where $S_i$ denotes the $i^{th}$ sensitive feature. We assume that each sensitive feature $S_i$ has a discrete and finite domain, denoted by $domain(S_i)$, which is in line with typical sensitive features in practice, such as race (eg. black/white/asian), presence of a medical condition (yes/no). Let $domain(\mathcal{S})$ represent the set of all possible combinations of the values of sensitive features, $domain(\mathcal{S}):= domain(S_1)\times \cdots \times domain(S_k)$. Without loss of generality, any data point $x \in \mathbb{R}^n$ is also represented as $x = x_{\setminus \mathcal{S}} \cup x_{\mathcal{S}}$, where $x_{\setminus \mathcal{S}}$ and $x_{\mathcal{S}}$ are the non-sensitive and sensitive features of $x$.


For the distance metric in individual fairness (Eq. \ref{eq:IF:1}), we consider a weighted $\ell_2$-norm where the non-sensitive features have weight $1$ while the sensitive features have weight $0$. This distance metric is equivalent to the $\ell_2$-norm sans the sensitive features. Thus, based on Def. \ref{def:IF}, $\m$ is $\epsilon$-individually fair w.r.t  $\x$ iff,
\begin{gather}
\forall x: \thinspace  ||x_{\setminus\mathcal{S}} -\x_{\setminus\mathcal{S}}||_2\leq \epsilon \implies \m(\x)=\m(x)\label{eq:IF} 
\end{gather}

With this notation in place, observe that our fairness certificate $\C$ is essentially the value of the parameter $\epsilon$. Intuitively it means that the model's classification is independent of the sensitive features as long as the non-sensitive features lie within an $\ell_2$ ball of radius $\epsilon$ centered at $\x_{\setminus\mathcal{S}}$. Eq.\ref{eq:IF} can also be equivalently viewed as follows: set the sensitive features of $\x$ and $x$ to a particular value $s \in domain(\mathcal{S})$ (so that they cancel out in the norm), then find the corresponding certificate $\epsilon_s$ and repeat this procedure for all values in $domain(\mathcal{S})$; the final certificate $\epsilon$ is the minimum of all $\epsilon_s$.

Next we propose an algorithm to compute this fairness certificate. Our algorithm is based on three key ideas, as we describe below.

    

\begin{figure}
\centering
  \begin{subfigure}{0.45\linewidth}
    \centering \includegraphics[width=0.75\linewidth]{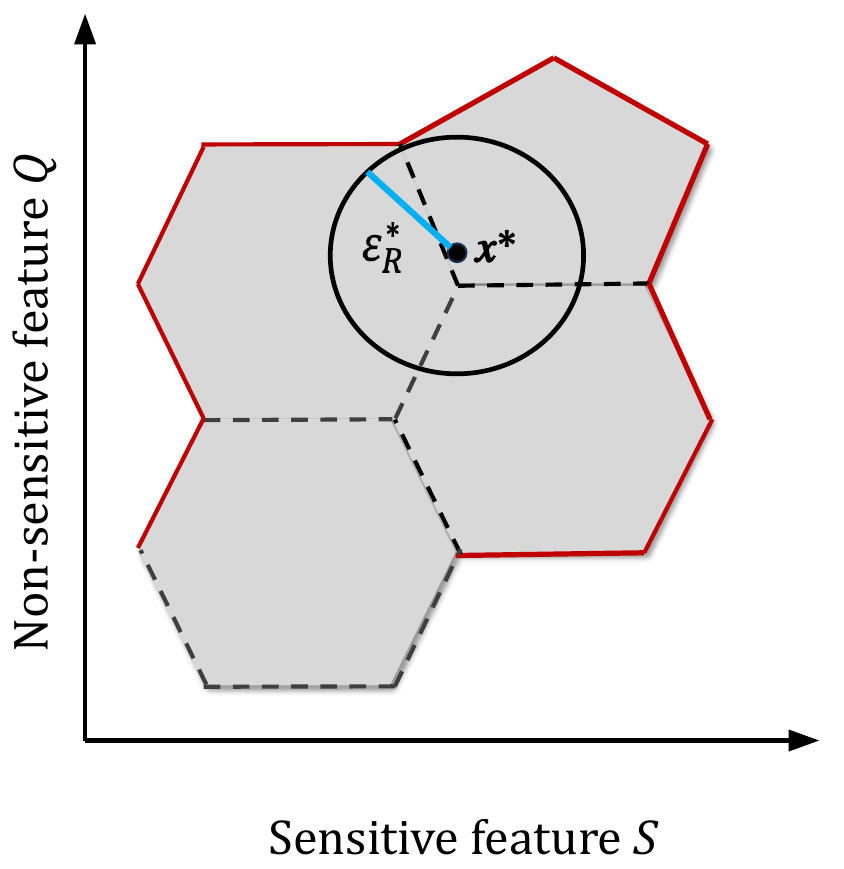}  
        \caption{Certifiable robustness}
        \label{fig:robustness}\end{subfigure}
        \begin{subfigure}{0.45\linewidth}
        \includegraphics[width=0.75\linewidth]{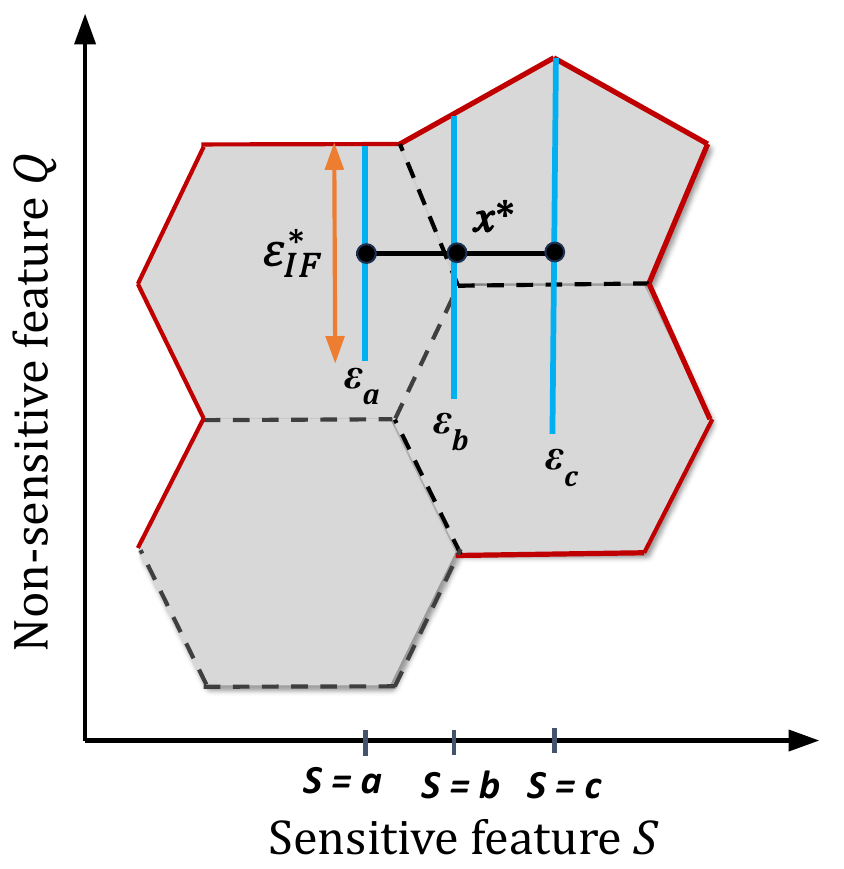}  
        \caption{Certifiable fairness}
        \label{fig:fairness}
        \end{subfigure} 
\caption{Connection between robustness \& fairness for $n=2$ and one sensitive feature $S$ with values $\{a,b,c\}$. Final fairness certificate is the minimum of $\{\epsilon_a, \epsilon_b, \epsilon_c\}$. Red color denotes decision boundary. }\label{fig:connection}
\end{figure}

\noindent\textbf{Idea 1: Reduction from fairness to robustness.} Our first key observation is that in our setting, certifiable fairness can be reduced to an instantiation of certifiable robustness, which enables us to re-use ideas from existing robustness literature for our purpose. In particular, the reduction is as follows. 
A model $\m$ is defined to be $\epsilon$-pointwise $\ell_2$ robust (henceforth robustness) for a data point $\x$, if
\begin{equation}
\forall x: \thinspace  ||x-\x||_2\leq \epsilon \implies \m(\x)=\m(x)
\end{equation}

Comparing this definition to Eq.\ref{eq:IF} and its alternate view, we observe that once the sensitive features have been fixed to a value $s \in domain(\mathcal{S})$, computing the corresponding fairness certificate $\epsilon_s$ is \textit{equivalent} to solving the robustness problem in $(n-k)$ dimensions where the $k$ dimensions corresponding to the sensitive features $\S$ are excluded. Let us assume there exists an algorithm which returns the pointwise $\ell_2$ robustness value for an input. Then the final fairness certificate $\epsilon$ computation requires \scalebox{0.95}{$|domain(\mathcal{S})|$} calls to this algorithm, one for each possible value of the sensitive features in $\S$. Fig. \ref{fig:connection} illustrates this idea pictorially for NNs.  

For ReLU-activated neural networks represented using $n$-dimensional polytopes, setting the values of sensitive features implies bringing down the polytopes to $(n-k)$ dimensions. Geometrically, this can be thought of as slicing the $n$-dimensional polytopes with hyperplanes of the form $x_i = s_i$ where $x_i$ is the $i^{th}$ coordinate, set to the value $s_i$.



\begin{algorithm}[tbh]
  
\begin{algorithmic}[1]
 \caption{ Individual Fairness Certification}
   \label{alg:IF}
\Statex\textbf{Inputs} $\x \in \mathbb{R}^n$, $f$ : ReLU-activated Neural Network
\Statex \textbf{Output} $\epsilon_{LB}$ : Our Fairness Certificate for $\x$
\State Construct the set of all polytopes \scalebox{0.9}{$\PC=\bigcup\P$} for $\m$ where each polytope is expressed as \scalebox{0.9}{$\P=\{x|\mathbf{A}x\leq \mathbf{b}\}$}
\State $E:=[\hspace{0.1cm}]$
\State \textbf{for} $s \in domain(S_1)\times\cdots\times domain(S_k)$
\State \hspace{0.3cm} $\PC' := $ \scalebox{0.9}{\textsf{ReducePolyDim}}($\PC$, $s$) (Alg. \ref{alg:app:n_kpoly} in Appendix)
\State \hspace{0.3cm} $\epsilon_s := \G(\x,\PC', \dproj)$

\State \hspace{0.3cm} $E.append(\epsilon_s)$
\State \textbf{end for}
\State $\epsilon_{LB} := \min E$
\State \textbf{Return} $\epsilon_{LB}$
\end{algorithmic}
\end{algorithm}

\textbf{Idea 2. Using an efficient certified robustness algorithm.} For ReLU-activated neural networks (see Sec.\ref{sec:background:PLNN}), the naive algorithm for certifying robustness is infeasible; it entails computing the distance between $\x$ and \textit{all}  boundary facets (facets coinciding with the decision boundary of the model) induced by the model, which is exponential in the number of hidden neurons. Instead, we rely on an efficient iterative algorithm \textit{GeoCert} (Alg. \ref{alg:geo} in App. \ref{app:IF}), proposed by ~\cite{Geocert}. This algorithm starts from the polytope containing the data point $\x$ and \textit{iteratively} searches for the boundary facet with the minimum distance from $\x$. A priority queue of facets is maintained, sorted according to their distance from $\x$. At each iteration, the facet with the minimum distance is popped and its neighbors (polytopes adjacent to this facet) are 
examined. If the neighboring polytope is previously unexplored, the distance to all of its facets is computed and inserted them into the
priority queue; otherwise the next facet is popped. The algorithm terminates as soon as a boundary facet is popped. Fig. \ref{fig:geocert} presents a pictorial overview of \textit{GeoCert}. Additional details are in App. \ref{app:IF}.

\begin{figure}[tb]
    \centering \includegraphics[width=00.7\linewidth]{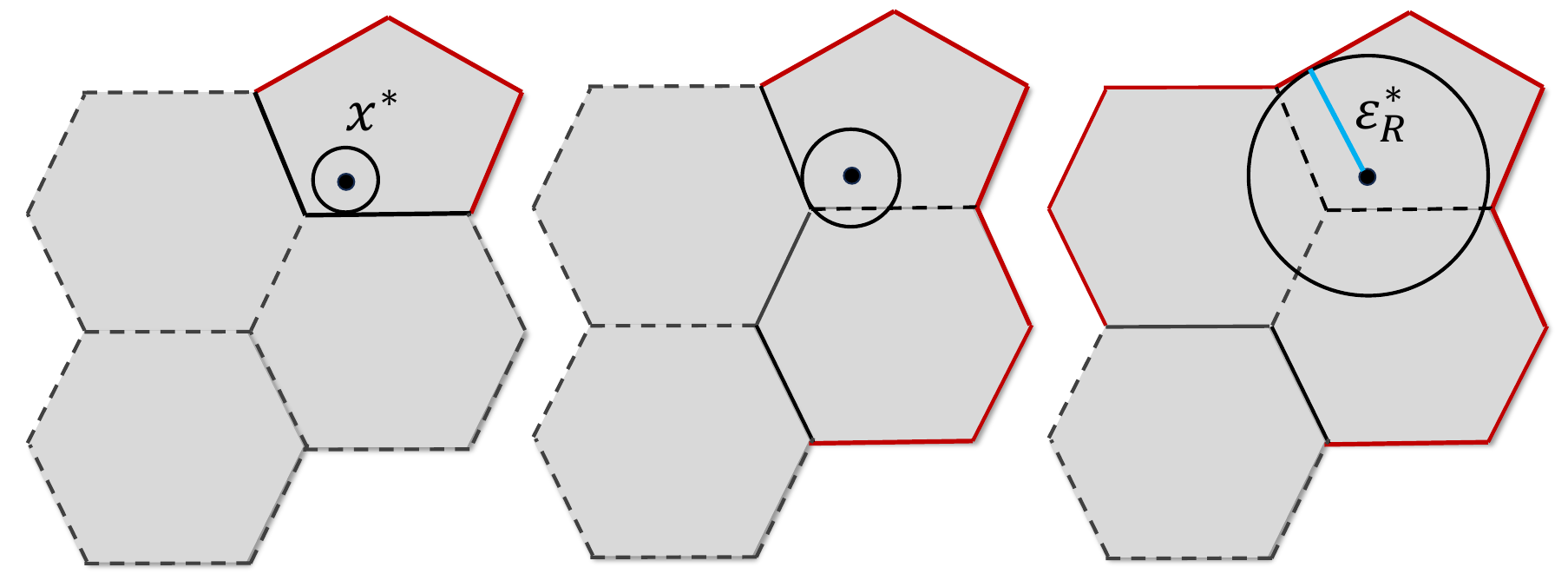}  
        \caption{\textit{GeoCert}'s behavior on point $\x$. Colored facets are in the priority queue; red and solid black lines
denote boundary and non-boundary facets respectively. Algorithm stops when the minimum distance facet is a boundary facet (rightmost).}
        \label{fig:geocert}
\end{figure}


\textbf{Idea 3: Generate a lower bound $\epsilon_{LB}$ for efficient ZKP.} \textit{GeoCert} provides exact fairness certificates $\epsilon^*$, by using a constrained quadratic program solver to get the actual distance between the input point and a facet. However, verifying this solver using ZKPs would be a highly computationally intensive task. Instead we propose to report a lower bound on the  certificate, $\epsilon_{LB}<\epsilon^*$, which considerably improves performance. A lower bound means that the reported certificate $\epsilon_{LB}$ is a conservative estimate -- the true measure of the model's fairness could only be higher. Instead of the exact distance, we compute the \textit{projection} distance between the input point and the \textit{hyperplane} containing the facet (facet is a subset of the hyperplane), which gives a lower bound on the exact distance between $\x$ and the facet. The projection distance computation involves simple arithmetic operations which are relatively computationally feasible for ZKPs (see Sec. \ref{sec:verification} for more details). Fig. \ref{fig:projection} shows the intuition pictorially.


\begin{figure}[tb]
    \centering \includegraphics[width=0.5\linewidth]{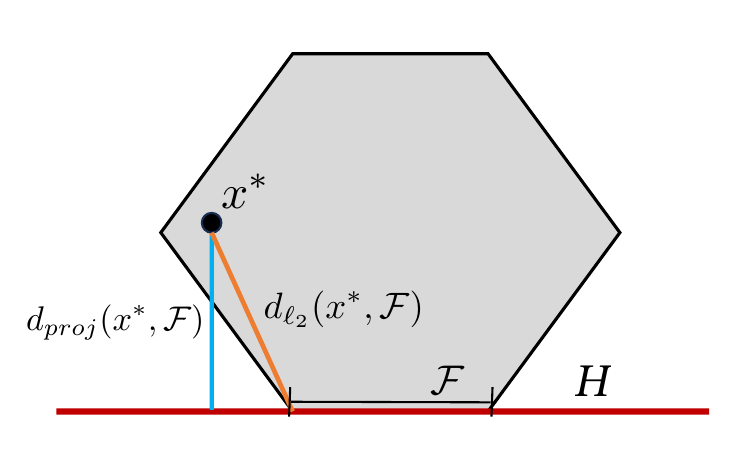}  
        \caption{Projection of $\x$ onto the hyperplane $H$ containing facet $\F$ gives a lower bound on the $\ell_2$ distance between $\x$ and $\F$, i.e., $\dproj(\x,\F)\leq \dlp(\x,\F)$.}
        \label{fig:projection}
\end{figure}

\begin{theorem}
Given a data point $\x$ and a neural network $\m$, Alg. \ref{alg:IF} provides a lower bound $\epsilon_{LB}$ of the correct individual fairness parameter of $\x$.
\end{theorem}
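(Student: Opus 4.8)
The plan is to establish the lower bound by tracking two sources of under-estimation in Algorithm~\ref{alg:IF} relative to the exact fairness parameter $\epsilon^*$: first, that the reduction from fairness to robustness (Idea~1) is exact rather than an approximation, so no slack is introduced there; and second, that replacing the exact facet distance by the projection-to-hyperplane distance (Idea~3) only ever \emph{decreases} the distance estimate, hence the output of \G{} run with $\dproj$ is at most the output run with the true distance, which in turn is the exact pointwise robustness value.

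\textbf{Step 1: Fix the sensitive assignment and reduce to robustness.} For each $s \in domain(\mathcal{S})$, I would invoke the equivalence stated in Idea~1: constraining $x_{\mathcal{S}} = \x_{\mathcal{S}} = s$ (so the sensitive coordinates cancel in the weighted $\ell_2$ norm) makes the largest admissible $\epsilon_s$ exactly the pointwise $\ell_2$ robustness radius of $\m$ restricted to the $(n-k)$-dimensional slice $\{x : x_{\mathcal{S}} = s\}$. The geometric picture is that \textsf{ReducePolyDim} slices each polytope $\P = \{x \mid \mathbf{A}x \le \mathbf{b}\}$ by the hyperplanes $x_i = s_i$, producing an exact polyhedral decomposition of that slice; I would note that this slicing is lossless (it is just substitution of coordinates), so the partition structure and hence the set of boundary facets within the slice is faithfully represented. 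Therefore the \emph{exact} minimal distance from $\x_{\setminus\mathcal{S}}$ to a boundary facet in $\PC'$ equals the true $\epsilon^*_s$.

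\textbf{Step 2: \G{} with projection distance under-estimates the per-slice certificate.} Here I would appeal to the key inequality illustrated in Fig.~\ref{fig:projection}: for any facet $\F$ lying in a hyperplane $H$, $\dproj(\x,\F) \le \dlp(\x,\F)$, since the orthogonal projection of $\x$ onto $H$ may fall outside $\F$, in which case the true distance to $\F$ is strictly larger, and otherwise they are equal. I would then argue that running \G{} with the distance oracle $\dproj$ in place of the exact oracle can only move facets \emph{earlier} in the priority queue and report a \emph{smaller} value when a boundary facet is popped: more carefully, the value $\epsilon_s$ returned is $\min$ over the boundary facets reached of their $\dproj$ values, and since each $\dproj(\x,\F) \le \dlp(\x,\F) \le \epsilon^*_s$ for every boundary facet (the exact robustness radius being the min of exact distances over \emph{all} boundary facets), we get $\epsilon_s \le \epsilon^*_s$. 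The one subtlety to address is that \G{} with a perturbed distance oracle might terminate having explored a different region and might in principle pop a boundary facet whose $\dproj$ value is not the global minimum of $\dproj$ over all boundary facets; but even then the popped value is some $\dproj(\x,\F)$ with $\F$ a genuine boundary facet, so it is still $\le \epsilon^*_s$, which is all that is needed for a lower bound.

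\textbf{Step 3: Take the minimum over sensitive assignments.} Finally, by the alternate view of Eq.~\ref{eq:IF}, the true fairness parameter is $\epsilon^* = \min_{s \in domain(\mathcal{S})} \epsilon^*_s$, while Algorithm~\ref{alg:IF} returns $\epsilon_{LB} = \min_{s} \epsilon_s$. Since $\epsilon_s \le \epsilon^*_s$ for every $s$ by Step~2, the minimum is preserved: $\epsilon_{LB} \le \epsilon^*$, which is the claim. I expect the main obstacle to be Step~2 --- specifically, making rigorous the claim that substituting the projection-distance oracle into \G{}'s iterative search still yields a value bounded by the exact robustness radius, because \G{}'s correctness proof in \cite{Geocert} relies on the priority-queue order being consistent with true distances; the cleanest fix is to sidestep \G{}'s internal dynamics and simply observe that \emph{whatever} boundary facet is ultimately popped, its reported projection distance is a valid lower bound on the distance to that facet and hence on $\epsilon^*_s$. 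A secondary point worth a sentence is confirming that \textsf{ReducePolyDim} indeed produces the correct lower-dimensional complex, which follows directly from its definition in Alg.~\ref{alg:app:n_kpoly}.
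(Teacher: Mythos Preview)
Your Step~2 contains a sign error that breaks the argument. You write that ``each $\dproj(\x,\F) \le \dlp(\x,\F) \le \epsilon^*_s$ for every boundary facet,'' but the second inequality goes the wrong way: since $\epsilon^*_s$ is defined as the \emph{minimum} of $\dlp(\x,\F)$ over all boundary facets, we have $\epsilon^*_s \le \dlp(\x,\F)$, not $\dlp(\x,\F) \le \epsilon^*_s$. The same error recurs in your proposed ``cleanest fix'': knowing only that the popped facet $\F$ is a genuine boundary facet gives $\dproj(\x,\F) \le \dlp(\x,\F)$, but $\dlp(\x,\F)$ may well exceed $\epsilon^*_s$ (the facet popped under $\dproj$ need not be the one closest under $\dlp$). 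So sidestepping \G{}'s internal guarantees does \emph{not} yield the bound.

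The paper's route, which you should adopt, is precisely the one you tried to avoid: invoke the correctness of \G{} (Thm.~\ref{thm:correctness:Geo}) for the distance function $\dproj$ itself. Projection distance to a hyperplane is ray-monotonic, so \G{} run with $\dproj$ returns the boundary facet minimizing $\dproj$ over \emph{all} boundary facets. Now let $\F^{\min}$ be the boundary facet achieving $\epsilon^*_s = \dlp(\x,\F^{\min})$; then
\[
\epsilon_s \;=\; \min_{\F\ \text{boundary}} \dproj(\x,\F) \;\le\; \dproj(\x,\F^{\min}) \;\le\; \dlp(\x,\F^{\min}) \;=\; \epsilon^*_s,
\]
where the last inequality is Fact~\ref{fact:proj}. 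The paper packages this as a short contradiction argument (Thm.~\ref{thm:projection_distance}), but the content is the chain above. Your Steps~1 and~3 are fine and match the paper's reduction; only Step~2 needs this repair.
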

\vspace{-10pt}
Proof for this theorem is given in App. \ref{app:sec:correctness_fairproof_proj}, Thm. \ref{thm:projection_distance}.

Our resulting fairness certification algorithm is described in Alg.\ref{alg:IF} and detailed in App. \ref{app:IF}.


\section{\name: Verification of the Individual Fairness Certificate} \label{sec:verification}
Without careful design choices ZKPs can impose significant computational overhead. To this end, we design an efficient verification protocol named \name~ by combining insights from cryptography and  ML. Specifically, \name~is based on three key ideas described below.
 
 \noindent 
 \textbf{Idea 1: Strategic verification of sub-functionalities.} A naive verification mechanism replicates all the computations outlined in Alg.\ref{alg:IF}. However, this would involve computing \textit{all} the polytopes during \textit{every} proof generation -- this is computationally expensive since the number of polytopes is exponential in the number of hidden neurons in the model. In contrast, we show that the verification can be streamlined by focusing on five strategically chosen sub-functionalities, each of which can be checked using certain properties of polytopes and neural networks.  Consequently, we only verify the polytopes \textit{traversed} by the certification mechanism. 
 
 \noindent
 \textbf{Idea 2: Representative points.} Certain numeric properties of a polytope can be efficiently proven if one has access to a representative point in the interior of the polytope.  We leverage this insight in \name~to efficiently verify our chosen sub-functionalities, discussed in the following sections.
 
 \noindent\textbf{Idea 3: Offline computation.} We show that certain computations can be performed offline which further reduces the time needed in the online phase.

 Next, we detail our verification mechanism \name. Recall that in our setting model owner is the prover and user is the verifier. The verification consists of two phases:
 
 \textbf{Phase 1: Pre-processing.} All the operations in this phase are executed only once and before the model is deployed to the users. The following two actions need to be taken by the model owner in this phase.
\begin{compactenum}
    \item Commit to the weights $\W$ of the model $\m$, resulting in the commitment $\textsf{com}_\W$ (we assume that the architecture of $\m$ is known, i.e., $\m$ is a fully connected neural lnetwork with ReLU activations).
    \item Compute a representative point $z_{\P}$ for each polytope $\P$. Additionally, it computes a representative point $z_{\F}$ for every facet $\F$\footnote{A facet is also essentially a polytope, albeit in the $(n-1)$-dimensional space.}\footnote{Although the number of polytopes and facets are exponential in the number of the neurons in the model, this is a one-time computation performed completely offline and can be parallelized. See Sec. \ref{sec:evaluation} for empirical overhead of this pre-processing step on models for standard datasets.}.  
\end{compactenum}

\noindent\textbf{Phase 2: Online verification.} The online verification phase is executed \textit{every} time a user submits a query $\x$ to the model owner for prediction. Verifying the computation of Algorithm \ref{alg:IF} essentially amounts to verifying \textit{GeoCert} with some modifications and consists of five steps. The model owner generates proofs for these five functionalities and the user validates them.

 \textit{1. Verifying initial polytope (Alg. \ref{alg:verify:polytope}).} Recall that \textit {GeoCert} starts from the polytope containing data point $\x$. Hence, the verifier needs to check that the initial polytope
  $(1)$ indeed contains the data point $\x$, and
    $(2)$ is one of the polytopes obtained from the model $f$. The key idea used in this function is that each polytope is associated with a unique ReLU activation code. Verification for step $(1)$ involves computing the ReLU activation code for $\x$ using the committed weights $\textsf{com}_\W$ and step $(2)$ involves deriving the corresponding polytope for this activation code from $\textsf{com}_\W$. 
    
    \textit{2. Verifying distance to facets (Alg. \ref{alg:verify:distance}).} During its course \textit{GeoCert} computes distance between $\x$ and various facets. Hence, the verifier needs to check the correctness of these distance computations. As discussed in the preceding section, we compute a lower bound of the exact distance using projections, which can be efficiently proved under ZKPs. 

\textit{3. Verifying neighboring polytopes (Alg. \ref{alg:verify:neighbor}).} In each iteration \textit{GeoCert} visits a neighboring polytope adjacent to the current one; the two polytopes share the facet that was popped in the current iteration. Verifying neighborhood entails checking that the visited polytope indeed $(1)$ comes from the model $\m$, and $(2)$ shares the popped facet. The key idea used here is that two neighboring polytopes differ in a single ReLU activation corresponding to the shared facet (Fact \ref{fact:ReLU}). Specifically, the prover retrieves the representative point corresponding to the visited polytope and computes its ReLU activation code, $R'$, using the committed weights $\textsf{com}_\W$. Next, it computes the polytope corresponding to $R'$ from $\textsf{com}_\W$ to prove that it is obtained from the model $\m$. This is followed by showing that the hamming  distance between $R'$ and $R$ is one, where $R$ is the activation code for the current polytope. Finally, the prover shows that the current facet is common to both the polytopes.

 \textit{4. Verifying boundary facet (Alg. \ref{alg:verify:boundary}).} The termination condition of \textit{GeoCert} checks whether the current facet is a boundary facet or not; we verify this in \name~as follows. Let $R$ denote the activation code for the current polytope $\P$ and let $f_R(x)=W_{R}x+b_R$ represent the linear function associated with $R$. For the ease of exposition, let $\m$ be a binary classifier. In other words, $f_R(x)$ is the input to the softmax function in the final layer of $\m$ (i.e., logits) for all data points $x \in \P$. The key idea for verification is that \textit{iff} $x$ lies on a boundary facet, $f_R(x)$ has the \textit{same} value for both the logits. For verifying this computation, we rely on the pre-computed representative point of a facet. Specifically, the prover retrieves the representative point $z$ for the current facet $\F=\{x|Ax\leq b\}$. First, it proves  that $z$ lies on $\F$ by showing $A z\leq b$ holds. Next, the prover computes $f_R$ (i.e., the weights $W_R$ and $b_R$) from the committed weights using $R$ and tests the equality of both the logits in $f_R(z)$.

 \textit{5. Verify order of facet traversal (Alg. \ref{alg:verify:PQ}).} The order in which the facets are traversed needs to be verified -- this is equivalent to checking the functionality of the priority queue in \textit{GeoCert}. Standard ZKP tools are built for verifying mathematical computations (expressed as an arithmetic or Boolean circuit) and do not have built-in support for data structures, such as priority queues. We overcome this challenge by leveraging the following key idea -- correctness of the priority queue can be verified by checking that the next traversed facet is indeed the one with the shortest distance. 

\noindent\textbf{Additional optimizations.} We identify certain computations in the above algorithms that can performed offline. Specifically, in \textit{VerifyNeighbor} the proof of correctness for polytope construction using representative points can be generated offline. Further, in \textit{VerifyBoundary} proof for computation of the linear function $f_R$ can also be generated offline. This leads to a significant reduction in the cost of the online proof generation (see Sec. \ref{subsec:perf}). 

End-to-end verification mechanism is presented in Alg. \ref{alg:name}.  In the final step, the prover has to generate an additional proof that the reported certificate of fairness corresponds to the smallest value among all the lower bounds obtained for each element of $domain(\S)$ (\textit{VerifyMin}
, Alg. \ref{alg:verify:min}). Additionally, the prover also needs to prove integrity of the inference, i.e., $y=\m(\x)$. For this, after computing the linear function $f_{R_{x^*}}(\x)$ using the committed weights $\textsf{com}_\W$ (where $R_{\x}$ is the activation code for $\x$) we need to additionally prove that the label corresponds to the logit with the highest score (Alg. \ref{alg:verify:inference}, \textit{VerifyInference}).

Next, we present our security guarantee.
\begin{theorem}(Informal)
Given a model $\m$ and a data point $\x$, \name~provides the prediction $\m(\x)$ and a lower bound $\epsilon_{LB}$ on the individual fairness parameter for $\x$ without leaking anything, except the number of total facets traversed,  about the model $\m$.\label{thm:name}
\end{theorem}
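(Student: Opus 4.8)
The statement packages two independent guarantees, and I would establish each separately. The \emph{correctness} guarantee --- that an honest run returns the true label $\m(\x)$ together with a number $\epsilon_{LB}$ that is genuinely a lower bound on the individual fairness parameter of $\x$ --- reduces to completeness and soundness of the ZKP building blocks, the binding property of the commitment, and the lower-bound theorem already proved for Alg.~\ref{alg:IF}. The \emph{privacy} guarantee --- that the verifier's view can be reconstructed from $\x$, the outputs $y,\epsilon_{LB}$, and the single leaked integer $T$ (the number of facets traversed) --- reduces to the hiding property of the commitment and the zero-knowledge property of each sub-proof, assembled by a hybrid argument.

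\noindent\textbf{Correctness.} Completeness of \VPoly, \VDist, \VNeighbor, \VBoundary, \VPQ, \VMin and \VInf implies that an honest prover's transcript is always accepted, which, together with the fact that these sub-protocols compute by construction exactly the quantities of Alg.~\ref{alg:IF}, gives the ``honest'' direction. For the converse I would argue as follows: binding of the commitment pins $\CW$ to a unique weight vector $\W$, hence to a unique $\m$; soundness of \VPoly then forces the starting polytope to be the genuine activation region of $\x$ under $\m$; soundness of \VNeighbor and \VBoundary (using Fact~\ref{fact:ReLU}, that adjacent polytopes differ in one ReLU bit) forces every traversal step and the termination test to coincide with \G executed on $\m$; soundness of \VDist forces each reported value to be the exact projection distance, which is a valid lower bound on the true facet distance by the elementary inequality $\dproj(\x,\F)\le\dlp(\x,\F)$ of Fig.~\ref{fig:projection}; soundness of \VPQ forces the pop order, hence the returned boundary facet, to be correct; and soundness of \VMin and \VInf force $\epsilon_{LB}=\min_{s}\epsilon_s$ and $y=\m(\x)$. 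Composing these with the lower-bound theorem for Alg.~\ref{alg:IF} yields $\epsilon_{LB}\le\epsilon^*$ for the committed model. Model uniformity is then immediate: the single public $\CW$ is reused verbatim across all users.

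\noindent\textbf{Privacy.} I would exhibit a probabilistic polynomial-time simulator $\mathsf{Sim}$ that, on input $\x$, $y$, $\epsilon_{LB}$, and $T$ only, outputs a transcript computationally indistinguishable from the real one. $\mathsf{Sim}$ first commits to an all-zero dummy model, obtaining a fake $\widetilde{\CW}$ whose distribution matches a real commitment by hiding. It then writes down, using only public data, the statement of every sub-proof the honest prover would issue: one \VPoly instance, and --- aggregated over the $|domain(\S)|$ sub-problems --- a number of \VDist, \VNeighbor, \VBoundary and \VPQ instances bounded by a fixed polynomial in $T$ (and in the model size), plus the final \VMin and \VInf instances. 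The crucial observation is that each such statement mentions only $\x$, $\widetilde{\CW}$, $\epsilon_{LB}$, $y$, and Boolean outcome flags (``this facet is / is not a boundary'', ``this polytope is the unexplored neighbour'', ``this distance is the smallest in the queue''); all genuinely model-dependent intermediate data --- ReLU activation codes, representative points, per-facet distances --- never appears in the clear, it lives inside the proofs. $\mathsf{Sim}$ therefore invokes the zero-knowledge simulator of the underlying proof system on each of these statements, and likewise for the proofs that are delegated to the offline phase (polytope reconstruction in \VNeighbor, computation of $f_R$ in \VBoundary). Indistinguishability follows by a standard hybrid argument: beginning from the real transcript, replace the real proofs one at a time by simulated ones --- each hop indistinguishable by zero knowledge of the relevant sub-protocol --- and finally replace the honest commitment by $\widetilde{\CW}$, indistinguishable by hiding. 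The number of hybrids is polynomial in $T$ and the model size, so the accumulated advantage remains negligible.

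\noindent\textbf{Main obstacle.} The delicate point, and the one I expect to require the most care, is arguing that $T$ is genuinely all that leaks. \G's traversal is data-dependent, so \emph{which} facets and polytopes are visited --- and hence the precise list of statements proved --- is a function of the secret $\m$, yet $\mathsf{Sim}$ must reproduce the shape of the transcript without that secret. The resolution is that $\mathsf{Sim}$ needs only the \emph{count} $T$, not the identities: it fabricates the right number of proof instances, each on a statement that references only hidden commitments plus the public $\x,\epsilon_{LB},y$ and outcome bits, so the data-dependence of the traversal is fully absorbed inside the ZK proofs and surfaces in the transcript only through its length. Two secondary checks complete the argument: that the comparison-only design of \VPQ and \VMin reveals the final minimum $\epsilon_{LB}$ but none of the intermediate distances, and that the online/offline split introduces no timing channel depending on $\m$ beyond $T$.
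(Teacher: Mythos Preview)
Your proposal is correct and follows the same decomposition as the paper --- correctness via completeness/soundness of each sub-protocol (each reduced to commitment binding) together with the lower-bound theorem for Alg.~\ref{alg:IF}, and privacy via commitment hiding plus zero-knowledge of the backend proof system --- and is in fact considerably more detailed than the paper's own sketch, which simply asserts that zero-knowledge ``follows directly'' from the primitives without spelling out a simulator or hybrid argument. One small correction: the paper's formal leakage function is the \emph{vector} $\mathcal{L}(x)=\{n_1,\ldots,n_{|\mathcal{S}|}\}$ of traversal counts per sensitive-attribute value rather than a single aggregate $T$, and your simulator needs these per-branch counts to reproduce the transcript structure; this is a trivial adjustment to your argument.
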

\textit{Proof Sketch.} Proof of the above theorem follows directly from the properties of zero-knowledge proofs and theorems in App. \ref{app:proof}. The formal guarantee and detailed proof is presented in App. \ref{app:proof}.
\section{Evaluation}\label{sec:evaluation}

In this section we evaluate the performance of \name empirically. Specifically, we ask the following questions: \begin{compactenum}
\item Can our fairness certification mechanism distinguish between fair and unfair models?
\item Is \name practically feasible, in terms of time and communication costs?
\end{compactenum}

\textbf{Datasets.} We use three standard fairness benchmarks. \textit{Adult \cite{Adult}} is a dataset for income classification, where we select \textit{gender} (male/female) as the sensitive feature. \textit{Default Credit~\cite{credit}} is a dataset for predicting loan defaults, with \textit{gender} (male/female) as the chosen sensitive feature. Finally, \textit{German Credit~\cite{German}} is a loan application dataset, where  \textit{Foreign Worker} (yes/no) is used as the sensitive feature.

\textbf{Configuration.} We train fully-connected ReLU networks with stochastic gradient descent in PyTorch. Our networks have 2 hidden layers with different sizes including $(4,2)$, $(2,4)$ and $(8,2)$. All the dataset features are standardized ~\cite{Standarization}. \name is implemented using the Gnark~\cite{gnark-v0.9.0} zk-SNARK library in GoLang. We run all our code for \name without any multithreading or parallelism, on an Intel-i9 CPU chip with 28 cores.

\begin{figure*}[hbt!]
    \centering
    \begin{minipage}{0.33\linewidth}
        \centering
        \includegraphics[width=\linewidth]{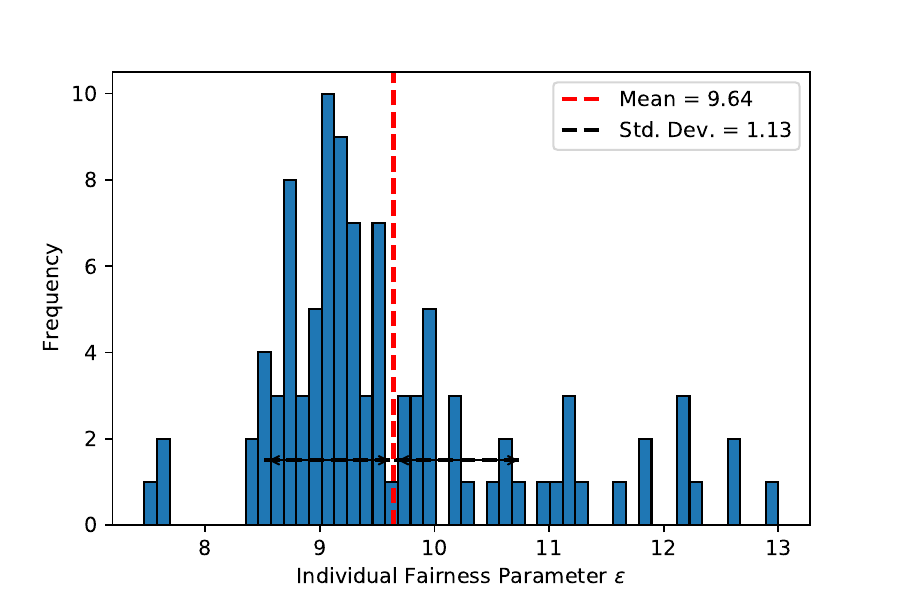}
        \caption*{\textit{Credit, Fair}}
        \label{fig:credit:fair}
    \end{minipage}\hfill
    \begin{minipage}{0.33\linewidth}
        \centering
        \includegraphics[width=\linewidth]{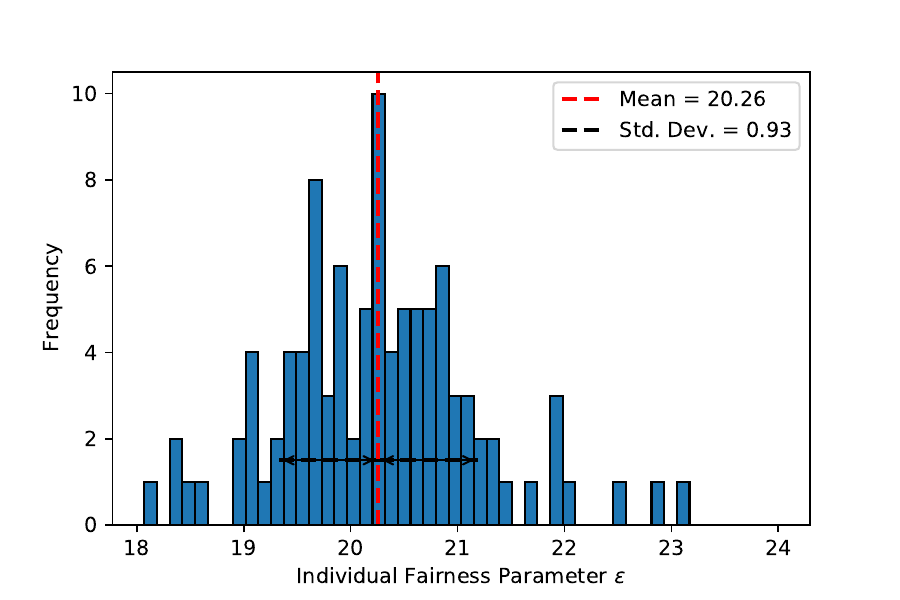}
        \caption*{\textit{Adult, Fair}}
        \label{fig:adult:fair}
    \end{minipage}\hfill
    \begin{minipage}{0.33\linewidth}
            \centering
            \includegraphics[width=\linewidth]{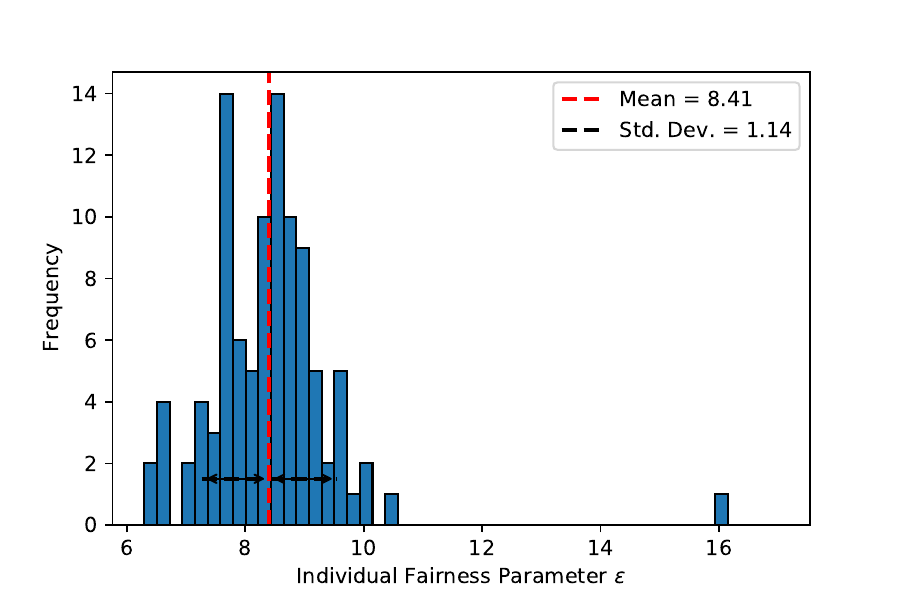}
            \caption*{\textit{German, Fair}}
            \label{fig:adult:fair}
        \end{minipage}\hfill\\
    
    \begin{minipage}{0.33\linewidth}
        \centering
        \includegraphics[width=\linewidth]{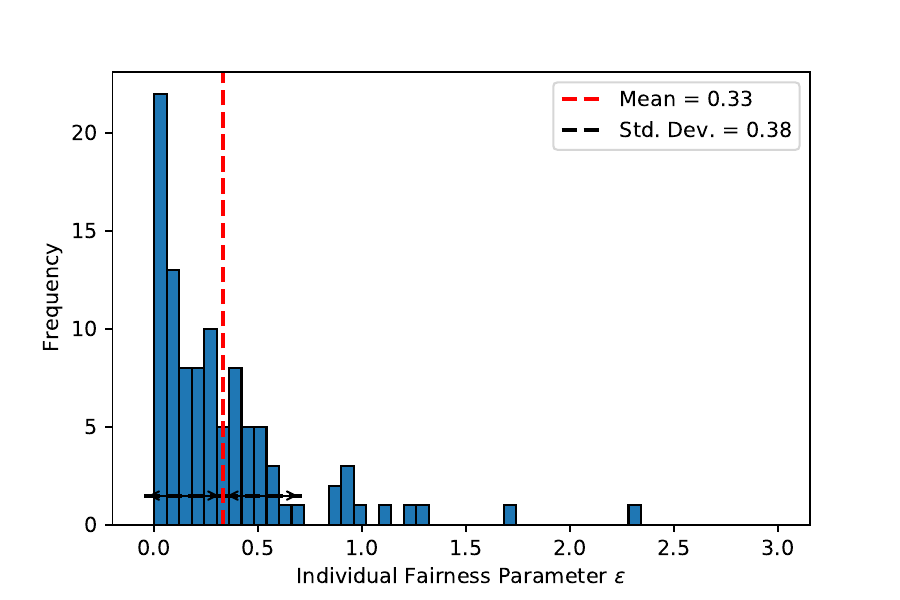}
        \caption*{\textit{Credit, Unfair}}
        \label{fig:credit:unfair}
    \end{minipage}\hfill
    \begin{minipage}{0.33\linewidth}
        \centering
        \includegraphics[width=\linewidth]{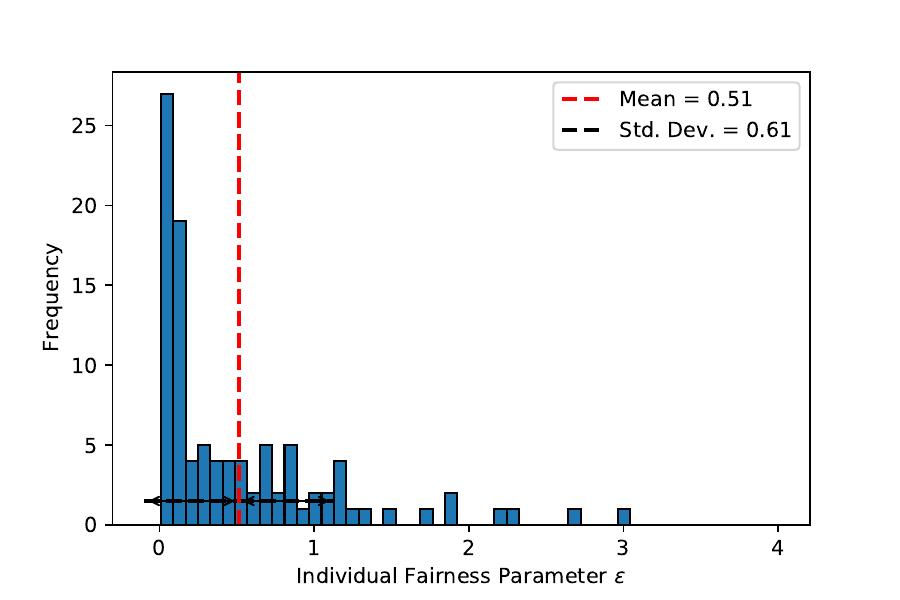}
        \caption*{\textit{Adult, Unfair}}
        \label{fig:adult:unfair}
    \end{minipage}\hfill
    \begin{minipage}{0.33\linewidth}
        \centering
        \includegraphics[width=\linewidth]{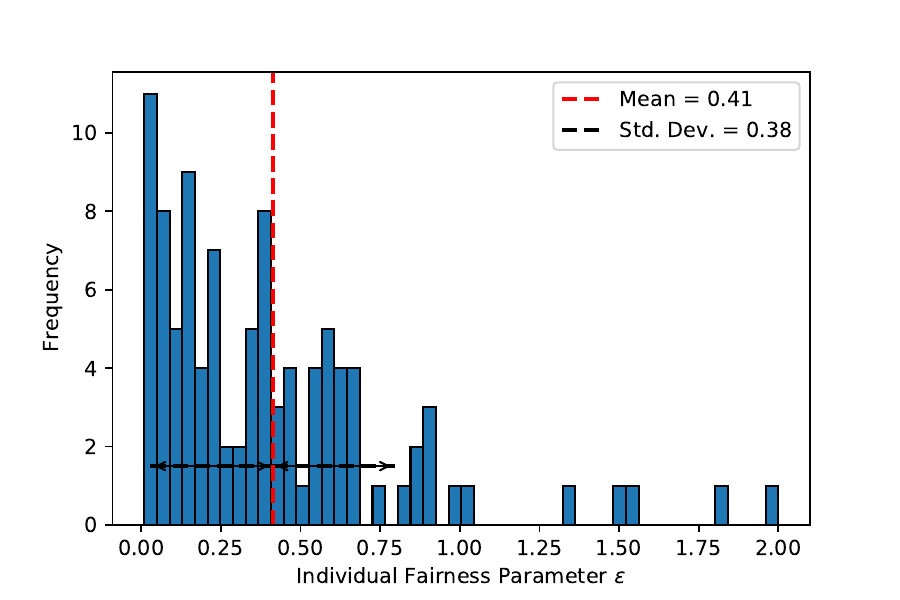}
        \caption*{\textit{German, Unfair}}
        \label{fig:adult:fair}
    \end{minipage}\hfill
    \caption{Histogram of fairness parameter $\epsilon$ for fair \& unfair models, model size = (4,2). $\epsilon$ values are higher than those for unfair models.}
    \label{fig:fair-unfair}
\end{figure*}

\paragraph{Model Fairness}

We first evaluate if our certification mechanism can distinguish between fair and unfair models. Prior work~\citep{islam2021can} has shown that overfitting leads to more unfair models while regularization encourages fairness. Thus, to obtain models with different fairness, we vary regularization by changing the weight decay parameter in PyTorch. Then we randomly sample $100$ test data points as input queries and find the fairness parameter $\epsilon$ for both types of models on these queries.

As demonstrated in Fig. ~\ref{fig:fair-unfair}, the unfair models have a lower $\epsilon$ than the corresponding fair models. This consistent difference in $\epsilon$ values across different model sizes and datasets shows that our certification mechanism can indeed distinguish between fair and unfair models. Results for other models are included in App. \ref{app:evaluation}. 

\paragraph{Performance of \name}\label{subsec:perf}

\begin{figure*}[hbt!]
    \centering
    \begin{minipage}{0.32\linewidth}
        \centering
        \includegraphics[width=\linewidth]{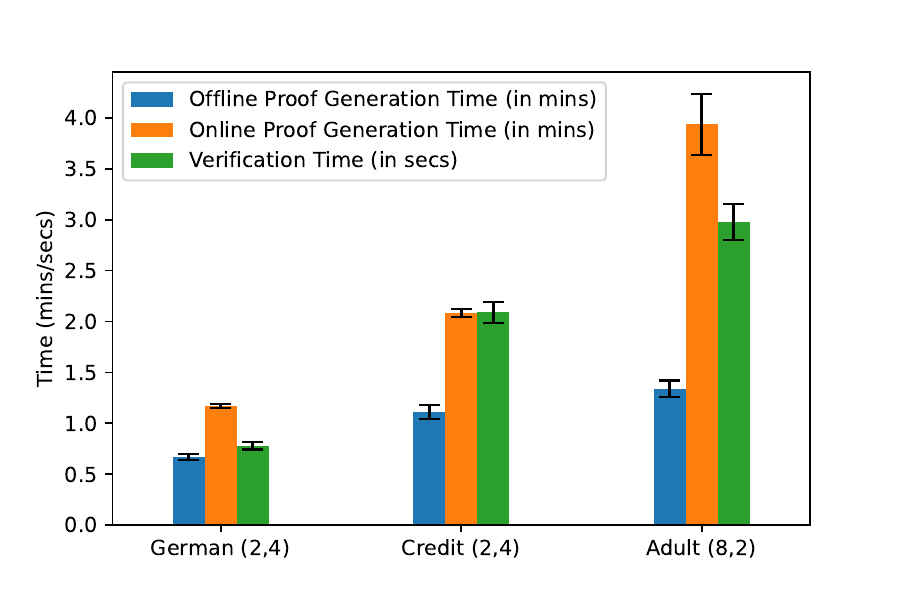}
        \caption*{(a)}\label{fig:timings}
    \end{minipage}\hfill
    \begin{minipage}{0.355\linewidth}
        \centering
        \includegraphics[width=\linewidth]{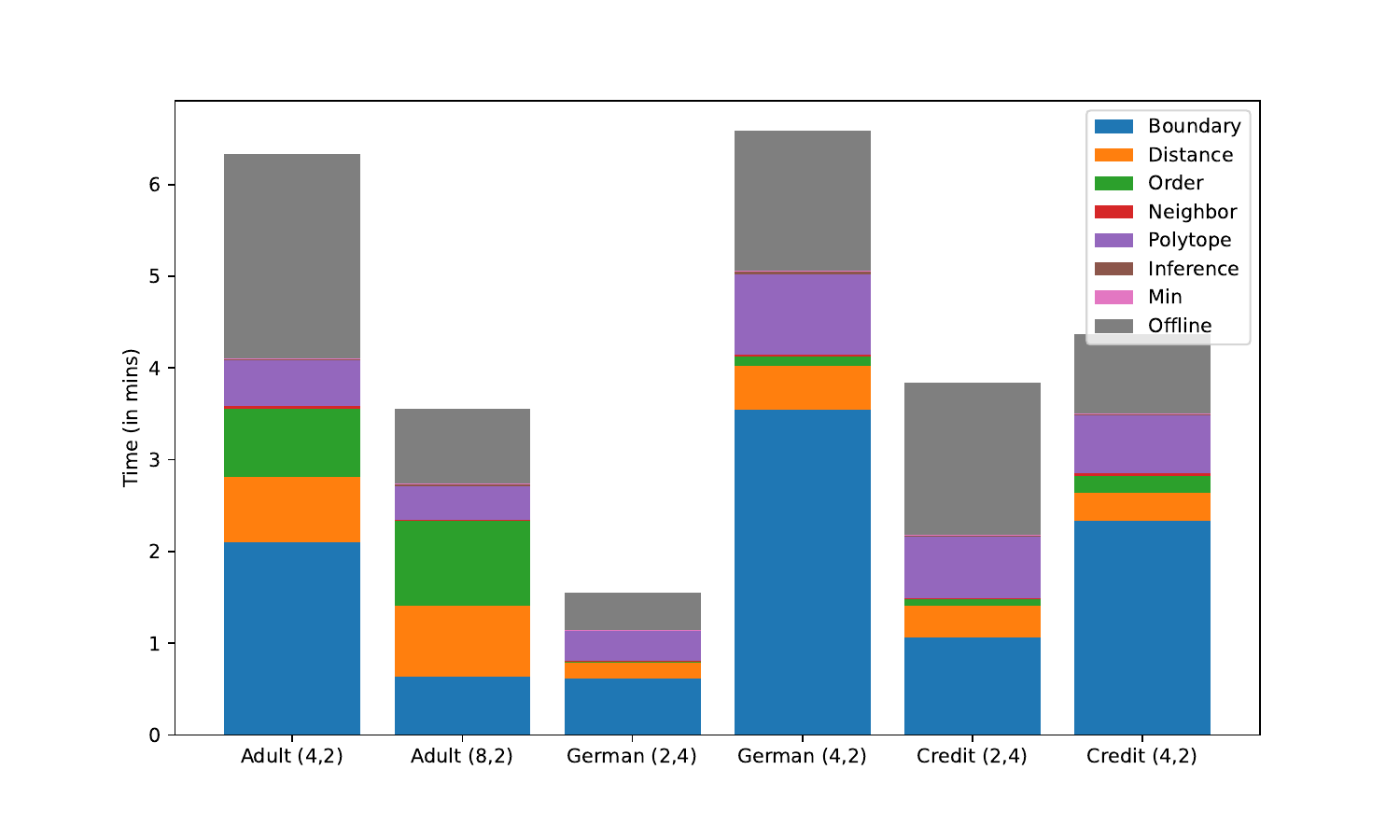}
        \caption*{(b)}\label{fig:breakdown}
    \end{minipage}\hfill
    \begin{minipage}{0.32\linewidth}
        \centering
        \includegraphics[width=\linewidth]{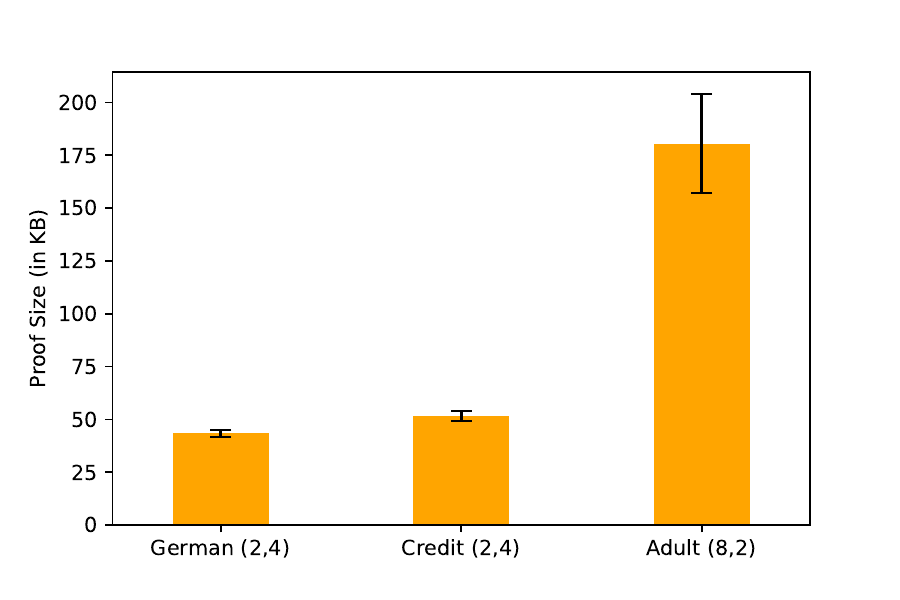}
        \caption*{(c)}
        \label{fig:psize}
    \end{minipage}\hfill
    \caption{(a) Proof Generation (in mins) and Verification times (in secs) for different models. Offline computations are done in the initial setup phase while Online computations are done for every new query. Verification is only done online, for every query. (b) Breakdown of the proof generation time (in mins) for the data point with the median time. (c) Total Proof Size (in KB) for various models. This includes the proof generated during both online and offline phases.}
    \label{fig:pvtime_psize_brkdwn}
\end{figure*}

Since computation is a known bottleneck in ZKPs, we next investigate the overhead of \name in terms of time and communication costs. All reported numbers are averages over a $100$ random test points. 

Fig.~\ref{fig:pvtime_psize_brkdwn} (a) shows the proof generation costs for various models. Note that the proof generation time varies with the models, due to its dependence on the number of traversed facets\footnote{As mentioned in Thm. \ref{thm:name}, this information is leaked by \name.} which in turn depends on the model and query. On average, the adult model has a larger number of traversed facets than others as shown in Table \ref{tab:proof} in App. \ref{app:evaluation}, leading to a higher proof generation time. We also observe that performing some computations in an offline phase results in significant reductions in the online time cost, the largest being 1.74$\times$. See Table \ref{tab:proof} and Fig.\ref{fig:proof} in App.\ref{app:evaluation} for details.


We also breakdown the overall proof generation time in terms of different sub-functionalities. We report this breakdown for the query with the median proof generation cost, in Fig. \ref{fig:pvtime_psize_brkdwn} (b). As shown in the figure, \textit{VerifyBoundary} is the costliest sub-function for all the models; this is so since it is executed in every iteration (every time a facet is popped) and involves costly non-linear comparison operations (see Alg. \ref{alg:verify:boundary}). Other functionalities that are also executed multiple times based on number of traversed facets but are not as expensive include \textit{VerifyNeighbor}, \textit{VerifyDistance} and \textit{VerifyOrder} (see Alg. \ref{alg:verify:neighbor}, \ref{alg:verify:distance}, \ref{alg:verify:PQ}). The least time is taken by \textit{VerifyMin} which basically finds the minimum in a list; this is so since the function is straight-forward and is ran only once per query (see Alg. \ref{alg:verify:min}).


We also report the average verification times - time for checking the validity of the proof by the verifier - in Fig.~\ref{fig:pvtime_psize_brkdwn} (a). Note that the verification costs are orders of magnitude lower (in seconds) than the proof generation costs (in minutes) for all models; as is standard in ZKPs. Fig.\ref{fig:pvtime_psize_brkdwn} (c) reports the communication overheads, i.e. size of the generated proofs. The proof size is very small, only certain kilobytes. Low verification time and communication cost is advantageous since it implies quick real-time verification which does not require complex machinery at the customer end. For detailed results on all models, refer to Fig. \ref{fig:verification} and Fig. \ref{fig:communication} in App. \ref{app:evaluation}.


\textbf{Discussion on Scalability} For very large models, the number of traversed facets can be huge and running \name on them may not be practically feasible anymore. In such cases, one solution can be just verifying the fairness of the final layers. We leave this exploration to future work.
\section{Related Work}\label{sec:related work main}
\textbf{Verifiable fairness with cryptography.} Most of the prior work on verifying fairness \textit{while maintaining model confidentiality}~\cite{pentyala2022privfair,BlindJustice,Toreini2023VerifiableFP,Segal21,Park22} has approached the problem in the third-party auditor setting. The closest to ours is a recent work by \cite{confidant}, which proposed a fairness-aware training pipeline for \textit{decision trees} that allows the model owner to cryptographically prove that \textit{the learning
algorithm} used to train the model was fair by design. In contrast, we focus on neural networks and issue a fairness certificate by simply inspecting the model weights \textit{post-training}. Our system \name and certification mechanism is completely agnostic of the training pipeline.

Another line of work has been using cryptographic primitives to verify other properties (rather than fairness) of an ML model while maintaining model confidentiality -- \cite{ZKDT, Liu2021zkCNNZK} focus on accuracy and inference, while ~\cite{VeriML, PoT, sun2023zkdl} focus on the training process.

A separate line of work uses formal verification approaches for verifying the fairness of a model ~\cite{FairSquare,Bastani2019,Urban2020,Ghosh2020JusticiaAS,Fairify}.
However, these works focus on certification in the plain text, i.e., they do not preserve model confidentiality and do not involve any cryptography.

\textbf{Fairness Certification Mechanisms.} Prior work on certification mechanisms for fairness can be broadly classified into three categories. The first line of work frames the certification problem as an optimization program \cite{john2020verifying,Benussi2022IndividualFG,kang2022certifying}. The second line of research has leveraged the connection between robustness and fairness, and proposed fairness-aware training mechanisms akin to adversarial training \cite{Ruoss2020,Yurochkin2020Training,khedr2022certifair,smoothing,doherty2023individual}. In contrast to both, we focus on \textit{local} IF specifically for neural networks and use an iterative algorithm rather than solving a complex optimization problem and are completely agnostic of the training pipeline.

The final line of work is based on black-box query access learning theoretic approaches~\cite{yadav2022learningtheoretic,yan2022active,AVOIR}. Contrary to our work, these approaches however are replete with problems arising from the usage of a reference dataset~\cite{fukuchi2019faking, confidant}, the need for a trust third-party, and lack of guarantees of model uniformity.

See App. Sec. \ref{sec:related work appendix} for a further discussion on related works.
\section{Conclusion}\label{sec:conclusion}
In this paper we proposed \name -- a protocol enabling model owners to issue publicly verifiable certificates while ensuring model uniformity and confidentiality. Our experiments demonstrate the practical feasibility of  \name~ for small neural networks and tabular data. While our work is grounded in fairness and societal applications, we believe that ZKPs are a general-purpose tool and can be a promising solution for overcoming problems arising out of the need for model confidentiality in other areas/applications as well. We call for further research in this direction.

\paragraph{Acknowledgements} KC and CY acknowledge the following grants for research support - CNS 1804829, NSF under 2241100, NSF under 2217058, ARO-MURI W911NF2110317 and ONR under N00014-20-1-2334. DB acknowledges support from NSF, DARPA, the Simons Foundation, UBRI, and NTT Research. Opinions, findings, and conclusions or recommendations expressed in this material are those of the authors and do not necessarily reflect the views of DARPA. CY and ARC would like to thank Matt Jordan for his help with understanding GeoCert and its implementation.

\bibliographystyle{iclr2024_conference.bst}
\bibliography{references.bib}

\begin{thebibliography}{69}
\providecommand{\natexlab}[1]{#1}
\providecommand{\url}[1]{\texttt{#1}}
\expandafter\ifx\csname urlstyle\endcsname\relax
  \providecommand{\doi}[1]{doi: #1}\else
  \providecommand{\doi}{doi: \begingroup \urlstyle{rm}\Url}\fi

\bibitem[Sta()]{Standarization}
Standarization.
\newblock \url{https://scikit-learn.org/stable/modules/generated/sklearn.preprocessing.StandardScaler.html}.

\bibitem[VI2(2023)]{VI2}
Zator: Verified inference of a 512-layer neural network using recursive snarks.
\newblock \url{https://github.com/lyronctk/zator/tree/main}, 2023.

\bibitem[Albarghouthi et~al.(2017)Albarghouthi, D'Antoni, Drews, and Nori]{FairSquare}
Aws Albarghouthi, Loris D'Antoni, Samuel Drews, and Aditya~V. Nori.
\newblock Fairsquare: Probabilistic verification of program fairness.
\newblock \emph{Proc. ACM Program. Lang.}, 1\penalty0 (OOPSLA), oct 2017.
\newblock \doi{10.1145/3133904}.
\newblock URL \url{https://doi.org/10.1145/3133904}.

\bibitem[Angwin et~al.(2016)Angwin, Larson, Mattu, and Kirchner]{unfair1}
Julia Angwin, Jeff Larson, Surya Mattu, and Lauren Kirchner.
\newblock Machine bias.
\newblock \url{https://www.propublica.org/article/machine-bias-risk-assessments-in-criminal-sentencing}, 2016.

\bibitem[Barocas et~al.(2019)Barocas, Hardt, and Narayanan]{barocas-hardt-narayanan}
Solon Barocas, Moritz Hardt, and Arvind Narayanan.
\newblock \emph{Fairness and Machine Learning: Limitations and Opportunities}.
\newblock fairmlbook.org, 2019.
\newblock \url{http://www.fairmlbook.org}.

\bibitem[Bastani et~al.(2019)Bastani, Zhang, and Solar-Lezama]{Bastani2019}
Osbert Bastani, Xin Zhang, and Armando Solar-Lezama.
\newblock Probabilistic verification of fairness properties via concentration.
\newblock \emph{Proc. ACM Program. Lang.}, 3\penalty0 (OOPSLA), oct 2019.
\newblock \doi{10.1145/3360544}.
\newblock URL \url{https://doi.org/10.1145/3360544}.

\bibitem[Becker \& Kohavi(1996)Becker and Kohavi]{Adult}
Barry Becker and Ronny Kohavi.
\newblock {Adult}.
\newblock UCI Machine Learning Repository, 1996.
\newblock {DOI}: https://doi.org/10.24432/C5XW20.

\bibitem[Benussi et~al.(2022)Benussi, Patan{\'e}, Wicker, Laurenti, of~Oxford, and Delft]{Benussi2022IndividualFG}
Elias Benussi, Andrea Patan{\'e}, Matthew Wicker, Luca Laurenti, Marta Kwiatkowska~University of~Oxford, and Tu~Delft.
\newblock Individual fairness guarantees for neural networks.
\newblock In \emph{International Joint Conference on Artificial Intelligence}, 2022.
\newblock URL \url{https://api.semanticscholar.org/CorpusID:248722046}.

\bibitem[Bertrand \& Mullainathan(2004)Bertrand and Mullainathan]{IF}
Marianne Bertrand and Sendhil Mullainathan.
\newblock Are emily and greg more employable than lakisha and jamal? a field experiment on labor market discrimination.
\newblock \emph{American Economic Review}, 94\penalty0 (4):\penalty0 991--1013, September 2004.
\newblock \doi{10.1257/0002828042002561}.
\newblock URL \url{https://www.aeaweb.org/articles?id=10.1257/0002828042002561}.

\bibitem[Biswas \& Rajan(2023)Biswas and Rajan]{Fairify}
Sumon Biswas and Hridesh Rajan.
\newblock Fairify: Fairness verification of neural networks.
\newblock In \emph{2023 IEEE/ACM 45th International Conference on Software Engineering (ICSE)}, pp.\  1546--1558, 2023.
\newblock \doi{10.1109/ICSE48619.2023.00134}.

\bibitem[Boemer et~al.(2020)Boemer, Cammarota, Demmler, Schneider, and Yalame]{boemer2020mp2ml}
Fabian Boemer, Rosario Cammarota, Daniel Demmler, Thomas Schneider, and Hossein Yalame.
\newblock Mp2ml: A mixed-protocol machine learning framework for private inference.
\newblock In \emph{Proceedings of the 15th international conference on availability, reliability and security}, pp.\  1--10, 2020.

\bibitem[Botrel et~al.(2023)Botrel, Piellard, Housni, Kubjas, and Tabaie]{gnark-v0.9.0}
Gautam Botrel, Thomas Piellard, Youssef~El Housni, Ivo Kubjas, and Arya Tabaie.
\newblock Consensys/gnark: v0.9.0, February 2023.
\newblock URL \url{https://doi.org/10.5281/zenodo.5819104}.

\bibitem[Brennan et~al.(2009)Brennan, Dieterich, and Ehret]{crimeprediction}
Tim Brennan, William Dieterich, and Beate Ehret.
\newblock Evaluating the predictive validity of the compas risk and needs assessment system.
\newblock \emph{Criminal Justice and Behavior}, 36\penalty0 (1):\penalty0 21--40, 2009.
\newblock \doi{10.1177/0093854808326545}.
\newblock URL \url{https://doi.org/10.1177/0093854808326545}.

\bibitem[Casper et~al.(2024)Casper, Ezell, Siegmann, Kolt, Curtis, Bucknall, Haupt, Wei, Scheurer, Hobbhahn, et~al.]{casper2024black}
Stephen Casper, Carson Ezell, Charlotte Siegmann, Noam Kolt, Taylor~Lynn Curtis, Benjamin Bucknall, Andreas Haupt, Kevin Wei, J{\'e}r{\'e}my Scheurer, Marius Hobbhahn, et~al.
\newblock Black-box access is insufficient for rigorous ai audits.
\newblock In \emph{The 2024 ACM Conference on Fairness, Accountability, and Transparency}, pp.\  2254--2272, 2024.

\bibitem[Croce et~al.(2019)Croce, Andriushchenko, and Hein]{croce2019provable}
Francesco Croce, Maksym Andriushchenko, and Matthias Hein.
\newblock Provable robustness of relu networks via maximization of linear regions.
\newblock In \emph{the 22nd International Conference on Artificial Intelligence and Statistics}, pp.\  2057--2066. PMLR, 2019.

\bibitem[Dastin(October 2018)]{resume}
J~Dastin.
\newblock Amazon scraps secret ai recruiting tool that showed bias against women, October 2018.

\bibitem[Datta et~al.(2014)Datta, Tschantz, and Datta]{adprediction}
Amit Datta, Michael~Carl Tschantz, and Anupam Datta.
\newblock Automated experiments on ad privacy settings: A tale of opacity, choice, and discrimination.
\newblock \emph{ArXiv}, abs/1408.6491, 2014.
\newblock URL \url{https://api.semanticscholar.org/CorpusID:6817607}.

\bibitem[Doherty et~al.(2023)Doherty, Wicker, Laurenti, and Patane]{doherty2023individual}
Alice Doherty, Matthew Wicker, Luca Laurenti, and Andrea Patane.
\newblock Individual fairness in bayesian neural networks, 2023.

\bibitem[Dwork \& Minow(2022)Dwork and Minow]{dwork2022distrust}
Cynthia Dwork and Martha Minow.
\newblock Distrust of artificial intelligence: Sources \& responses from computer science \& law.
\newblock \emph{Daedalus}, 151\penalty0 (2):\penalty0 309--321, 2022.

\bibitem[Dwork et~al.(2012)Dwork, Hardt, Pitassi, Reingold, and Zemel]{Dwork12}
Cynthia Dwork, Moritz Hardt, Toniann Pitassi, Omer Reingold, and Richard Zemel.
\newblock Fairness through awareness.
\newblock In \emph{Proceedings of the 3rd Innovations in Theoretical Computer Science Conference}, ITCS '12, pp.\  214–226, New York, NY, USA, 2012. Association for Computing Machinery.
\newblock ISBN 9781450311151.
\newblock \doi{10.1145/2090236.2090255}.
\newblock URL \url{https://doi.org/10.1145/2090236.2090255}.

\bibitem[Feng et~al.(2021)Feng, Qin, Zhang, Ding, and Chu]{Zen}
Boyuan Feng, Lianke Qin, Zhenfei Zhang, Yufei Ding, and Shumo Chu.
\newblock Zen: Efficient zero-knowledge proofs for neural networks.
\newblock \emph{IACR Cryptol. ePrint Arch.}, 2021:\penalty0 87, 2021.
\newblock URL \url{https://api.semanticscholar.org/CorpusID:231731893}.

\bibitem[Fukuchi et~al.(2019)Fukuchi, Hara, and Maehara]{fukuchi2019faking}
Kazuto Fukuchi, Satoshi Hara, and Takanori Maehara.
\newblock Faking fairness via stealthily biased sampling, 2019.

\bibitem[Garg et~al.(2023)Garg, Goel, Jha, Mahloujifar, Mahmoody, Policharla, and Wang]{PoT}
Sanjam Garg, Aarushi Goel, Somesh Jha, Saeed Mahloujifar, Mohammad Mahmoody, Guru-Vamsi Policharla, and Mingyuan Wang.
\newblock Experimenting with zero-knowledge proofs of training.
\newblock Cryptology ePrint Archive, Paper 2023/1345, 2023.
\newblock URL \url{https://eprint.iacr.org/2023/1345}.
\newblock \url{https://eprint.iacr.org/2023/1345}.

\bibitem[Ghosh et~al.(2020)Ghosh, Basu, and Meel]{Ghosh2020JusticiaAS}
Bishwamittra Ghosh, D.~Basu, and Kuldeep~S. Meel.
\newblock Justicia: A stochastic sat approach to formally verify fairness.
\newblock In \emph{AAAI Conference on Artificial Intelligence}, 2020.
\newblock URL \url{https://api.semanticscholar.org/CorpusID:221655566}.

\bibitem[Goldreich et~al.(1991)Goldreich, Micali, and Wigderson]{GMW}
Oded Goldreich, Silvio Micali, and Avi Wigderson.
\newblock Proofs that yield nothing but their validity or all languages in np have zero-knowledge proof systems.
\newblock \emph{J. ACM}, 38\penalty0 (3):\penalty0 690–728, jul 1991.
\newblock ISSN 0004-5411.
\newblock \doi{10.1145/116825.116852}.
\newblock URL \url{https://doi.org/10.1145/116825.116852}.

\bibitem[Goldwasser et~al.(1985)Goldwasser, Micali, and Rackoff]{GMR}
S~Goldwasser, S~Micali, and C~Rackoff.
\newblock The knowledge complexity of interactive proof-systems.
\newblock In \emph{Proceedings of the Seventeenth Annual ACM Symposium on Theory of Computing}, STOC '85, pp.\  291–304, New York, NY, USA, 1985. Association for Computing Machinery.
\newblock ISBN 0897911512.
\newblock \doi{10.1145/22145.22178}.
\newblock URL \url{https://doi.org/10.1145/22145.22178}.

\bibitem[Gupta et~al.(2023)Gupta, Jawalkar, Mukherjee, Chandran, Gupta, Panwar, and Sharma]{gupta2023sigma}
Kanav Gupta, Neha Jawalkar, Ananta Mukherjee, Nishanth Chandran, Divya Gupta, Ashish Panwar, and Rahul Sharma.
\newblock Sigma: secure gpt inference with function secret sharing.
\newblock \emph{Cryptology ePrint Archive}, 2023.

\bibitem[Hamman et~al.(2023)Hamman, Chen, and Dutta]{hamman2023can}
Faisal Hamman, Jiahao Chen, and Sanghamitra Dutta.
\newblock Can querying for bias leak protected attributes? achieving privacy with smooth sensitivity.
\newblock In \emph{Proceedings of the 2023 ACM Conference on Fairness, Accountability, and Transparency}, pp.\  1358--1368, 2023.

\bibitem[Hanin \& Rolnick(2019)Hanin and Rolnick]{hanin2019deep}
Boris Hanin and David Rolnick.
\newblock Deep relu networks have surprisingly few activation patterns.
\newblock \emph{Advances in neural information processing systems}, 32, 2019.

\bibitem[Hofmann(1994)]{German}
Hans Hofmann.
\newblock {Statlog (German Credit Data)}.
\newblock UCI Machine Learning Repository, 1994.
\newblock {DOI}: https://doi.org/10.24432/C5NC77.

\bibitem[Islam et~al.(2021)Islam, Pan, and Foulds]{islam2021can}
Rashidul Islam, Shimei Pan, and James~R Foulds.
\newblock Can we obtain fairness for free?
\newblock In \emph{Proceedings of the 2021 AAAI/ACM Conference on AI, Ethics, and Society}, pp.\  586--596, 2021.

\bibitem[John et~al.(2020)John, Vijaykeerthy, and Saha]{john2020verifying}
Philips~George John, Deepak Vijaykeerthy, and Diptikalyan Saha.
\newblock Verifying individual fairness in machine learning models, 2020.

\bibitem[Jordan et~al.(2019)Jordan, Lewis, and Dimakis]{Geocert}
Matt Jordan, Justin Lewis, and Alexandros~G. Dimakis.
\newblock \emph{Provable Certificates for Adversarial Examples: Fitting a Ball in the Union of Polytopes}.
\newblock Curran Associates Inc., Red Hook, NY, USA, 2019.

\bibitem[Juvekar et~al.(2018)Juvekar, Vaikuntanathan, and Chandrakasan]{juvekar2018gazelle}
Chiraag Juvekar, Vinod Vaikuntanathan, and Anantha Chandrakasan.
\newblock $\{$GAZELLE$\}$: A low latency framework for secure neural network inference.
\newblock In \emph{27th USENIX Security Symposium (USENIX Security 18)}, pp.\  1651--1669, 2018.

\bibitem[Kang et~al.(2022{\natexlab{a}})Kang, Hashimoto, Stoica, and Sun]{kang2022scaling}
Daniel Kang, Tatsunori Hashimoto, Ion Stoica, and Yi~Sun.
\newblock Scaling up trustless dnn inference with zero-knowledge proofs, 2022{\natexlab{a}}.

\bibitem[Kang et~al.(2022{\natexlab{b}})Kang, Li, Weber, Liu, Zhang, and Li]{kang2022certifying}
Mintong Kang, Linyi Li, Maurice Weber, Yang Liu, Ce~Zhang, and Bo~Li.
\newblock Certifying some distributional fairness with subpopulation decomposition, 2022{\natexlab{b}}.

\bibitem[Khandani et~al.(2010)Khandani, Kim, and Lo]{creditprediction}
Amir~E. Khandani, Adlar~J. Kim, and Andrew~W. Lo.
\newblock Consumer credit-risk models via machine-learning algorithms.
\newblock \emph{Journal of Banking \& Finance}, 34\penalty0 (11):\penalty0 2767--2787, 2010.
\newblock ISSN 0378-4266.
\newblock \doi{https://doi.org/10.1016/j.jbankfin.2010.06.001}.
\newblock URL \url{https://www.sciencedirect.com/science/article/pii/S0378426610002372}.

\bibitem[Khedr \& Shoukry(2022)Khedr and Shoukry]{khedr2022certifair}
Haitham Khedr and Yasser Shoukry.
\newblock Certifair: A framework for certified global fairness of neural networks, 2022.

\bibitem[Kilbertus et~al.(2018)Kilbertus, Gascon, Kusner, Veale, Gummadi, and Weller]{BlindJustice}
Niki Kilbertus, Adria Gascon, Matt Kusner, Michael Veale, Krishna Gummadi, and Adrian Weller.
\newblock Blind justice: Fairness with encrypted sensitive attributes.
\newblock In Jennifer Dy and Andreas Krause (eds.), \emph{Proceedings of the 35th International Conference on Machine Learning}, volume~80 of \emph{Proceedings of Machine Learning Research}, pp.\  2630--2639. PMLR, 10--15 Jul 2018.
\newblock URL \url{https://proceedings.mlr.press/v80/kilbertus18a.html}.

\bibitem[Lee et~al.(2020)Lee, Ko, Kim, and Oh]{vCNN}
Seunghwan Lee, Hankyung Ko, Jihye Kim, and Hyunok Oh.
\newblock vcnn: Verifiable convolutional neural network.
\newblock \emph{IACR Cryptol. ePrint Arch.}, 2020:\penalty0 584, 2020.
\newblock URL \url{https://api.semanticscholar.org/CorpusID:218895602}.

\bibitem[Liu et~al.(2017)Liu, Juuti, Lu, and Asokan]{liu2017oblivious}
Jian Liu, Mika Juuti, Yao Lu, and Nadarajah Asokan.
\newblock Oblivious neural network predictions via minionn transformations.
\newblock In \emph{Proceedings of the 2017 ACM SIGSAC conference on computer and communications security}, pp.\  619--631, 2017.

\bibitem[Liu et~al.(2021)Liu, Xie, and Zhang]{Liu2021zkCNNZK}
Tianyi Liu, Xiang Xie, and Yupeng Zhang.
\newblock zkcnn: Zero knowledge proofs for convolutional neural network predictions and accuracy.
\newblock \emph{Proceedings of the 2021 ACM SIGSAC Conference on Computer and Communications Security}, 2021.
\newblock URL \url{https://api.semanticscholar.org/CorpusID:235349006}.

\bibitem[Maneriker et~al.(2023)Maneriker, Burley, and Parthasarathy]{AVOIR}
Pranav Maneriker, Codi Burley, and Srinivasan Parthasarathy.
\newblock Online fairness auditing through iterative refinement.
\newblock In \emph{Proceedings of the 29th ACM SIGKDD Conference on Knowledge Discovery and Data Mining}, KDD '23, pp.\  1665–1676, New York, NY, USA, 2023. Association for Computing Machinery.
\newblock ISBN 9798400701030.
\newblock \doi{10.1145/3580305.3599454}.
\newblock URL \url{https://doi.org/10.1145/3580305.3599454}.

\bibitem[Mehrabi et~al.(2021)Mehrabi, Morstatter, Saxena, Lerman, and Galstyan]{survey}
Ninareh Mehrabi, Fred Morstatter, Nripsuta Saxena, Kristina Lerman, and Aram Galstyan.
\newblock A survey on bias and fairness in machine learning.
\newblock \emph{ACM Comput. Surv.}, 54\penalty0 (6), jul 2021.
\newblock ISSN 0360-0300.
\newblock \doi{10.1145/3457607}.
\newblock URL \url{https://doi.org/10.1145/3457607}.

\bibitem[Mohassel \& Rindal(2018)Mohassel and Rindal]{mohassel2018aby3}
Payman Mohassel and Peter Rindal.
\newblock Aby3: A mixed protocol framework for machine learning.
\newblock In \emph{Proceedings of the 2018 ACM SIGSAC conference on computer and communications security}, pp.\  35--52, 2018.

\bibitem[Mohassel \& Zhang(2017)Mohassel and Zhang]{mohassel2017secureml}
Payman Mohassel and Yupeng Zhang.
\newblock Secureml: A system for scalable privacy-preserving machine learning.
\newblock In \emph{2017 IEEE symposium on security and privacy (SP)}, pp.\  19--38. IEEE, 2017.

\bibitem[Park et~al.(2022)Park, Kim, and Lim]{Park22}
Saerom Park, Seongmin Kim, and Yeon-sup Lim.
\newblock Fairness audit of machine learning models with confidential computing.
\newblock In \emph{Proceedings of the ACM Web Conference 2022}, WWW '22, pp.\  3488–3499, New York, NY, USA, 2022. Association for Computing Machinery.
\newblock ISBN 9781450390965.
\newblock \doi{10.1145/3485447.3512244}.
\newblock URL \url{https://doi.org/10.1145/3485447.3512244}.

\bibitem[Pentyala et~al.(2022)Pentyala, Melanson, Cock, and Farnadi]{pentyala2022privfair}
Sikha Pentyala, David Melanson, Martine~De Cock, and Golnoosh Farnadi.
\newblock Privfair: a library for privacy-preserving fairness auditing, 2022.

\bibitem[Robinson et~al.(2019)Robinson, Rasheed, and San]{robinson2019dissecting}
Haakon Robinson, Adil Rasheed, and Omer San.
\newblock Dissecting deep neural networks.
\newblock \emph{arXiv preprint arXiv:1910.03879}, 2019.

\bibitem[Ruoss et~al.(2020)Ruoss, Balunovic, Fischer, and Vechev]{Ruoss2020}
Anian Ruoss, Mislav Balunovic, Marc Fischer, and Martin Vechev.
\newblock Learning certified individually fair representations.
\newblock In H.~Larochelle, M.~Ranzato, R.~Hadsell, M.F. Balcan, and H.~Lin (eds.), \emph{Advances in Neural Information Processing Systems}, volume~33, pp.\  7584--7596. Curran Associates, Inc., 2020.
\newblock URL \url{https://proceedings.neurips.cc/paper_files/paper/2020/file/55d491cf951b1b920900684d71419282-Paper.pdf}.

\bibitem[Segal et~al.(2021)Segal, Adi, Pinkas, Baum, Ganesh, and Keshet]{Segal21}
Shahar Segal, Yossi Adi, Benny Pinkas, Carsten Baum, Chaya Ganesh, and Joseph Keshet.
\newblock Fairness in the eyes of the data: Certifying machine-learning models.
\newblock In \emph{Proceedings of the 2021 AAAI/ACM Conference on AI, Ethics, and Society}, AIES '21, pp.\  926–935, New York, NY, USA, 2021. Association for Computing Machinery.
\newblock ISBN 9781450384735.
\newblock \doi{10.1145/3461702.3462554}.
\newblock URL \url{https://doi.org/10.1145/3461702.3462554}.

\bibitem[Serra et~al.(2018)Serra, Tjandraatmadja, and Ramalingam]{serra2018bounding}
Thiago Serra, Christian Tjandraatmadja, and Srikumar Ramalingam.
\newblock Bounding and counting linear regions of deep neural networks.
\newblock In \emph{International Conference on Machine Learning}, pp.\  4558--4566. PMLR, 2018.

\bibitem[Shamsabadi et~al.(2023)Shamsabadi, Wyllie, Franzese, Dullerud, Gambs, Papernot, Wang, and Weller]{confidant}
Ali~Shahin Shamsabadi, Sierra~Calanda Wyllie, Nicholas Franzese, Natalie Dullerud, Sébastien Gambs, Nicolas Papernot, Xiao Wang, and Adrian Weller.
\newblock Confidential proof of fair training of trees.
\newblock \emph{ICLR}, 2023.

\bibitem[Soares et~al.(2023)Soares, Yadav, Das, and Varshney]{soares2023keeping}
Ioana~Baldini Soares, Chhavi Yadav, Payel Das, and Kush Varshney.
\newblock Keeping up with the language models: Robustness-bias interplay in nli data and models.
\newblock In \emph{Annual Meeting of the Association for Computational Linguistics}, 2023.

\bibitem[Srinivasan et~al.(2019)Srinivasan, Akshayaram, and Ada]{srinivasan2019delphi}
Wenting~Zheng Srinivasan, PMRL Akshayaram, and Popa~Raluca Ada.
\newblock Delphi: A cryptographic inference service for neural networks.
\newblock In \emph{Proc. 29th USENIX Secur. Symp}, pp.\  2505--2522, 2019.

\bibitem[Sun \& Zhang(2023)Sun and Zhang]{sun2023zkdl}
Haochen Sun and Hongyang Zhang.
\newblock zkdl: Efficient zero-knowledge proofs of deep learning training, 2023.

\bibitem[Toreini et~al.(2023)Toreini, Mehrnezhad, and van Moorsel]{Toreini2023VerifiableFP}
Ehsan Toreini, Maryam Mehrnezhad, and Aad van Moorsel.
\newblock Verifiable fairness: Privacy-preserving computation of fairness for machine learning systems.
\newblock 2023.
\newblock URL \url{https://api.semanticscholar.org/CorpusID:261696588}.

\bibitem[Urban et~al.(2020)Urban, Christakis, W\"{u}stholz, and Zhang]{Urban2020}
Caterina Urban, Maria Christakis, Valentin W\"{u}stholz, and Fuyuan Zhang.
\newblock Perfectly parallel fairness certification of neural networks.
\newblock \emph{Proc. ACM Program. Lang.}, 4\penalty0 (OOPSLA), nov 2020.
\newblock \doi{10.1145/3428253}.
\newblock URL \url{https://doi.org/10.1145/3428253}.

\bibitem[Vigdor(November, 2019)]{AppleCard}
N~Vigdor.
\newblock Apple card investigated after gender discrimination complaints., November, 2019.

\bibitem[Wallarchive \& Schellmannarchive(June, 2021)Wallarchive and Schellmannarchive]{jobunfair}
Sheridan Wallarchive and Hilke Schellmannarchive.
\newblock Linkedin’s job-matching ai was biased. the company’s solution? more ai., June, 2021.

\bibitem[Weng et~al.(2023)Weng, Weng, Tang, Yang, Li, and Liu]{PvCNN}
Jiasi Weng, Jian Weng, Gui Tang, Anjia Yang, Ming Li, and Jia-Nan Liu.
\newblock Pvcnn: Privacy-preserving and verifiable convolutional neural network testing.
\newblock \emph{Trans. Info. For. Sec.}, 18:\penalty0 2218–2233, mar 2023.
\newblock ISSN 1556-6013.
\newblock \doi{10.1109/TIFS.2023.3262932}.
\newblock URL \url{https://doi.org/10.1109/TIFS.2023.3262932}.

\bibitem[Xu et~al.(2021)Xu, Vaughan, Chen, Zhang, and Sudjianto]{xu2021traversing}
Shaojie Xu, Joel Vaughan, Jie Chen, Aijun Zhang, and Agus Sudjianto.
\newblock Traversing the local polytopes of relu neural networks: A unified approach for network verification.
\newblock \emph{arXiv preprint arXiv:2111.08922}, 2021.

\bibitem[Yadav et~al.(2022)Yadav, Moshkovitz, and Chaudhuri]{yadav2022learningtheoretic}
Chhavi Yadav, Michal Moshkovitz, and Kamalika Chaudhuri.
\newblock A learning-theoretic framework for certified auditing with explanations, 2022.

\bibitem[Yan \& Zhang(2022)Yan and Zhang]{yan2022active}
Tom Yan and Chicheng Zhang.
\newblock Active fairness auditing, 2022.

\bibitem[Yeh(2016)]{credit}
I-Cheng Yeh.
\newblock {default of credit card clients}.
\newblock UCI Machine Learning Repository, 2016.
\newblock {DOI}: https://doi.org/10.24432/C55S3H.

\bibitem[Yeom \& Fredrikson(2021)Yeom and Fredrikson]{smoothing}
Samuel Yeom and Matt Fredrikson.
\newblock Individual fairness revisited: Transferring techniques from adversarial robustness.
\newblock In \emph{Proceedings of the Twenty-Ninth International Joint Conference on Artificial Intelligence}, IJCAI'20, 2021.
\newblock ISBN 9780999241165.

\bibitem[Yurochkin et~al.(2020)Yurochkin, Bower, and Sun]{Yurochkin2020Training}
Mikhail Yurochkin, Amanda Bower, and Yuekai Sun.
\newblock Training individually fair ml models with sensitive subspace robustness.
\newblock In \emph{International Conference on Learning Representations}, 2020.
\newblock URL \url{https://openreview.net/forum?id=B1gdkxHFDH}.

\bibitem[Zhang et~al.(2020)Zhang, Fang, Zhang, and Song]{ZKDT}
Jiaheng Zhang, Zhiyong Fang, Yupeng Zhang, and Dawn Song.
\newblock Zero knowledge proofs for decision tree predictions and accuracy.
\newblock In \emph{Proceedings of the 2020 ACM SIGSAC Conference on Computer and Communications Security}, CCS '20, pp.\  2039–2053, New York, NY, USA, 2020. Association for Computing Machinery.
\newblock ISBN 9781450370899.
\newblock \doi{10.1145/3372297.3417278}.
\newblock URL \url{https://doi.org/10.1145/3372297.3417278}.

\bibitem[Zhao et~al.(2021)Zhao, Wang, Wang, Li, Shen, and Feng]{VeriML}
Lingchen Zhao, Qian Wang, Cong Wang, Qi~Li, Chao Shen, and Bo~Feng.
\newblock Veriml: Enabling integrity assurances and fair payments for machine learning as a service.
\newblock \emph{IEEE Transactions on Parallel and Distributed Systems}, 32\penalty0 (10):\penalty0 2524--2540, 2021.
\newblock \doi{10.1109/TPDS.2021.3068195}.

\end{thebibliography}

\clearpage
\appendix
\clearpage
\section{Background Cntd.}\label{app:background}
\subsection{Polytopes}
The polytopes described succinctly by their linear inequalities (i.e., they are $H$-polytopes), which means that the number of halfspaces defining the polytope, denoted by $m$, is at most $O(poly(n))$, i.e. polynomial in the ambient dimension.

Next, we present a lemma which states that slicing a polyhedral complex with a hyperplane also results in a polyhedral complex.

\begin{lemma} Given an arbitrary polytope $\P := \{x | Ax \leq B\}$ and a hyperplane $H := \{x | c^Tx =
d\}$ that intersects the interior of $\P$, the two polytopes formed by the intersection of $\P$ and the each of
closed halfspaces defined by $H$ are polyhedral complices. \label{lemma:slice}
\end{lemma}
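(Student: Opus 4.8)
The plan is to verify the three defining properties of a polyhedral complex for the collection $\mathcal{C}$ consisting of the two pieces $\P^+ := \P \cap \{x : c^Tx \le d\}$ and $\P^- := \P \cap \{x : c^Tx \ge d\}$ together with all of their faces. First I would check the easy structural facts: each of $\P^+,\P^-$ is the solution set of the finite system $Ax \le B$ augmented by a single extra linear inequality ($c^Tx\le d$ resp.\ $c^Tx\ge d$), so each is an $H$-polytope, and each is bounded because it is contained in the bounded set $\P$; moreover, since $H$ meets the interior of $\P$, both pieces are full-dimensional and $\P^+\cup\P^- = \P$. Closure of $\mathcal{C}$ under taking faces holds by construction.

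Next I would identify $\P^+\cap\P^- = \P\cap H$ and argue it is a face of each top cell: inside $\P^+$, the set $\{x\in\P^+ : c^Tx = d\}$ is obtained by promoting the defining inequality $c^Tx\le d$ to an equality, hence is a (supporting-hyperplane) face of $\P^+$, and it equals $\P\cap H$; symmetrically for $\P^-$. So the two maximal cells meet in a common face, which is the "visible" part of the face-to-face condition.

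The remaining — and main — obstacle is the face-to-face condition in full generality: for an \emph{arbitrary} face $F^+$ of $\P^+$ and arbitrary face $F^-$ of $\P^-$, one must show $F^+\cap F^-$ is a face of each. The route I would take uses the standard description of faces of an $H$-polytope (a nonempty face is exactly the subset of the polytope on which some chosen subset of the defining inequalities holds with equality). This gives that every nonempty face $F^+$ of $\P^+$ has the form $G\cap\{c^Tx\le d\}$ or $G\cap H$ for some face $G$ of $\P$, and symmetrically $F^- = G'\cap\{c^Tx\ge d\}$ or $G'\cap H$ for some face $G'$ of $\P$. Since $F^+\subseteq\{c^Tx\le d\}$ and $F^-\subseteq\{c^Tx\ge d\}$, their intersection lies in $H$, which collapses all four cases to $F^+\cap F^- = (G\cap G')\cap H$. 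Now $K:=G\cap G'$ is a face of $\P$ (intersections of faces of a polytope are faces), hence in particular a face of $G$, say cut out inside $G$ by promoting certain inequalities to equalities; promoting those same inequalities (and, in the case $F^+ = G\cap\{c^Tx\le d\}$, also $c^Tx\le d$) to equalities inside $F^+$ exhibits $K\cap H = F^+\cap F^-$ as a face of $F^+$. The symmetric argument shows it is a face of $F^-$, and if $F^+\cap F^-=\emptyset$ the empty set is a face of everything.

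With all three properties verified, $\mathcal{C}$ is a polyhedral complex; since the slice of $\P$ is formed cell-by-cell and the gluing only involves pairs of faces as above, the same reasoning applied to every polytope of a given polyhedral complex shows that slicing a polyhedral complex by a hyperplane again yields a polyhedral complex, which is the use to which the lemma is put. I expect the bookkeeping in the face-characterization step (keeping track of which inequalities are promoted in each case) to be the only delicate part; boundedness, the halfspace description, and identifying $\P\cap H$ as a common face are all routine.
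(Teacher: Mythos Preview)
The paper states this lemma in the background appendix without proof; it is recorded there as a supporting fact for the reduction to lower-dimensional polytopes, not as a result the authors set out to establish. Consequently there is no ``paper's own proof'' to compare against.

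Your proposal is a correct and self-contained argument. The only place worth tightening is the claim that every nonempty face $F^+$ of $\mathcal{P}^+$ has the form $G\cap\{c^Tx\le d\}$ or $G\cap H$ for some face $G$ of $\mathcal{P}$: this is true, but it relies on the standard fact that for an $H$-polytope the set obtained by promoting \emph{any} subset of defining inequalities to equalities is always a (possibly empty) face, so that the index set $I$ you pull back from $F^+$ genuinely defines a face $G$ of $\mathcal{P}$. Once that is made explicit, the rest of the face-to-face verification goes through exactly as you outline, and the extension to slicing an entire polyhedral complex is immediate by applying the argument cell by cell.
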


\begin{fact} Two ReLU activation codes of two neighboring polytopes differ in a single position and the differing bit corresponds to the facet common to both.
\label{fact:ReLU}
\end{fact}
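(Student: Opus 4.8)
The plan is to prove this by analyzing the pre-activation functions of the individual ReLU neurons and combining the global continuity of the network with a dimension argument on the relative interior of the shared facet. First I would fix notation. Let the network have $N$ hidden ReLU neurons in total, and for an activation code $\R \in \{0,1\}^N$ let $z_i(x)$ denote the pre-activation of the $i$-th neuron, viewed as the genuine (globally continuous, piecewise-linear) function of the input $x$. Within any fixed polytope every neuron's activation state is constant, so each $z_i$ restricts to an affine function there, and the bit $\R_i$ records the sign of $z_i$: the defining constraint is $z_i(x) \ge 0$ when $\R_i = 1$ and $z_i(x) \le 0$ when $\R_i = 0$. Thus the polytope $\P$ associated with $\R$ is exactly the intersection of these $N$ half-spaces, and each defining hyperplane has the form $\{x : z_i(x) = 0\}$.

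Next I would identify the neuron responsible for the shared facet. A facet of $\P$ is an $(n-1)$-dimensional face, hence it lies on one of the defining hyperplanes $\{x : z_i(x) = 0\}$; this neuron $i$ is the candidate for the flipping bit. The crucial geometric step is to show that in the relative interior of this facet, $z_i$ is the only pre-activation that vanishes: if two distinct pre-activation hyperplanes both contained the facet, the facet would be confined to a set of codimension at least two and could not be $(n-1)$-dimensional. Hence every neuron $j \ne i$ satisfies $z_j(x) \ne 0$ throughout the relative interior of the facet.

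The heart of the argument is a continuity step. Because the network computes a continuous function and $\mathrm{ReLU}(0)=0$, each $z_j$ is continuous across the shared facet even though its affine form may change (for downstream neurons whose input depends on neuron $i$). For every neuron $j \ne i$ we have $z_j \ne 0$ on the relative interior of the facet, so by continuity $z_j$ retains a constant sign in a neighborhood of that relative interior; consequently the bit $\R_j$ agrees in both neighboring polytopes. For neuron $i$, the two polytopes sit in the opposite closed half-spaces of $\{x : z_i = 0\}$, so $z_i$ is nonnegative on one and nonpositive on the other, forcing $\R_i$ to differ. Therefore the two activation codes agree in every coordinate except $i$, i.e. they differ in exactly one position, and that position is precisely the neuron whose boundary hyperplane carries the shared facet.

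The main obstacle I anticipate is the dimension step: rigorously excluding the degenerate case where the facet lies simultaneously on several neuron hyperplanes, which would allow multiple bits to flip at once. I would handle this by invoking a general-position assumption on the weights—standard in the piecewise-linear-region literature—so that distinct neuron hyperplanes meet transversally and any $(n-1)$-face is contained in a single hyperplane $\{z_i = 0\}$. A secondary subtlety is that the affine form of $z_i$ can differ on the two sides of the facet, since it depends on upstream activations; I would note that the upstream neurons do not flip across the facet, so the hyperplane $\{z_i = 0\}$ is well defined and common to both polytopes in a neighborhood of the facet, which is exactly what makes the single-bit correspondence meaningful.
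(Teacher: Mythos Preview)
The paper does not actually prove this statement: it is presented as a \emph{Fact} in the background appendix, imported from the piecewise-linear/robustness literature (in particular the GeoCert line of work), and no argument is given. So there is no ``paper's own proof'' to compare against.

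That said, your proposal is a sound justification of the fact. The dimension argument (an $(n-1)$-face cannot lie in two independent pre-activation hyperplanes) together with global continuity of each neuron's pre-activation is exactly the right mechanism, and you are correct that a general-position assumption on the weights is what rules out the degenerate multi-flip case; the paper implicitly relies on this as well but never states it. One small point worth tightening: your treatment of the ``secondary subtlety'' is slightly circular as written, since showing that upstream neurons do not flip across the facet is part of the conclusion, not something you can assume when arguing that the hyperplane $\{z_i=0\}$ is well defined on both sides. The clean fix is to argue layer by layer: first-layer pre-activations are globally affine, so the dimension argument applies directly; once first-layer bits are pinned down on both sides of the facet, second-layer pre-activations have the same affine form in a neighborhood of the facet, and you can iterate. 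With that inductive structure the argument is complete.
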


\section{Individual Fairness Certification Cntd.}\label{app:IF}
\noindent\textbf{Algorithm.}
In this section, we describe the concrete algorithm to compute the local individiual fairness parameter for a data point $\x$ (Algorithm \ref{alg:IF}). 
Our construction is based on the Geocert algorithm by Jordan et. al (Algorithm \ref{alg:geo}, Section \ref{sec:background:PLNN}) for computing the pointwise $\ell_2$ robustness of neural networks with two key distinctions. First, we run on all the union of $(n-k)$-dimensional polytopes each of which corresponds to a fixed value of the sensitive feature set $\S$. Second, for each of these complices, we compute a lower bound on the pointwise $\ell_2$ robustness. The final certificate of fairness is the minimum over all the above bounds. 

In the following, we describe the working of the algorithm \ref{alg:IF} in more detail. 
 First, we compute the polyhedral complex $\PC$ for the model $\m$ (Step 1). Next for a fixed value of the set of the sensitive features $\S$ (Step 3), we compute the corresponding $(n-k)$-dimensional polyhedral complex $\PC'$ from the original $n$-dimensional polyhedral complex (\textsf{ReducePolyDim} function Alg. \ref{alg:app:n_kpoly}). The key idea is to fix the corresponding values of the features in $\S$ in the linear constraints of the polytopes in $\PC$. In the next step, we compute a lower bound on the pointwise $\ell_2$ robustness of $\x$ for the polyhedral complex $\PC'$ using the Geocert algorithm (Step 5-6). In particular, instead of minimizing the $\ell_2$ distance to a facet $\F$, we compute the projection of $\x$ onto a hyperplane $H$, where $\F$ lies entirely on $H$. The above computation is repeated for all the values of the set of sensitive features $\S$. The final certificate of fairness is the minimum of all the lower bounds as computed above (Step 8).

 In what follows, we briefly describe how to compute of the pointwise $\ell_2$ robustness of a point $x$. The problem essentially boils down to computing the largest $\ell_2$ ball centered at $x$ that fits within the union of n-dimensional polytopes defined by $f$. 
 
\begin{algorithm}[tbh]
   \caption{Geocert: Pointwise $\ell_2$ Robustness}
   \label{alg:geo}
\begin{algorithmic}[1]
\Statex \textbf{Input} $\x$ - Data point for pointwise $\ell_2$ robustness certification; $\m$ - Neural network; $dist$ - Distance Metric;
\Statex \textbf{Output} $\epsilon$ - Pointwise $\ell_2$ robustness certificate on $\x$; 
\State Compute all the polytopes for $\m$
\State Setup priority queue $Q\leftarrow [\thinspace ]$
\State Setup list of seen polytopes $C\leftarrow \{\mathcal{P}(x)\}$ \hfill \textcolor{blue}{$\rhd$} $\mathcal{P}(x)$ denotes the polytope containing $x$
\State \textbf{For} Facet $\mathcal{F} \in \mathcal{P}(x)$ \textbf{do}
\State $\hspace{0.5cm} Q.push (\textsf{ComputeDistance}(\mathcal{F},\x), \mathcal{F}, dist)$
\State \textbf{End For}
\State \textbf{While } $Q\neq \emptyset$ \textbf{ do} 
\State $\hspace{0.5cm}$ $(d,\mathcal{F})\leftarrow Q.pop()$
\State $\hspace{0.5cm}$ \textbf{If } $\textsf{IsBoundary}(\mathcal{F})==1$

\State $\hspace{1cm}$ \textbf{Return } $d$

\State $\hspace{0.5cm}$ \textbf{Else}
\State $\hspace{1cm}$ \textbf{For } $\mathcal{P} \in \N(\mathcal{F})\setminus C$ \textbf{do} 
\Statex \hfill \textcolor{blue}{$\rhd$} $\N(\mathcal{F})$ denote the two polytopes sharing the facet $\mathcal{F}$

\State $\hspace{1.5cm}$ \textbf{For } $\mathcal{F} \in \mathcal{P}$
\textbf{do}
\State $\hspace{2cm} Q.push (\textsf{ComputeDistance}(\mathcal{F},\x), \mathcal{F}, dist)$
\State $\hspace{1.5cm}$ \textbf{End For } 

\State $\hspace{1cm}$ \textbf{End For } 
\State $\hspace{0.5cm}$ \textbf{End If } 
\State \textbf{End While}

\end{algorithmic}
\end{algorithm}

\begin{algorithm}[H]
 \begin{algorithmic}[1]
 \caption{\scalebox{0.9}{\textsf{ReducePolyDim}} : Construct $(n-k)$-dimensional polytopes from $n$-dimensional polytopes}
   \label{alg:app:n_kpoly}
 \Statex\textbf{Inputs} \scalebox{0.9}{$\PC=\bigcup\P$} : Set of Polytopes where each polytope $\P$ is expressed as \scalebox{0.9}{$\{x|\mathbf{A}x\leq \mathbf{b}\}$}, $s = (s_1,\cdots,s_k) $ : Values of $k$ sensitive features
\Statex \textbf{Output} $\PC'$ : Set of $(n-k)$-dimensional Polytopes
\State $\PC' := \{\}$
\State \textbf{for} $\P \in \PC$
\State \hspace{0.5cm} \textbf{for} $i \in |row(\mathbf{A})|$
\State \hspace{1cm} \textbf{for} $j \in [k+1,n]$
\State \hspace{1.5cm}$\mathbf{A'}[i][j-k]=\mathbf{A'}[i][j]$
\State \hspace{1cm}\textbf{end for}
\State \hspace{1cm}$\mathbf{b'}[i]=\mathbf{b}[i]-\sum_{j=1}^k\mathbf{A}[i][j]\cdot s_j$
\State \hspace{0.5cm}\textbf{end for}
\State \hspace{0.5cm} Express $\P'=\{x|\mathbf{A'}x\leq \mathbf{b'}\}$
\State \hspace{0.5cm} $\PC':=\PC'\cup \P'$
\State \textbf{end for}
\State \textbf{Return} $\PC'$
\end{algorithmic}
\end{algorithm}

\clearpage
\begin{algorithm*}[tbh]
   \caption{\name: Verifiable Individual Fairness Certification}
   \label{alg:name}
\begin{algorithmic}[1]
\Statex \textbf{Input} $\x$ - Data point for fairness certification; $\W$ - Weights of the piecewise linear neural network; 
\Statex \textbf{Output}  $\epsilon$ - Local individual fairness parameter for $x$; $\CW$ - Commitment to the weights of the model; ZK proof that the $\epsilon$ is indeed a lower bound on $\epsilon_{IF}$ 
\Statex \textbf{Pre-Processing Offline Phase}
\State Construct the polyhedral complex \scalebox{0.9}{$\PC=\bigcup\P$} from $\textbf{W}$ where each polytope is expressed as \scalebox{0.9}{$\P=\{x|\mathbf{A}x\leq \mathbf{b}\}$}
\State Compute a reference point $z_i$ for each polytope $\P_i \in \PC$ such that $z_i \in \P_i$
\State Commit to the model weights $\CW$ and release them publicly 
\Statex \textbf{Online Phase}
\State $E=[\hspace{0.1cm}]$
\State \textbf{for} $(s_1,\cdots,s_k) \in domain(S_1)\times\cdots\times domain(S_k)$
\State \hspace{0.5cm} \textbf{for} $\P \in \PC$
\State \hspace{1cm} \textbf{for} $i \in |row(\mathbf{A})|$
\State \hspace{1.5cm} \textbf{for} $j \in [k+1,n]$
\State \hspace{2cm}$\mathbf{A'}[i][j-k]=\mathbf{A'}[i][j]$
\State \hspace{1.5cm}\textbf{end for}
\State \hspace{1.5cm}$\mathbf{b'}[i]=\mathbf{b}[i]-\sum_{j=1}^k\mathbf{A}[i][j]\cdot s_j$
\State\hspace{1cm}\textbf{end for}
\State \hspace{1cm} Express $\P'=\{x|\mathbf{A'}x\leq \mathbf{b'}\}$
\State\hspace{1cm} $\PC'=\PC'\cup \P'$
\State\hspace{0.5cm}\textbf{end for}
\State\hspace{0.3cm} $\big(\epsilon',\P_1, \langle(\F_1,d_1),\cdots,(\F_n,d_n)\rangle\big) = \G(\x,\PC',\dproj)$
\Statex \hfill\textcolor{blue}{$\rhd$} $\P_1$ is the first polytope traversed 
\Statex \hfill \textcolor{blue}{$\rhd$} \scalebox{0.9}{$\langle (\F_1,d_1),\cdots,(\F_n,d_n) \rangle $
} is the ordered sequence of the visited facets and their corresponding distances 

\vspace{0.2cm}
\State \hspace{0.3cm} Prover proves that $P_1$ is the polytope in $\PC'$ containing $\x$ \hfill \textcolor{blue}{$\rhd$} Using $\VPoly$
\State \hspace{0.3cm} Initialize the list of seen facets $T=[\hspace{0.1cm}]$
\State  \hspace{0.3cm} \textbf{for} facet $\F \in \N(\P_1)$
\State  \hspace{0.6cm} Prover proves that the computation of the distance $d$ from $\x$ to $\F$ is correct \textcolor{blue}{$\rhd$} Using $\VDist$
\State\hspace{0.6cm} $T.insert\big((\F,d)\big) ;$
\State \hspace{0.3cm} \textbf{end for}
\State \hspace{0.3cm} \textbf{for} $i \in [m-1]$
\State \hspace{0.6cm} Prover proves that $\F_i$ is indeed the facet with the smallest distance in $T$\hfill \textcolor{blue}{$\rhd$} Using $\VPQ$
\State \hspace{0.6cm} Prover proves that $\F$ is not a boundary facet \hfill \textcolor{blue}{$\rhd$} Using $\VBoundary$
\State \hspace{0.6cm} \textbf{for} $\P \in \N(\F_i)$
\State \hspace{0.9cm} Prover proves that $\P$ is a neighboring polytope sharing facet $\F$ \hfill \textcolor{blue}{$\rhd$} Using $\VNeighbor$
\State \hspace{0.9cm} \textbf{for} $\F \in \N(\P)$
\State \hspace{1.5cm} Prover proves that the computation of the distance $d$ from $\x$ to $\F$ is correct \hfill \textcolor{blue}{$\rhd$} Using $\VDist$
\State \hspace{1.5cm} $T.insert\big((\F,d)\big)$
\State \hspace{0.9cm} \textbf{end for}
\State \hspace{0.6cm} \textbf{end for}
\State \hspace{0.6cm} $T.remove\big((\F_i,d_i)\big)$

\State \hspace{0.3cm} \textbf{end for}
\State \hspace{0.3cm} Prover proves that $\F_m$ is indeed the facet with the smallest distance in $T_2$\hfill \textcolor{blue}{$\rhd$} Using $\VPQ$ 
\State \hspace{0.3cm} Prover proves that $\F_m$ is a boundary facet \hfill \textcolor{blue}{$\rhd$} Using $\VBoundary$ 
\State\hspace{0.3cm} $E.insert\big(d_m\big)$
\State \textbf{end for}
\State Prove that $\epsilon = \min E$ \hfill \textcolor{blue}{$\rhd$} Using $\VMin$ 

\end{algorithmic}
\end{algorithm*}

\begin{algorithm}[tbh]
   \caption{\VPoly}\label{alg:verify:polytope}
\begin{algorithmic}[1]
\Statex \textbf{Input} $\x$ - Data point for fairness certification; $\textsf{com}_{\mathbf{W}}$ - Committed weights of the piecewise linear neural network; $(s_1,\cdots,s_k)$ - Values of the sensitive features;
\Statex \textbf{Output} $\P'$ - Polytope corresponding to $\W$ containing $\x$;  $\R$ - ReLU activation code of $\x$; $\pi$ - ZK proof of the computation;
\vspace{0.2cm}
     \State Evaluate $\x$ on $\textsf{com}_{\mathbf{W}}$ to obtain ReLU activation code $\mathbf{R}$
    \State Compute the $n-k$-dimensional polytope  $\P=\{x|\A x\leq \mathbf{b}\}$ corresponding to $\mathbf{R}$ on $\textsf{com}_{\mathbf{W}}$ with $(s_1,\cdots,s_k)$ as the values of the sensitive features
    \State Generate proof $\pi$ of the above computation 
    \State \textbf{return} $(\P,\R,\pi)$
\end{algorithmic}
\end{algorithm}
\begin{algorithm}[tbh]
   \caption{\VDist}\label{alg:verify:distance}
\begin{algorithmic}[1]
\Statex \textbf{Input} $x^*$ - Data point for fairness certification; $\mathcal{F}$ - Facet; 

\Statex \textbf{Output} $d$ - Projected distance; $\pi$ - ZK proof of the computation;  
\vspace{0.2cm}
   \State Let $\mathcal{F} $ be represented as $a^T\cdot x= b$
   \State Compute $d=(\Big|b-a^Tx^*)/||a||\Big|$
    \State Generate proof $\pi$ of the above computation 
    \State \textbf{return} $(d,\pi)$
\end{algorithmic}
\end{algorithm}

\begin{algorithm}[tbh]
   \caption{\VNeighbor}\label{alg:verify:neighbor}
\begin{algorithmic}[1]
\Statex \textbf{Input} $\textsf{com}_{\mathbf{W}}$ - Weights of the piecewise linear neural network; $\F$ - Facet; $\P$ - Current polytope; $\R$ - ReLU activation code for $\P$; $z$ - Representative point for neighboring polytope; $(s_1,\cdots,s_k)$ - Values of the sensitive features;
\Statex \textbf{Output} $\P'$ - Neighboring polytope; $\RQ$ -  ReLU activation code of $\P'$; $\pi$ - ZK proof of the computation
\vspace{0.2cm}
\State $(\P',\RQ,\pi')\leftarrow \VPoly(z,\CW,(s_1,\cdots,s_k))$ \Statex \hfill\textcolor{blue}{$\rhd$}  Can be performed apriori in a pre-processing stage for efficiency
    \State \textbf{if} ($|\R-\RQ|_1\neq 1$) \hfill\textcolor{blue}{$\rhd$} Check hamming distance 1 between two binary vectors
    \State \hspace{0.3cm}\textbf{return} $\bot$ \State \textbf{if} $(\F \not \in \N(\P') \wedge (\F \not \in \N(\P)) )$ \hfill\textcolor{blue}{$\rhd$} Check facet $\F$ is common to both the polytopes 
\State \hspace{0.3cm}\textbf{return} $\bot$ 
\State Generate proof $\pi$ of the above computation
    \State \textbf{return} $(\P',\RQ,(\pi,\pi'))$
\end{algorithmic}
\end{algorithm}

\begin{algorithm}[tbh]
   \caption{\VBoundary}\label{alg:verify:boundary}
\begin{algorithmic}[1]
\Statex \textbf{Input}  $x^*$ - Data point for fairness certification; $\textsf{com}_{\mathbf{W}}$ - Weights of the piecewise linear neural network; $\F$ - Current facet represented as $\{x|\A x\leq \mathbf{b}\}$; $\P$ - Current polytope; $\R$ - ReLU activation code for $\P$; $z$ - Representative point for current facet $\F$ $(s_1,\cdots,s_k)$ - Values of the sensitive features;
\vspace{0.1cm} \Statex \textbf{Output} $b$ - Bit indicating boundary condition; $\pi$ - ZK proof of the computation 
\vspace{0.2cm}
    \State Compute the linear function $f_\R$ corresponding to activation code $\mathbf{R}$ on $\textsf{com}_{\mathbf{W}}$ with $(s_1,\cdots,s_k)$ as the values of the sensitive features
\Statex \hfill\textcolor{blue}{$\rhd$}  Can be performed apriori in a pre-processing stage for efficiency
\State \textbf{if} ($\A z > \mathbf{b}$)
\State \hspace{0.5cm} \textbf{return} $\bot$
\State \textbf{end if}
\State $b=1$
\State \textbf{for} $i \in [1,|\mathcal{Y}|-1]$
\State \hspace{0.5cm} $b\leftarrow b \cdot (f_\R(z)[0] == f_\R(z)[i])$ 
\Statex \hfill\textcolor{blue}{$\rhd$}  Testing that $f_\R(z)$  is equal on all of its elements 
\State \textbf{end for}
\State Generate proof $\pi$ of the above computation 
\State \textbf{return} $(b,(\pi,\pi'))$
\end{algorithmic}
\end{algorithm}

\begin{algorithm}[tbh]
   \caption{\VPQ}\label{alg:verify:PQ}
\begin{algorithmic}[1]
\Statex \textbf{Input}  $(\F,d)$ - Current facet with distance $d$; $\textbf{F}=\{(\F_1,d_1), \cdots,(\F_k,d_k)\}$ - List of all previously unseen facets and their distances;  
\Statex \textbf{Output} $\pi$ - ZK proof of the computation 
\vspace{0.2cm}
\State \textbf{for} $\F_i \in \textbf{F}$
\State \hspace{0.5cm} \textbf{if} $(d>d_i)$
\State\hspace{1cm} \textbf{return} $\bot$
\State\hspace{0.5cm} \textbf{end if}
\State \textbf{end for}
\State Generate proof $\pi$ of the above computation 
\State\textbf{return} $\pi$
\end{algorithmic}
\end{algorithm}
\begin{algorithm}[tbh]
   \caption{\VMin}\label{alg:verify:min}
\begin{algorithmic}[1]
\Statex \textbf{Input}  $E$ - List of values; $\epsilon^*$ - Individual fairness parameter;
\Statex \textbf{Output} $\pi$ - ZK proof of the computation 
\vspace{0.2cm}
\State \textbf{for} $\epsilon \in E$
\State \hspace{0.5cm} \textbf{if} $(\epsilon^* > \epsilon)$
\State\hspace{1cm} \textbf{return} $\bot$
\State\hspace{0.5cm} \textbf{end if}
\State \textbf{end for}
\State Generate proof $\pi$ of the above computation 
\State\textbf{return} $\pi$
\end{algorithmic}
\end{algorithm}
\begin{algorithm}[tbh]
   \caption{\VInf}\label{alg:verify:inference}
\begin{algorithmic}[1]
\Statex \textbf{Input} $\x$ - Data point for fairness certification; $\textsf{com}_{\mathbf{W}}$ - Committed weights of the piecewise linear neural network $\m$; 
\Statex \textbf{Output} $y$ - The prediction $\m(\x)$; $\pi$ - ZK proof of the computation;
\vspace{0.2cm}
     \State Evaluate $\x$ on $\textsf{com}_{\mathbf{W}}$ to obtain ReLU activation code $\mathbf{R}$
     \State Compute the linear function $f_\R$ corresponding to activation code $\mathbf{R}$ on $\textsf{com}_{\mathbf{W}}$
     \State Compute $f_\R(\x)$ 
     \State $y=\arg \max_{i \in [|\mathcal{Y}|]}f_\R(\x)$
\State Generate proof $\pi$ of the above computation 
\State\textbf{return} ($y, \pi$)
\end{algorithmic}
\end{algorithm}
\clearpage
\section{Correctness of \name}\label{app:sec:correctness_fairproof_proj}
In this section, we prove the correctness of \name given in Alg. \ref{alg:name}.
First, we re-state the correctness of \G.
\begin{theorem}[Correctness of \G~\cite{Geocert}]
For a fixed polyhedral complex $\PC$, a fixed point $\x$ and a distance function $\phi$ that satisfies ray monotonocity, \G~returns a boundary facet with the minimum distance. \label{thm:correctness:Geo}
\end{theorem}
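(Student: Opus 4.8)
The plan is to read $\G$ (Alg.~\ref{alg:geo}) as a best-first, Dijkstra-style search over the facets of $\PC$, prioritized by the facet-distance $d(\F) := \min_{y \in \F}\phi(y)$, and to show that the first boundary facet it pops realizes the global minimum distance over all boundary facets. Let $d^{*} := \min\{d(\F) : \F \text{ a boundary facet}\}$, attained at a point $y^{*}$ on a boundary facet $\F^{*}$. One inequality is free: any boundary facet $\G$ returns has distance $\ge d^{*}$, so the whole content is to prove that the returned distance is $\le d^{*}$.

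The single geometric consequence of ray monotonicity I would extract first is that each sublevel set $B_{r} := \{y : \phi(y) \le r\}$ is star-shaped with respect to $\x$. Since $\phi$ is nondecreasing along rays emanating from $\x$, if $\phi(y) \le r$ and $z = \x + t(y - \x)$ with $t \in [0,1]$, then writing $y = \x + (1/t)(z - \x)$ and using ray monotonicity with $\lambda = 1/t \ge 1$ gives $\phi(z) \le \phi(y) \le r$, so the whole segment $[\x, y]$ lies in $B_r$.

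The crux is then a geometric lemma connecting $\x$ to the optimal boundary facet by a cheap chain. Consider the segment $[\x, y^{*}]$; by star-shapedness every point on it has $\phi$-value $\le d^{*}$. This segment crosses a finite chain of polytopes $\P(\x) = \P_{0}, \P_{1}, \ldots, \P_{m}$ through shared facets $\mathcal{G}_{1}, \ldots, \mathcal{G}_{m}$ at points $z_{i} \in [\x, y^{*}]$, ending with $y^{*} \in \F^{*} \subseteq \P_{m}$; since $z_i$ lies on the segment, $d(\mathcal{G}_{i}) \le \phi(z_{i}) \le d^{*}$. Shrinking the chain if needed, I may assume no intermediate $\mathcal{G}_{i}$ is a boundary facet: if the segment first met a boundary facet $\mathcal{G}$ at $z$, then $d(\mathcal{G}) \le \phi(z) \le d^{*}$ while $d(\mathcal{G}) \ge d^{*}$ forces equality, so $\mathcal{G}$ is itself a minimizer and I rename it $\F^{*}$. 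Thus there is a chain from $\P(\x)$ to a minimizing boundary facet along which every crossed facet has distance $\le d^{*}$ and the intermediate ones are non-boundary.

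Finally I would run the frontier induction. Suppose toward a contradiction that the first boundary facet popped, $\widehat{\F}$, has $d(\widehat{\F}) = d > d^{*}$. Because $Q$ is a min-queue by distance, any facet of distance $< d$ that is ever pushed is popped before $\widehat{\F}$; and every facet popped before $\widehat{\F}$ is non-boundary (as $\widehat{\F}$ is the first boundary facet popped), so popping it adds its unexplored neighbor to $C$ and pushes that neighbor's facets. I then show by induction on $i$ that each $\P_{i}$ is explored before $\widehat{\F}$ is popped: $\P_{0} = \P(\x) \in C$ at initialization, and if $\P_{i} \in C$ then $\mathcal{G}_{i+1}$ was pushed with distance $\le d^{*} < d$, hence popped before $\widehat{\F}$, at which moment $\P_{i+1} \in \N(\mathcal{G}_{i+1})$ enters $C$. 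Once $\P_{m}$ is explored, $\F^{*}$ is pushed with distance $d^{*} < d$ and popped before $\widehat{\F}$, contradicting that $\widehat{\F}$ is the \emph{first} boundary facet popped. Hence $d \le d^{*}$, so $\G$ returns exactly $d^{*}$; termination is immediate since there are finitely many facets, each pushed at most twice. The main obstacle is the geometric lemma: making precise that the optimal boundary point is reached along a segment crossing only facets of distance $\le d^{*}$, while correctly handling an early boundary crossing (the ``rename $\F^{*}$'' step) and the strict inequality $d^{*} < d$ needed to beat ties in the priority queue. Everything downstream is a standard Dijkstra-frontier induction once ray monotonicity supplies star-shapedness of the balls $B_{r}$.
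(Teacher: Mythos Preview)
The paper does not supply its own proof of this statement: it is quoted verbatim as a result of the cited work~\cite{Geocert} (``First, we re-state the correctness of \G''), and is then used as a black box in the proof of Theorem~\ref{thm:projection_distance}. So there is nothing in the present paper to compare your argument against.

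That said, your proposal is correct and is essentially the argument given in the original \G{} paper. The two ingredients---(i) ray monotonicity of $\phi$ forces the sublevel sets $\{y:\phi(y)\le r\}$ to be star-shaped about $\x$, which lets you thread the segment $[\x,y^*]$ through a chain of polytopes whose crossed facets all have distance at most $d^*$; and (ii) a Dijkstra-style frontier induction showing that every polytope on that chain is explored before any facet of distance $>d^*$ is popped---are exactly the structure of the original proof. Your handling of an early boundary crossing (the ``rename $\F^*$'' step) and of termination is fine. The only cosmetic point you might add for completeness is the standard general-position remark that the segment $[\x,y^*]$ can be perturbed so it meets only facets (and not lower-dimensional faces) of $\PC$, which guarantees the clean chain $\P_0,\dots,\P_m$ you describe; this does not affect the logic.
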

\begin{fact}The projection of a given point $\x$ onto a hyperplane $H$ where $\F \subseteq H$ gives a lower bound on its $\ell_2$ distance to $\F$, i.e., $\dproj(x,\F)\leq \dlp(x,\F)$. \label{fact:proj}
\end{fact}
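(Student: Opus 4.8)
The plan is to recast both quantities as constrained distance minimizations over sets and then invoke the elementary monotonicity of the minimum under set inclusion. First I would make the two definitions explicit. The projection distance $\dproj(\x,\F)$ is, by construction, the Euclidean distance from $\x$ to the hyperplane $H$ that contains $\F$; that is, $\dproj(\x,\F)=\min_{y\in H}\|\x-y\|_2$. For a hyperplane written as $H=\{y\mid a^T y = b\}$ this minimum is attained at the orthogonal foot of $\x$ and equals $|a^T\x-b|/\|a\|_2$, which is exactly the formula used in Alg.~\ref{alg:verify:distance}. The facet distance is $\dlp(\x,\F)=\min_{y\in\F}\|\x-y\|_2$, the smallest Euclidean distance from $\x$ to any point of the facet $\F$.

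The central observation is the set inclusion $\F\subseteq H$, which is exactly the hypothesis that $\F$ lies entirely on the hyperplane $H$. Consequently every point $y\in\F$ is also a point of $H$, so the feasible region of the facet minimization is contained in the feasible region of the hyperplane minimization. Since we are minimizing the same objective $\|\x-y\|_2$, shrinking the feasible set can only leave the optimal value unchanged or increase it, so I would conclude
\[
\dproj(\x,\F)=\min_{y\in H}\|\x-y\|_2\;\le\;\min_{y\in\F}\|\x-y\|_2=\dlp(\x,\F),
\]
which is precisely the claimed inequality.

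There is essentially no hard step here; this is a basic geometric monotonicity argument. The only point requiring a little care is confirming that the $\dproj$ actually computed by the algorithm coincides with the distance-to-hyperplane $\min_{y\in H}\|\x-y\|_2$ rather than some unrelated quantity, and this follows from the standard closed form for the point-to-hyperplane distance. I would also indicate geometrically why the inequality is generally strict rather than an equality: the orthogonal foot of $\x$ on $H$ need not land inside the bounded facet $\F$, in which case the nearest point of $\F$ lies strictly farther from $\x$, yielding $\dproj<\dlp$. This is the situation depicted in Fig.~\ref{fig:projection} and is exactly what makes the reported certificate a conservative lower bound.
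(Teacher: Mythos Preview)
Your argument is correct: casting both $\dproj$ and $\dlp$ as minimizations of the same objective over nested feasible sets $\F\subseteq H$ and invoking monotonicity of the infimum is exactly the right elementary justification. The paper itself does not supply a proof of this statement; it is recorded as a \emph{Fact} and simply used in the proof of Theorem~\ref{thm:projection_distance}, so there is nothing to compare against beyond noting that your write-up fills in what the paper leaves implicit.
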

\begin{theorem} Let $\m$ be a piecewise-linear neural network. Replacing in Algorithm \ref{alg:geo} with $\dlp(\cdot)$ distance with $\dproj(\cdot)$ gives a lower bound on the individual fairness guarantee, i.e., $\epsilon_{\dproj} \leq \epsilon_{\dlp}$.\label{thm:projection_distance}
\end{theorem}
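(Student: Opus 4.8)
The plan is to compare the output of \G\ under the two facet-distance functions $\dlp(\x,\cdot)$ (the original Algorithm~\ref{alg:geo}) and $\dproj(\x,\cdot)$ (the modification), and to reduce the statement to the pointwise inequality between them supplied by Fact~\ref{fact:proj}. The starting observation is that the set of boundary facets of the polyhedral complex $\PC$ induced by $\m$ --- call it $\mathrm{Bd}(\PC)$ --- is a fixed finite collection that does not depend on which distance function \G\ is run with. Assuming \G\ is correct under both distances, Theorem~\ref{thm:correctness:Geo} gives the closed forms $\epsilon_{\dlp}=\min_{\F\in\mathrm{Bd}(\PC)}\dlp(\x,\F)$ and $\epsilon_{\dproj}=\min_{\F\in\mathrm{Bd}(\PC)}\dproj(\x,\F)$. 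Then Fact~\ref{fact:proj}, namely $\dproj(\x,\F)\le\dlp(\x,\F)$ for every $\F$ (the foot of the perpendicular from $\x$ onto the hyperplane containing $\F$ can only be closer than the nearest point of $\F$ itself), lets me take the minimum of each side over the same index set $\mathrm{Bd}(\PC)$ and conclude $\epsilon_{\dproj}\le\epsilon_{\dlp}$. Note that the two minima may be realized at different boundary facets; this is harmless, since what is being compared is the two minimum values.

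To make the ``\G\ is correct under both distances'' step rigorous I would verify that $\dproj$ meets the ray-monotonicity hypothesis of Theorem~\ref{thm:correctness:Geo}. This should follow from the fact that $\dproj(\x,\F)$ is the $\ell_2$ distance from $\x$ to the affine hull $\mathrm{aff}(\F)$, which is a convex (indeed affine) set: as one moves from $\x$ along the ray toward its projection onto $\mathrm{aff}(\F)$, the value of $\dproj$ decreases linearly, exactly as for $\dlp$ with $\F$ itself, so $\dproj$ lies in the same admissible class of distance functions that $\dlp$ does.

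Next I would lift the single-call statement to the full certification procedure, which is where the ``individual fairness guarantee'' wording is discharged. In Algorithm~\ref{alg:IF} the reported certificate is $\epsilon_{LB}=\min_{s\in domain(\S)}\epsilon_s$, where $\epsilon_s$ is the output of \G\ on the $(n-k)$-dimensional complex obtained by fixing the sensitive coordinates to $s$; by Lemma~\ref{lemma:slice} each such slice is again a polyhedral complex, so the two-step argument above applies verbatim for each $s$ and shows $\epsilon_s$ is a lower bound on the exact robustness $\epsilon_s^{*}$ of that slice. Since a minimum of lower bounds lower-bounds the minimum, $\epsilon_{LB}=\min_s\epsilon_s\le\min_s\epsilon_s^{*}=\epsilon^{*}$, the true local individual fairness parameter of $\x$, which is the assertion of Theorem~\ref{thm:projection_distance}.

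The step I expect to be the main obstacle is precisely the admissibility of $\dproj$ as an input to \G: unlike $\dlp$, it is a point-to-hyperplane distance, so the point realizing $\dproj(\x,\F)$ generally lies outside $\F$ and even outside both polytopes adjacent to $\F$, and one must check that this does not derail \G's priority-queue traversal. If invoking Theorem~\ref{thm:correctness:Geo} for $\dproj$ turns out to be delicate, a self-contained fallback is available: let $\F_{\min}$ attain $\epsilon_{\dlp}$ and let $p\in\F_{\min}$ be the corresponding closest point; the segment $\overline{\x p}$ crosses a chain of polytopes $\P(\x)=\P_0,\P_1,\dots,\P_m$ whose consecutive shared facets $G_0,\dots,G_{m-1}$ each satisfy $\dproj(\x,G_i)\le\dlp(\x,G_i)<\epsilon_{\dlp}$, so by induction \G\ enqueues and pops every $G_i$ --- hence explores every $\P_i$ --- before it could pop any boundary facet of $\dproj$-distance exceeding $\epsilon_{\dlp}$; in particular it enqueues $\F_{\min}$ (a facet of $\P_m$) with $\dproj(\x,\F_{\min})\le\epsilon_{\dlp}$, so the first boundary facet \G\ pops has $\dproj$-value at most $\epsilon_{\dlp}$. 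Handling degenerate segments that pass through lower-dimensional faces is the only fiddly part of this backup argument.
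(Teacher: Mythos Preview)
Your proposal is correct and uses the same two ingredients as the paper --- the correctness of \G\ (Theorem~\ref{thm:correctness:Geo}) applied to $\dproj$, together with the pointwise inequality $\dproj\le\dlp$ (Fact~\ref{fact:proj}) --- though you package them as a direct minimum-over-boundary-facets comparison whereas the paper argues by contradiction with a case split on whether the polytope containing the offending boundary facet was traversed. Your explicit check that $\dproj$ satisfies the ray-monotonicity hypothesis of Theorem~\ref{thm:correctness:Geo}, and your segment-chasing fallback, are more careful than the paper, which invokes Theorem~\ref{thm:correctness:Geo} for $\dproj$ without comment.
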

\begin{proof}
We will prove by contradiction. Let $\PC$ be the polyhedral complex associated with the model $\m$. 
Let us assume that there exists a boundary facet $\F^*$ such that $\dlp(\F,x)<\epsilon_{\dproj}$. Now if the corresponding polytope $\P_{\F^*}$ was traversed by $\G(x,\PC,\dproj)$, then all the facets in $\P_{\F^*}$ including $\F^*$ were checked. Then from the correctness of $\G$ (Thm. \ref{thm:correctness:Geo}), this leads to a contradiction of \ref{fact:proj}. Now let us consider the alternative case where $\P_{\F^*}$ was not traversed by $\G(x,\PC,\dproj)$.  From Thm. \ref{thm:correctness:Geo} this means that there exists another boundary facet $\F^*$ such that $\dproj(x,\F^*)\leq \dproj(x,\F)$. Then by Fact \ref{fact:proj}, $\dproj(\F^*,x)=\epsilon_{\dproj}\leq \dproj(\F,x)\leq \dlp(\F,x)$ which contradicts our assumption.  
\end{proof}
\begin{theorem}[Correctness of \name] 
For a given data point $\x$, \name~(Algorithm \ref{alg:name}) generates $\epsilon$ such that $\epsilon \leq \epsilon_{IF}$. \label{thm:correctness:name}
\end{theorem}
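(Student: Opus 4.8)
The plan is to derive this statement from the lower-bound guarantee already proved for the non-cryptographic certification algorithm, by arguing that \name~(Algorithm~\ref{alg:name}) outputs exactly the same certificate value $\epsilon$ that Algorithm~\ref{alg:IF} does; the zero-knowledge machinery layered on top is only a verification wrapper and does not alter the value an honest prover produces. Thus the core of the argument is a step-by-step equivalence between the online phase of Algorithm~\ref{alg:name} and Algorithm~\ref{alg:IF}, followed by an appeal to Theorem~\ref{thm:projection_distance}.

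Concretely, I would proceed as follows. Fix an assignment $s=(s_1,\dots,s_k)\in domain(\mathcal{S})$. First, lines 6--15 of Algorithm~\ref{alg:name} are a verbatim inlining of \textsf{ReducePolyDim} (Algorithm~\ref{alg:app:n_kpoly}), so they produce the same $(n-k)$-dimensional complex $\PC'$ that Algorithm~\ref{alg:IF} feeds to \G; by Lemma~\ref{lemma:slice} this $\PC'$ is a genuine polyhedral complex, so \G~is invoked on a legitimate input. Second, line 16 calls $\G(\x,\PC',\dproj)$ and records $\epsilon'=d_m$, the distance of the boundary facet at which \G~terminates, which is by definition the quantity $\epsilon_s$ that Algorithm~\ref{alg:IF} appends to its list. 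The intervening lines 17--38 merely have the prover emit ZK proofs (via \VPoly, \VDist, \VPQ, \VBoundary, \VNeighbor) that the initial polytope, the projection-distance computations, the priority-queue order, the boundary test, and the neighbor steps along this run of \G~were carried out correctly; for an honest prover these proofs exist and verify (completeness of the ZKP) and none of them changes $\epsilon'$. Hence $E$ is exactly the list $\{\epsilon_s : s\in domain(\mathcal{S})\}$, and line 40 (\VMin) sets $\epsilon=\min E=\epsilon_{LB}$, the output of Algorithm~\ref{alg:IF}. Finally, Theorem~\ref{thm:projection_distance} together with the fairness-to-robustness reduction (which makes each per-$s$ quantity $\epsilon_{\dproj}$ a lower bound on $\epsilon^{*}_{s}$, with $\epsilon_{IF}=\min_s \epsilon^{*}_s$) and the elementary fact that a minimum of lower bounds is a lower bound of the minimum, gives $\epsilon = \epsilon_{LB} = \epsilon_{\dproj}\le \epsilon_{\dlp}\le \epsilon_{IF}$, which is the claim.

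I expect the main obstacle to be bookkeeping rather than a deep difficulty. Two points need care. First, one must check that \G~is genuinely being run, on each sliced complex, under the projection ``distance'' $\dproj$, and that $\dproj$ satisfies the ray-monotonicity hypothesis required by the correctness statement of \G~(Theorem~\ref{thm:correctness:Geo}); otherwise the identification $\epsilon'=\epsilon_s$ in line 16 and the application of Theorem~\ref{thm:projection_distance} are not justified. Second, one must confirm that the five verified sub-functionalities, taken together, certify \emph{every} operation \G~performs — the initial polytope, each facet distance, each neighbor transition, each pop order, and the terminating boundary facet — so that the equivalence with Algorithm~\ref{alg:IF} is exact; this is precisely where the decomposition argument of Section~\ref{sec:verification} and Facts~\ref{fact:ReLU} and~\ref{fact:proj} are invoked. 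Once these are pinned down, the theorem follows immediately from the displayed chain of inequalities.
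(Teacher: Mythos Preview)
Your proposal is correct and follows the same route as the paper: reduce to the non-cryptographic certification algorithm and invoke the \G\ correctness (Theorem~\ref{thm:correctness:Geo}), the projection lower-bound (Theorem~\ref{thm:projection_distance}), and Fact~\ref{fact:proj}. The paper's own proof is in fact a single sentence citing exactly these three results, so your write-up is simply a more explicit unpacking of the same argument, including the (implicit in the paper) observation that the ZK layer does not alter the honest prover's output.
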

\begin{proof}
The proof of the above theorem follows directly from Theorem \ref{thm:correctness:Geo},  Theorem \ref{thm:projection_distance} and Fact \ref{fact:proj}.
\end{proof}
\section{Security Proof} \label{app:proof}
\begin{compactenum}
\item  \textbf{Completeness} 
\begin{gather}
\forall x, \mathbf{W}\\
\mathrm{Pr}\begin{bmatrix*}[l] \textsf{pp}\leftarrow \name.\textsf{KeyGen}(1^\lambda)\\\textsf{com}_{\mathbf{W}}\leftarrow \name.\textsf{Commit}(\mathbf{W},\textsf{pp},r)\\
(y,\epsilon,\pi)\leftarrow\name.\textsf{Prove}(\mathbf{W},x,\textsf{pp},r)
\\
\name.\textsf{Verify}(\textsf{com}_{\mathbf{W}},x,y,\epsilon,\pi,\textsf{pp})=1\end{bmatrix*} =1
    \end{gather}
\item
\textbf{Soundness}
\begin{gather}\mathrm{Pr} \begin{bmatrix*}[l] \textsf{pp}\leftarrow \name.\textsf{KeyGen}(1^\lambda)\\ (\mathbf{W^*},\textsf{com}_{\mathbf{W^*}},\textbf{X},\epsilon^*,y^*,\pi^*,r )\leftarrow \calA(1^\lambda,\textsf{pp})\\\textsf{com}_{\mathbf{W^*}}\leftarrow \name.\textsf{Commit}(\mathbf{W^*},r))\\\name.\textsf{Verify}(\textsf{com}_{\mathbf{W^*}},x,y^*,\epsilon^*,\pi^*,\textsf{pp})=1\\\big(\exists \tilde{x}, d(x,\tilde{x})\leq\epsilon \wedge \m(\mathbf{W^*},\mathbf{X})\neq \m(\mathbf{W^*},\mathbf{\tilde{X}})\big)\\ \hspace{3.8cm} \vee y\neq \m(\mathbf{W^*},\mathbf{X})  \end{bmatrix*} <negl(\lambda)\end{gather}
\item \textbf{Zero-Knowledge} Let $\lambda$ be the security parameter obtained from $\lambda, \textsf{pp}\leftarrow \name.KeyGen(1^\lambda)$
\begin{multline}
|\mathrm{Pr}[\textsf{Real}_{\calA,\mathbf{W}}(\textsf{pp})=1]-\mathrm{Pr}[\textsf{Ideal}_{\calA,\calS^{\calA}}(\textsf{pp})=1]|\\\leq \textsf{negl}(\lambda)
\end{multline}
\end{compactenum}
\begin{figure}\centering\fbox{\begin{varwidth}{0.7\columnwidth}$\textsf{Real}_{\calA,\mathbf{W}}(\textsf{pp}):$ 
\begin{compactenum}
\item $\textsf{com}_\textbf{W}\leftarrow \name.\textsf{Commit}(\mathbf{W},\textsf{pp},r)$ \item $x\leftarrow \calA(\textsf{com}_{\mathbf{W}},\textsf{pp})$\item $(y,\epsilon,\pi)\leftarrow\name.\textsf{Prove}(\mathbf{W},x,\textsf{pp},r)$
\item $b\leftarrow\calA(\textsf{com}_{\mathbf{W}},x,y,\epsilon,\pi,\textsf{pp}) $ \item Output $b$
\end{compactenum}
\end{varwidth}}
\end{figure}
\begin{figure}
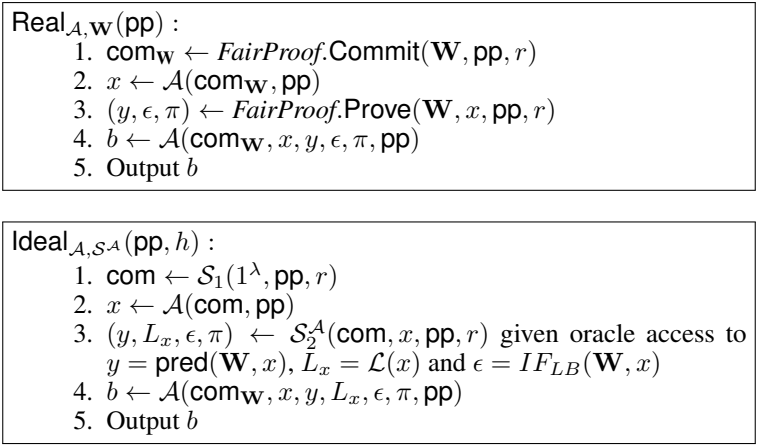
\centering\fbox{\begin{varwidth}{0.7\columnwidth}$\textsf{Ideal}_{\calA,\calS^\calA}(\textsf{pp},h):$ 
\begin{compactenum}
\item $\textsf{com}\leftarrow\calS_1(1^\lambda,\textsf{pp},r)$
\item $x\leftarrow \calA(\textsf{com},\textsf{pp})$
\item $(y,L_x,\epsilon,\pi)\leftarrow\calS_2^{\calA}(\textsf{com},x,\textsf{pp},r)$ given oracle access to $y=\textsf{pred}(\mathbf{W},x)$, $L_x=\mathcal{L}(x)$ and $\epsilon=IF_{LB}(\mathbf{W},x)$
\item $b\leftarrow\calA(\textsf{com}_{\mathbf{W}},x,y,L_x,\epsilon,\pi,\textsf{pp}) $ \item Output $b$
\end{compactenum}
\end{varwidth}}
\caption{Zero-knowledge games}
\end{figure}
\begin{proofs}
\par
\textbf{Completeness.} The completeness guarantee follows trivially from our construction. \\\\
\noindent\textbf{Soundness.} 
$\mathcal{L}(x)$ denotes the leakage function for \name, specifically, $\mathcal{L}(x)=\{n_1,\cdots,n_{|\S|}\}$, where $n_i$ denotes the number of facets traversed for the $i$-th value of the sensitive attribute $\S$.

Recall, the functioning of \G~can be summarized as follows:
\begin{compactenum}
\item Start traversing from the polytope containing $\x$.
\item Compute the distances to all the facets of the current polytope and store them.
\item Select the hitherto unseen facet with the smallest distance.
\item Stop if this is a boundary facet. 
\item Else, traverse next to the neighboring polytope that shares the current facet.
\end{compactenum}
A malicious prover can cheat in any (or a combination) of the above steps.
We will consider each of them separately as follows. 
\begin{lemma}[Soundness of \VPoly] Let $\P=\{x|\mathbf{A}x\leq \mathbf{B}\}$ be the correct polytope obtained from the piecewise-linear neural network with weights $\W$ for a given value of the sensitive features. For any polytope $\P'=\{\mathbf{A}'x<\mathbf{b}'\}$ such that $(\mathbf{A}\neq \mathbf{A}') \lor (\mathbf{b}\neq \mathbf{b'})$, we have
  \end{lemma}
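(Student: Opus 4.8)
The plan is to reduce soundness of $\VPoly$ to two standard assumptions already in play: the \emph{binding} property of the commitment scheme and the \emph{soundness} of the underlying succinct ZK proof system. The key structural observation is that, once the weights $\W$ and the sensitive-feature values $(s_1,\dots,s_k)$ are fixed, the polytope containing $\x$ is a \emph{deterministic} function of $(\x,\W,s)$: the honest $\VPoly$ (Alg.~\ref{alg:verify:polytope}) first evaluates $\x$ through the committed network to read off its (uniquely determined) ReLU activation code $\R$, and then reads off the canonical inequality description $\{x\mid \A x\le \mathbf{b}\}$ of the polytope attached to $\R$. Both computations are pushed into the ZK circuit, so an accepting proof $\pi$ certifies exactly the compound statement ``$\R$ is the activation pattern of $\x$ under the weights committed in $\CW$, and $(\A,\mathbf{b})$ is the canonical polytope description induced by $\R$ and $s$ on those weights.'' The conclusion I would establish is that the probability, over a PPT malicious prover $\calA$ given $\CW$, that it outputs a tuple $(\P',\R',\pi')$ with $\P'=\{x\mid \A' x<\mathbf{b}'\}$, $(\A\neq\A')\lor(\mathbf{b}\neq\mathbf{b}')$, and $\VPoly.\textsf{Verify}$ accepting, is at most $\textsf{negl}(\lambda)$.

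The steps, in order, would be: \emph{(i)} fix the correct output $(\A,\mathbf{b})$ and argue it is unique — the ReLU pattern is single-valued off a measure-zero set and the honest construction fixes a tie-breaking rule there, and the inequality system is emitted by a fixed deterministic subroutine, including exactly the $m=O(\mathrm{poly}(n))$ halfspaces coming from the neurons in a fixed row order and normalization; \emph{(ii)} assume toward contradiction that $\calA$ makes the verifier accept with non-negligible probability on some $\P'\neq\P$, so its transcript certifies a statement that is false relative to the value committed in $\CW$; \emph{(iii)} split into two cases — if the witness used by $\calA$ opens $\CW$ to weights $\W'\neq\W$, build an adversary breaking the binding of the commitment scheme, whereas if it uses $\W$ but an incorrect $\R'$ or an incorrect $(\A',\mathbf{b}')$, build an adversary breaking the soundness of the ZK proof system; \emph{(iv)} conclude that the accepting probability is bounded by the sum of the binding error and the ZK soundness error, both $\textsf{negl}(\lambda)$. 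Chaining this into the global soundness game of Sec.~\ref{app:proof} is then routine (this lemma handles the first of the five ways a malicious prover can cheat, namely misreporting the starting polytope of $\G$).

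The main obstacle I anticipate is step \emph{(i)}: making precise that ``$\P'\neq\P$ as written systems'' is genuinely a detectable circuit violation rather than an alternative but geometrically equivalent $H$-description of the same region. I would address this by insisting that the circuit checks the \emph{canonical} form produced by the honest algorithm — fixed number of rows $m$, fixed ordering indexed by the neurons, fixed sign/normalization convention, with the row perturbations induced by fixing the sensitive coordinates as in \textsf{ReducePolyDim} (Alg.~\ref{alg:app:n_kpoly}) — so that any syntactically different $(\A',\mathbf{b}')$ necessarily falsifies an arithmetic constraint of the circuit. Once the circuit is pinned to this canonical representation, the equivalence ``different inequality system $\Longleftrightarrow$ false statement'' holds and the reduction in steps \emph{(ii)}--\emph{(iv)} goes through with only the negligible slack inherited from the commitment and the ZK system.
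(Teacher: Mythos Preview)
Your proposal is correct and follows the same approach as the paper: both argue that since $\VPoly$ recomputes the polytope deterministically from the committed weights, a malicious prover who passes verification with a different $\P'$ must break the binding property of the commitment scheme. Your version is more careful than the paper's terse sketch---you make explicit the second case (correct opening of $\CW$ but a false circuit statement, which breaks ZK soundness rather than binding) and you flag the canonical-representation issue needed to equate ``$(\A',\mathbf{b}')\neq(\A,\mathbf{b})$'' with an actual circuit violation---but the underlying reduction is the same.
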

 \begin{gather} \mathrm{Pr}[\name.\textsf{Verify}(\textsf{com}_{\mathbf{W^*}},x,y^*,\epsilon^*,\pi^*,\textsf{pp})=1]< \textsf{negl}(\lambda)
    \end{gather}
\begin{proofs} As shown in Alg. \ref{alg:verify:polytope}, the verification process re-computes the correct polytope from the committed model weights. The only way the prover can cheat is if they can produce a $P'$ such that $Open(\textsf{com}_\P)=\P'$ which violates the binding property of the commitment scheme. 
\end{proofs}
\begin{lemma} [Soundness of \VDist] For a given facet $\F=\{Ax\leq b\}$, data point $\x$, and value $d'$ such that $d'\neq \big |\frac{b-A^T\x}{\lVert A\rVert}\big |$, we have: \end{lemma}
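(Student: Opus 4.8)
The plan is to mirror the soundness argument already given for \VPoly: reduce any successful cheating to a violation of one of the two underlying cryptographic primitives --- the binding property of the commitment scheme or the soundness of the zk-SNARK backend --- and then conclude by a union bound. \textbf{First}, I would observe that \VDist (Alg.~\ref{alg:verify:distance}) is never invoked in isolation: inside \name~(Alg.~\ref{alg:name}) every facet $\F=\{x \mid a^Tx=b\}$ handed to \VDist is a facet of a polytope that has already been re-derived from the committed weights $\CW$, either via \VPoly (the initial polytope) or via \VNeighbor (a traversed neighbor). Hence the defining pair $(a,b)$, together with the public query $\x$, is a deterministic function of $\CW$, and by the binding property a cheating prover cannot substitute a different inequality $(a',b')$ except with probability $\textsf{negl}(\lambda)$; this is exactly where the argument hooks into, and inherits, the soundness of \VPoly / \VNeighbor.

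\textbf{Second}, with $(a,b)$ and $\x$ now fixed, the value $d=\lvert (b-a^T\x)/\lVert a\rVert\rvert$ is the unique output of a fixed arithmetic relation, and the circuit underlying the proof $\pi$ enforces precisely that relation in the division- and root-free form appropriate for an arithmetic circuit over the SNARK's prime field, namely $d\ge 0$ together with $d^2\cdot(a^Ta)=(b-a^T\x)^2$ (with the usual fixed-point scaling). If a malicious prover outputs some $d'\neq d$ together with an accepting proof $\pi^*$, then $\pi^*$ is an accepting proof of a false statement on a now-determined witness, contradicting soundness of the zk-SNARK, hence occurring with probability at most $\textsf{negl}(\lambda)$. \textbf{Third}, a union bound finishes it: the event that \name.\textsf{Verify} accepts while $d'\neq \lvert (b-a^T\x)/\lVert a\rVert\rvert$ is contained in the union of a binding break and a SNARK-soundness break, so it has probability at most $\textsf{negl}(\lambda)$, which is the claimed bound.

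The genuinely delicate point --- and the step I expect to be the main obstacle --- is the faithful arithmetization invoked in the second step: the quantity being verified is real-valued and involves a Euclidean norm and a division, whereas the SNARK operates over integers modulo a prime. The argument must spell out that proving the polynomial identity $d^2(a^Ta)=(b-a^T\x)^2$ plus the range and non-negativity constraints does force $d$ (after de-scaling) to coincide with the intended projection distance, and it should note that, consistently with Thm.~\ref{thm:projection_distance}, any fixed-point rounding only shrinks the reported value and therefore still yields a valid conservative lower bound $\epsilon_{LB}$. Once this arithmetization lemma is in place, the remainder is a routine two-primitive reduction identical in shape to the \VPoly case.
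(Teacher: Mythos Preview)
Your argument is correct and follows the same reduction-to-primitives strategy as the paper, but it is considerably more careful than the paper's own one-line sketch. The paper simply says that \VDist ``re-computes the correct distance'' from committed data and that any discrepancy would violate the binding property of the commitment scheme; it does not separately invoke SNARK soundness, nor does it discuss arithmetization at all. Your decomposition---binding fixes the facet coefficients $(a,b)$ coming from $\CW$, then SNARK soundness pins down the arithmetic $d^2\,(a^Ta)=(b-a^T\x)^2$, then a union bound---is the more rigorous form of the same idea and in fact clarifies a point the paper's sketch leaves implicit (there is no separate $\textsf{com}_d$ in Alg.~\ref{alg:verify:distance}; the value $d$ is output in the clear, so the real work is done by the proof system, not by a commitment to $d$). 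The arithmetization caveat you flag is genuine and is simply not addressed in the paper, which treats all of these lemmas at the proof-sketch level.
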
 
\begin{gather} \mathrm{Pr}[\name.\textsf{Verify}(\textsf{com}_{\mathbf{W^*}},x,y^*,\epsilon^*,\pi^*,\textsf{pp})=1]< \textsf{negl}(\lambda)
    \end{gather}

\begin{proofs} The verification process (Alg. \ref{alg:verify:distance}) re-computes the correct distance. Hence, the only way the prover can cheat is if they can produce a $d'$ such that $Open(\textsf{com}_d)=d'$ which violates the binding property of the commitment scheme. 
\end{proofs}

\begin{lemma}[Soundness of \VPQ]
Let $\textbf{d}=\{d_1,\cdots,d_k\}$ be a set of values such that $d_{min}=\min_i d_i$. For any value $d'$ such that $d' > d_{min}$, we have:
\end{lemma}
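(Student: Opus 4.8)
The plan is to follow the same template used for the soundness of \VPoly\ and \VDist: show that an accepting transcript of \VPQ\ deterministically certifies the one inequality that matters, so that a prover who violates that inequality must either break the binding of the commitment scheme or break the soundness of the underlying zk-SNARK, each of which occurs only with probability $\le \textsf{negl}(\lambda)$.

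First I would unpack what the verifier's circuit actually checks. In Alg.~\ref{alg:verify:PQ}, given the claimed next facet with distance $d'$ and the list $\textbf{F} = \{(\F_1,d_1),\dots,(\F_k,d_k)\}$ of all hitherto-unseen facets with their distances, the circuit loops over $i \in [k]$ and aborts with $\bot$ as soon as it finds $d' > d_i$. Hence an accepting transcript certifies the arithmetic statement ``$d' \le d_i$ for every $i$'', i.e. that $d'$ is a lower bound for the whole list, and in particular $d' \le \min_i d_i = d_{min}$.

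Next I would argue the contrapositive. Suppose $d' > d_{min}$. Then some index $j$ has $d_j = d_{min} < d'$, so the honest evaluation of the verifier's circuit on the committed/authenticated values returns $\bot$. For \textsf{Verify} to nonetheless output $1$, the prover must produce a proof $\pi$ for a statement that is false on those inputs; by soundness of the zk-SNARK this happens with probability $\le \textsf{negl}(\lambda)$. The only other avenue is to feed \VPQ\ a doctored list --- one that omits $d_{min}$ or inflates it --- inconsistent with the distances already fixed by the preceding \VDist\ proofs (and the facet set fixed by \VPoly/\VNeighbor); but any such substitution requires opening a commitment to a value other than the one it is bound to, contradicting binding, again except with negligible probability. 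Combining the two cases and a union bound over the (polynomially many) invocations yields the claimed inequality, matching the boilerplate conclusion of the preceding soundness lemmas.

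The main obstacle I anticipate is not the inequality logic, which is trivial, but making the composition precise: the lemma as stated treats $\textbf{d}$ as a fixed list, so I must be careful to say exactly why a cheating prover cannot circumvent \VPQ\ by tampering with its \emph{inputs} rather than its \emph{output}. Concretely, I would phrase the statement so that the $d_i$ are exactly the values already certified by \VDist\ and the $\F_i$ exactly those inserted into $T$ in Alg.~\ref{alg:name}, and then invoke binding once more to rule out substitution. Getting this interface clean is what lets the end-to-end soundness argument in App.~\ref{app:proof} (feeding into Thm.~\ref{thm:correctness:name}) chain all the per-step lemmas uniformly.
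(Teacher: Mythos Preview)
Your proposal is correct and follows essentially the same approach as the paper: the circuit in Alg.~\ref{alg:verify:PQ} checks $d' \le d_i$ for every $i$, so a prover who gets $d' > d_{min}$ accepted must have swapped out the committed list $\textbf{d}$, which contradicts binding. Your write-up is in fact more careful than the paper's one-line sketch --- you explicitly separate the ZKP-soundness failure from the commitment-binding failure and flag the composition issue with \VDist's outputs --- but the underlying argument is the same.
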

\begin{gather}
\mathrm{Pr}[\name.\textsf{Verify}(\textsf{com}_{\mathbf{W}},x,y^*,\epsilon^*,\pi^*,\textsf{pp})=1]< \textsf{negl}(\lambda)
\end{gather}
\begin{proofs} The verification checks the minimality of the given value against all values in $\textbf{d}$ (Alg. \ref{alg:verify:PQ}). The only way to cheat would require producing a $\textbf{d}$ with a different minimum which violates the binding property of the commitment scheme.
\end{proofs}

\begin{lemma}[Soundness of \VBoundary]
Consider a piecewise-linear neural network with weights $\mathbf{W}$. 
For any facet $\F$ such that which is not a boundary facet, we have
\end{lemma}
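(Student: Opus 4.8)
The plan is to follow the template established by the preceding soundness lemmas (Soundness of \VPoly, \VDist, \VPQ): reduce any successful cheating to a violation of the binding property of the commitment scheme. First I would make explicit what \VBoundary (Alg. \ref{alg:verify:boundary}) recomputes on the verifier's side. Given the committed weights $\textsf{com}_\W$ and the ReLU code $\R$ of the current polytope $\P$ (the code $\R$ itself having been pinned down in an earlier step by \VPoly or \VNeighbor, whose soundness I would invoke), the affine piece $f_\R(x)=W_\R x + b_\R$ is a deterministic function of these committed quantities. The algorithm then (i) checks the containment $\A z \le \mathbf b$ certifying that the supplied representative point $z$ lies on the facet $\F=\{x\mid \A x\le \mathbf b\}$, and (ii) evaluates the conjunction $b \leftarrow \bigwedge_{i\ge 1}\bigl(f_\R(z)[0]=f_\R(z)[i]\bigr)$. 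Every one of these is a fixed arithmetic circuit over committed inputs, so the only leverage a malicious prover has is to open one of the underlying commitments (the weights, $\R$, or an intermediate wire) to a value other than the one it is bound to; by binding this succeeds with probability at most $\textsf{negl}(\lambda)$, and a union bound over the polynomially many commitments touched by \VBoundary keeps the aggregate error negligible.

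The second ingredient is the exactness of the predicate being checked: that the recomputed bit $b$ equals $1$ iff $\F$ is genuinely a boundary facet. For the direction relevant to the overall soundness of \name~--- ruling out that a \emph{true} boundary facet is passed off as a non-boundary facet, which is precisely the move a prover would use to push \G~past the nearest boundary and over-report $\epsilon$ --- I would argue as follows. By Fact \ref{fact:ReLU} the two polytopes adjacent to $\F$ differ in a single ReLU and their affine pieces agree on $\F$ by continuity, so crossing $\F$ changes the predicted label only through a tie of the top logits of $f_\R$ on $\F$; if $\F$ coincides with the decision boundary then this tie holds at \emph{every} point of $\F$, in particular at any $z$ with $\A z\le\mathbf b$, forcing the honest recomputation to output $b=1$. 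Hence a prover claiming such an $\F$ is not a boundary facet must make the logit-equality test evaluate to $0$ on a point that provably lies in $\F$, which is impossible unless some commitment was opened dishonestly --- back to the binding argument. The converse direction (a non-boundary facet cannot be certified as a boundary facet with a genuinely interior witness $z$) follows symmetrically, using that the argmax pattern of $f_\R$ restricted to the affine hull of $\F$ is constant on $\mathrm{relint}(\F)$, so that absence of a tie at an interior $z$ witnesses absence of a tie on all of $\mathrm{relint}(\F)$.

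The step I expect to be the genuine obstacle is making this geometric characterization watertight in full generality. Two points need care: (i) degenerate configurations in which $z$ fails to lie in the relative interior of $\F$, or in which the decision boundary inside $\P$ meets $\F$ only on a lower-dimensional subface --- here one must lean on the pre-processing guarantee that the committed representative point of a facet is relatively interior, and on a genericity/nondegeneracy assumption on the network; and (ii) the multiclass case, where the natural notion of a boundary facet is that the two largest logits of $f_\R$ tie, whereas Alg. \ref{alg:verify:boundary} tests that \emph{all} logits coincide --- reconciling these (or, as in the Sec. \ref{sec:verification} exposition, restricting attention to binary classifiers, for which the two notions coincide) is where the argument is most delicate. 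Everything cryptographic --- the reduction to binding, the treatment of the pieces of \VBoundary that are precomputed offline, and the final union bound --- is routine and parallels the earlier soundness lemmas essentially verbatim.
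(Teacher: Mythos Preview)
Your first paragraph \emph{is} the paper's proof: the paper's entire argument for this lemma is a single sentence observing that \VBoundary\ recomputes the linear function $f_\R$ from the committed weights, so a cheating prover would have to produce a different $f'$ opening the same commitment, which violates binding. Everything you add beyond that --- the geometric argument that the logit-equality test actually characterizes boundary facets, the appeal to Fact~\ref{fact:ReLU} and continuity, and the union bound --- is more than the paper supplies.

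Your second and third paragraphs go genuinely further than the paper and raise points the paper's proof sketch does not touch. In particular, your observation that the direction relevant to the overall soundness of \name\ is the \emph{opposite} of the one the lemma literally states (a malicious prover wants to mislabel a true boundary facet as non-boundary so as to traverse past it and inflate $\epsilon$, not the other way around) is a real subtlety the paper elides. Likewise, your concern about the multiclass case --- Alg.~\ref{alg:verify:boundary} tests that \emph{all} logits coincide, whereas a decision boundary only requires the top two to tie --- and about representative points that are not relatively interior are legitimate gaps in the algorithm/proof that the paper's one-line argument does not address (the exposition in Sec.~\ref{sec:verification} silently restricts to binary classifiers, which dissolves the first issue but not the second). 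So: your cryptographic core matches the paper, and your additional analysis is both correct in spirit and more careful than what the paper provides.
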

\begin{gather}
\mathrm{Pr}[\name.\textsf{Verify}(\textsf{com}_{\mathbf{W}},x,y^*,\epsilon^*,\pi^*,\textsf{pp})=1]\leq \textsf{negl}(\lambda)
\end{gather}
\begin{proof} The verification algorithm computes the linear function corresponding to the given activation code (Alg. \ref{alg:verify:boundary}. A prover can cheat here only if they can compute a different linear function $f'$ which would require violating the binding property of the commitment scheme.
\end{proof}

\begin{lemma}[Soundness of \VNeighbor] Let $\P=\{x|\mathbf{A}x\leq \mathbf{b}\}$ be a polytope belonging to the polyhedral complex of the piecewise-linear neural network with weights $\mathbf{W}$ and let $\F \in \N(\P)$. Let $\bar{\P}=\{x|\mathbf{\bar{A}}x\leq \mathbf{\bar{b}}\}$ and $\P$ be neighboring polytopes, sharing the facet $\F$, i.e.,  $\bar{\P} \in \N(\F)\setminus \P$. Let $z \in \R^n$  be a data point. For any polytope $P'=\{x|\mathbf{A'}x\leq \mathbf{b'}\}$ such that $(\mathbf{\bar{A}}\neq \mathbf{A'}) \wedge (\mathbf{\bar{b}}\neq \mathbf{b'})$, we have
\end{lemma}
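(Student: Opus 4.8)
The plan is to mirror the structure of the preceding soundness lemmas: reduce any cheating strategy for \VNeighbor~that makes the verifier accept with non-negligible probability to a violation of the binding property of the commitment scheme (and, for the ``acceptance $\Rightarrow$ the circuit's predicate holds'' step, to soundness of the underlying succinct ZKP). Recall from Alg.~\ref{alg:verify:neighbor} that \VNeighbor~first calls \VPoly~on the prover-supplied representative point $z$ to obtain a candidate neighbor $\P'$ with ReLU activation code $\R'$, then checks $|\R-\R'|_1=1$, and finally checks that $\F$ is a facet common to both $\P$ and $\P'$, returning $\bot$ if either test fails. All three steps are re-executed inside the statement certified by $\pi^*$, so a successful verification already implies that none of the $\bot$ branches fired.

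First I would invoke the soundness of \VPoly~(established above) on the inner call: the polytope $\P'=\{x\mid\mathbf{A}'x\le\mathbf{b}'\}$ returned there is the \emph{genuine} polytope of $\m$ associated with activation code $\R'$, since presenting any other $(\mathbf{A}',\mathbf{b}')$ for that code would require opening $\CW$ to a second consistent value, contradicting binding. Conditioned on acceptance we may therefore treat $\P'$ as the true polytope whose code is $\R'$. Next I would pin down $\R'$: by Fact~\ref{fact:ReLU} the activation codes of two neighboring polytopes differ in exactly one coordinate, and that coordinate is the one associated with their shared facet. Since $\P$ has code $\R$ and the pair $(\P',\P)$ is forced to share $\F$, the code $\R'$ must equal $\R$ with the single bit corresponding to $\F$ flipped, which is precisely the activation code of $\bar{\P}$. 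Combining the two observations gives $\mathbf{A}'=\mathbf{\bar{A}}$ and $\mathbf{b}'=\mathbf{\bar{b}}$, contradicting the hypothesis $(\mathbf{\bar{A}}\neq\mathbf{A}')\wedge(\mathbf{\bar{b}}\neq\mathbf{b}')$; hence the verifier accepts with probability at most $\textsf{negl}(\lambda)$, which is the asserted bound, to be displayed in the same form as the adjacent lemmas.

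The step I expect to require the most care is that $z$ is chosen by the prover: a malicious prover could supply a $z$ that does not lie in the interior of $\bar{\P}$ — for instance a point on some facet, or a point in a different polytope whose code happens to be at Hamming distance one from $\R$. I would handle this by noting that \VPoly~merely computes the activation code of whatever $z$ it is handed; if that code is not the one belonging to $\bar{\P}$, then by Fact~\ref{fact:ReLU} the unique code at Hamming distance one from $\R$ whose polytope also contains $\F$ is that of $\bar{\P}$, so the conjunction of the Hamming-distance test $|\R-\R'|_1=1$ and the facet-membership test $\F\in\N(\P')\cap\N(\P)$ fails and the verifier rejects. The remaining bookkeeping — that the $\bot$ returns are encoded in the verification circuit so that soundness of the succinct ZKP rules them out, and that every recomputation from $\CW$ inherits binding — is routine and identical in spirit to the earlier lemmas.
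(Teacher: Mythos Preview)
Your proposal is correct and follows essentially the same line as the paper's (very terse) proof sketch: invoke the soundness of \VPoly~on the inner call to fix $\P'$ as the genuine polytope for the recomputed code $\R'$, then argue that the Hamming-distance and shared-facet checks force $\R'$ to be the code of $\bar{\P}$, so any discrepancy would contradict binding of the commitment. Your treatment is in fact more careful than the paper's, which does not explicitly address the adversarial choice of $z$ or spell out the role of Fact~\ref{fact:ReLU}; but the underlying strategy is the same.
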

\begin{gather}
\mathrm{Pr}[\name.\textsf{Verify}(\textsf{com}_{\mathbf{W}},x,y^*,\epsilon^*,\pi^*,\textsf{pp})=1]< \textsf{negl}(\lambda)
\end{gather}

\begin{proofs}  The verification algorithm first checks whether $\bar{P}$ contains the reference point $z$ (Alg. \ref{alg:verify:neighbor}). The soundness of this follows from $\VPoly$. Cheating on the next steps (checking the hamming distance and facet intersection) means that the prover is essentially able to generate a polytope $P'$ such that $Open(\textsf{com}_\P)=\P'$ which violates the binding property of the commitment scheme. 
\end{proofs}

\textbf{Zero-Knowledge.} The zero-knowledge property follows directly
from the commitment scheme and the zero-knowledge backend proof system we use. 
We note that the
zero-knowledge proof protocol itself is not the focus of this paper; instead, we show how we can use
existing zero-knowledge proof protocols to provide verifiable individual fairness certification in a smart way for high
efficiency. 
\end{proofs}

\section{Evaluation Cntd.} \label{app:evaluation}

\begin{table}[ht]
  \centering
  \smallskip
  \scalebox{1}{
  \begin{tabular}{l|c|c|c|c}

    Dataset-Model & Online  (in mins) & Offline  (in mins) & Improvement & Traversals\\
    \toprule
    
    German (4,2) &4.90 $\pm$ 0.12& 3.61 $\pm$ 0.19 & 1.74$\times$ & 40 $\pm$ 3 
    \\
     \hline

    German (2,4) & 1.17 $\pm$ 0.02& 0.67 $\pm$ 0.03 & 1.57 $\times$ & 13$\pm$ 1
    \\

    \midrule

    Credit (4,2) &  3.52 $\pm$ 0.08 & 2.31 $\pm$ 0.10 & 1.66$\times$ & 28 $\pm$ 2
    \\

    \hline

    Credit (2,4) & 2.08 $\pm$ 0.04 & 1.11 $\pm$0.07 & 1.49 $\times$ & 25 $\pm$ 1 
    \\

    \midrule

    Adult (4,2) &  3.94 $\pm$0.10 & 1.72 $\pm$ 0.08 & 1.43 $\times$ & 41 $\pm$ 3 
    \\

    \hline
    Adult (8,2)  &  3.94 $\pm$ 0.30 & 1.34 $\pm$ 0.08 & 1.36 $\times$ & 38 $\pm$ 8 
    \\
    \bottomrule
    
    \end{tabular}}
      \caption{\label{tab:proof} Time for proof generation averaged over 100 randomly sampled data points. Mean and standard error are reported for each dataset-model. Offline computations are done in the initial setup phase of \name while Online computations are done for every new query. Improvement = (Online time + Offline time)/ Online time. Traversals gives the total number of iterations (also total number of popped facets) of \textsf{GeoCert} ran by \name.}
\end{table}

\begin{figure*}[hbt!]
\center
    \begin{subfigure}[b]{0.333\linewidth}
        \centering
         \includegraphics[width=\linewidth]{Figure/4_2_wd_2.5_Histogram_of_Radii_for_credit.pdf}
            \caption{\textit{Credit, wd=2.5}}
        \label{fig:credit:fair42}
    \end{subfigure}
    \begin{subfigure}[b]{0.333\linewidth}
    \centering \includegraphics[width=\linewidth]{Figure/4_2_wd_0.2_Histogram_of_Radii_for_adult.pdf}   
\caption{\textit{Adult, wd=0.2} }
        \label{fig:adult:fair42}\end{subfigure}
    \begin{subfigure}[b]{0.333\linewidth}
    \centering    \includegraphics[width=\linewidth]{Figure/4_2_wd_10_Histogram_of_Radii_for_german.pdf}
    \caption{\textit{German, wd=10} }
        \label{fig:german:fair42}\end{subfigure}
        \caption{Histogram of fairness parameter $\epsilon$ for fair models of size (4,2). `wd' represents the values of the Weight decay parameter.}
        \label{fig:fair42}
        \end{figure*}

\begin{figure*}[hbt!]
\center
    \begin{subfigure}[b]{0.333\linewidth}
        \centering
         \includegraphics[width=\linewidth]{Figure/4_2_wd_0_Histogram_of_Radii_for_credit.pdf}
            \caption{\textit{Credit}}
        \label{fig:credit:unfair42}
    \end{subfigure}
    \begin{subfigure}[b]{0.333\linewidth}
    \centering \includegraphics[width=\linewidth]{Figure/4_2_wd_0_Histogram_of_Radii_for_adult.pdf}   
\caption{\textit{Adult} }
        \label{fig:adult:unfair42}\end{subfigure}
    \begin{subfigure}[b]{0.333\linewidth}
    \centering    \includegraphics[width=\linewidth]{Figure/4_2_wd_0_Histogram_of_Radii_for_german.pdf}
    \caption{\textit{German} }
        \label{fig:german:unfair42}\end{subfigure}
        \caption{Histogram of fairness parameter $\epsilon$ for unfair models of size (4,2). Weight decay is set to zero here for all.}
        \label{fig:unfair42}
        \end{figure*}

\begin{figure*}[hbt!]
\center
    \begin{subfigure}[b]{0.333\linewidth}
        \centering
         \includegraphics[width=\linewidth]{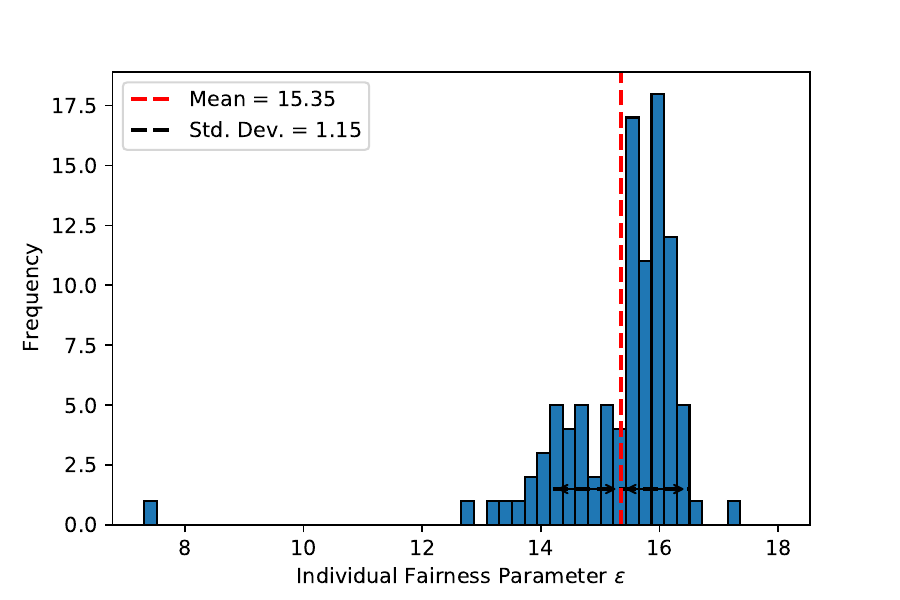}
            \caption{\textit{Credit, wd=3}}
        \label{fig:credit:fair82}
    \end{subfigure}
    \begin{subfigure}[b]{0.333\linewidth}
    \centering \includegraphics[width=\linewidth]{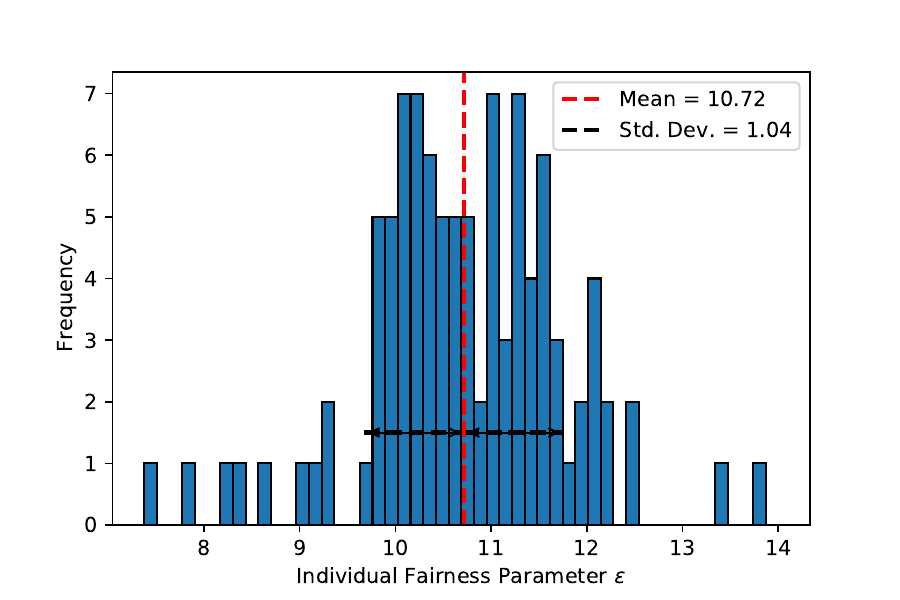}   
\caption{\textit{Adult, wd=0.2} }
        \label{fig:adult:fair82}\end{subfigure}
    \begin{subfigure}[b]{0.333\linewidth}
    \centering    \includegraphics[width=\linewidth]{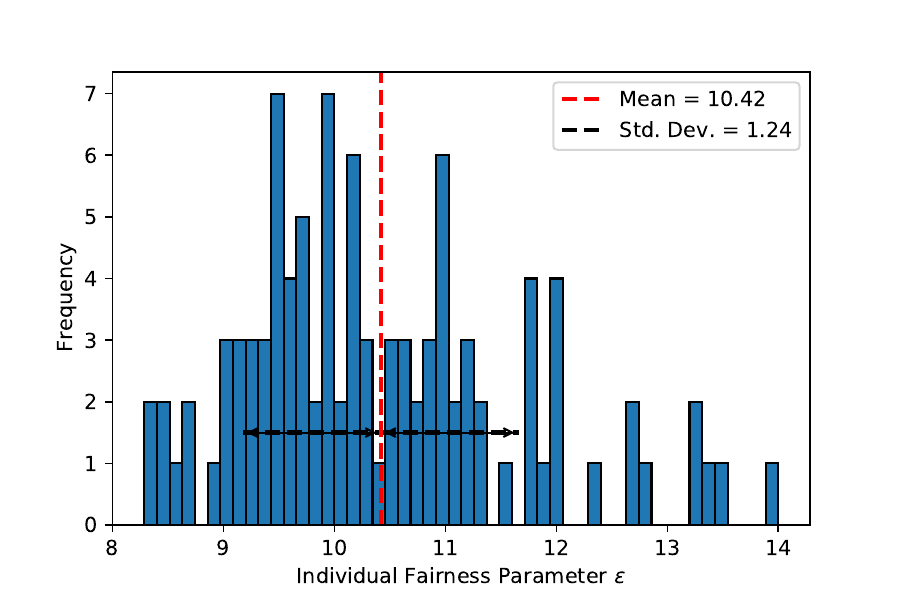}
    \caption{\textit{German, wd=10} }
        \label{fig:german:fair82}\end{subfigure}
        \caption{Histogram of fairness parameter $\epsilon$ for fair models of size (8,2). `wd' represents the values of the Weight decay parameter.}
        \label{fig:fair82}
        \end{figure*}

\begin{figure*}[hbt!]
\center
    \begin{subfigure}[b]{0.333\linewidth}
        \centering
         \includegraphics[width=\linewidth]{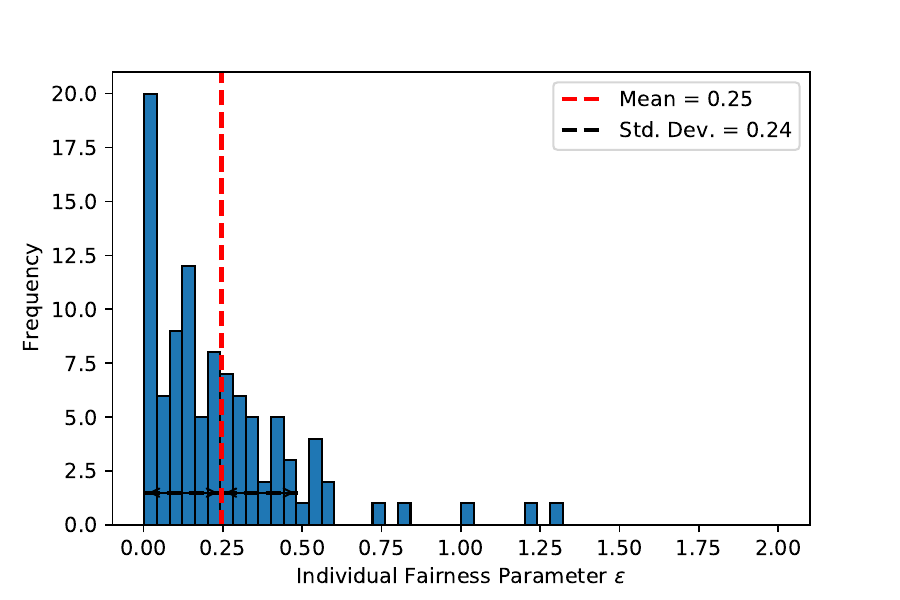}
            \caption{\textit{Credit}}
        \label{fig:credit:unfair82}
    \end{subfigure}
    \begin{subfigure}[b]{0.333\linewidth}
    \centering \includegraphics[width=\linewidth]{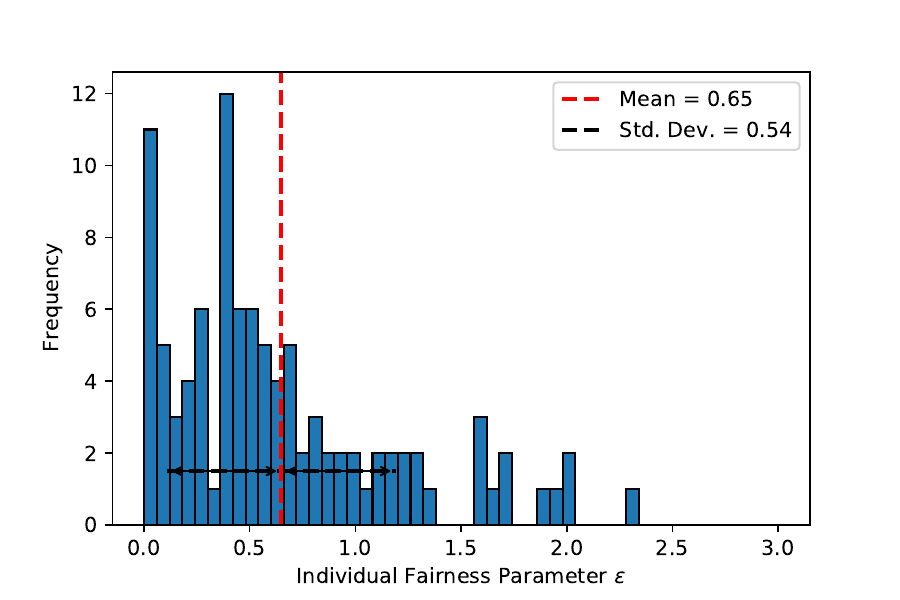}   
\caption{\textit{Adult} }
        \label{fig:adult:unfair82}\end{subfigure}
    \begin{subfigure}[b]{0.333\linewidth}
    \centering    \includegraphics[width=\linewidth]{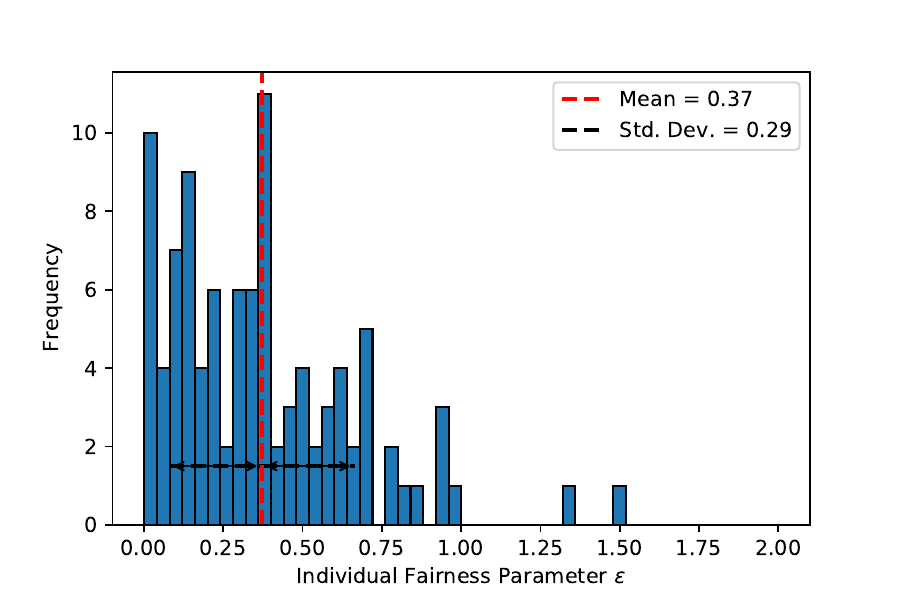}
    \caption{\textit{German} }
        \label{fig:german:unfair82}\end{subfigure}
        \caption{Histogram of fairness parameter $\epsilon$ for unfair models of size (8,2). Weight decay is set to zero here for all.}
        \label{fig:unfair82}
        \end{figure*}

\begin{figure*}[bht!] 
\center
\begin{subfigure}[b]{0.32\linewidth}
\centering \includegraphics[width=\linewidth]{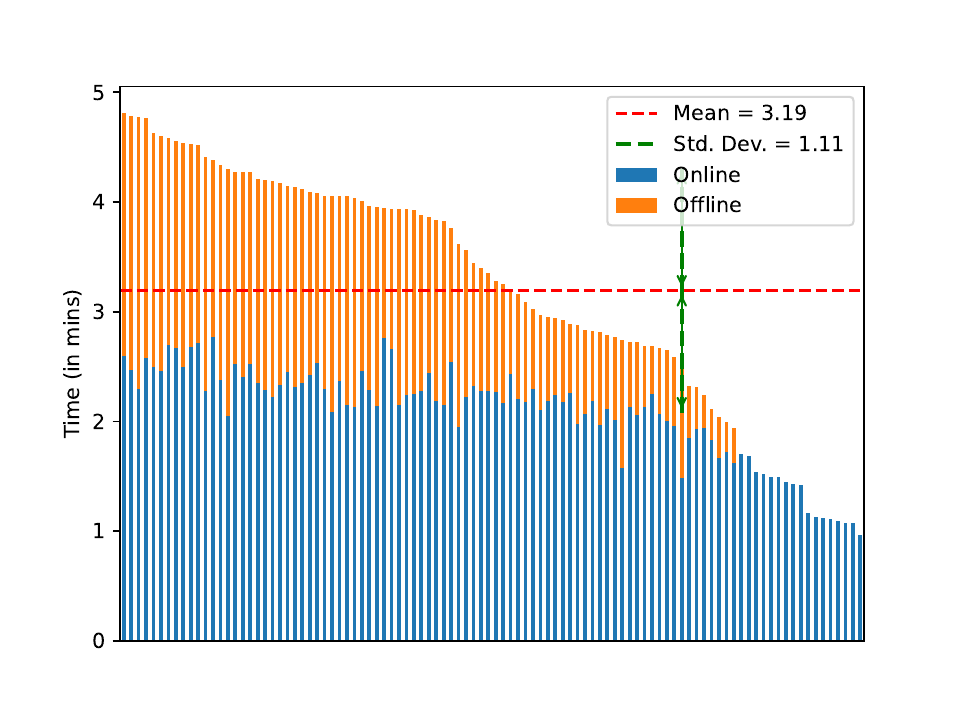}
\caption{\textit{Credit (2,4)}}
\label{fig:credit:online_offline24}\end{subfigure}
\begin{subfigure}[b]{0.32\linewidth}
\centering \includegraphics[width=\linewidth]{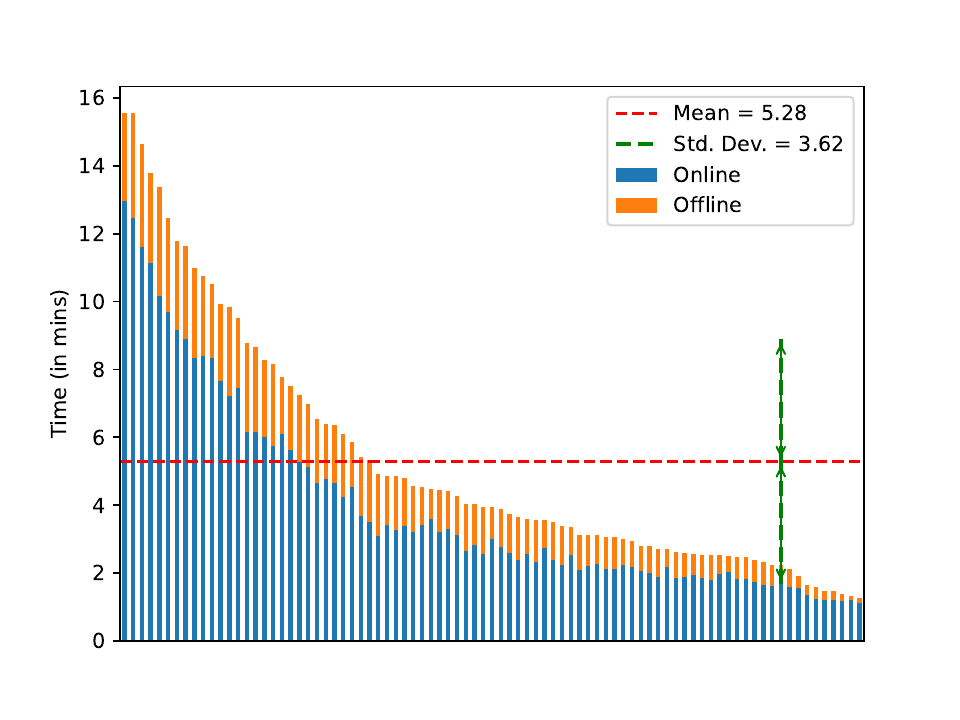}   
\caption{\textit{Adult (8,2)}}
\label{fig:adult:online_offline82}\end{subfigure}
\begin{subfigure}[b]{0.36\linewidth}
\centering   \includegraphics[width=\linewidth]{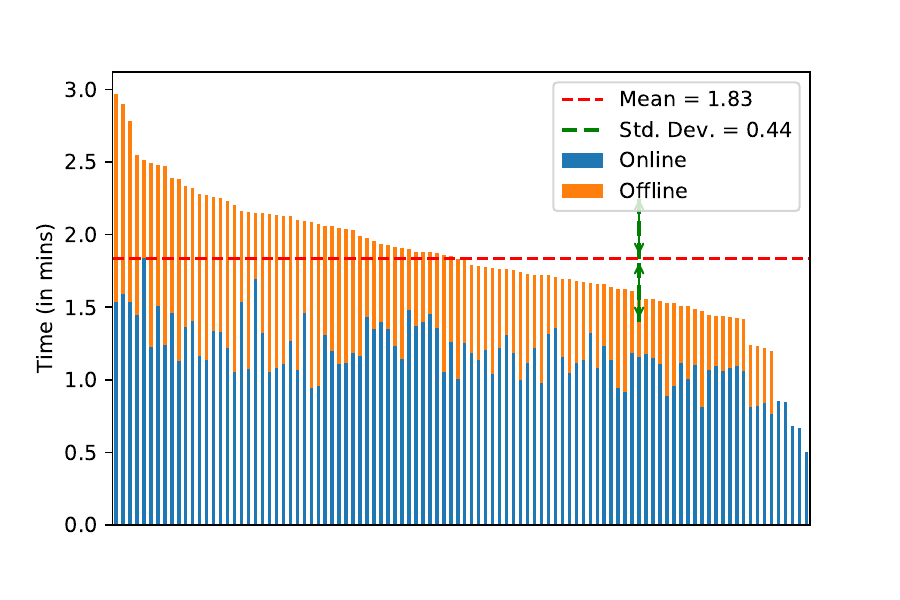}
\caption{\textit{German (2,4)}}
\label{fig:german:online_offline24}\end{subfigure}\\
\begin{subfigure}[b]{0.345\linewidth}
\centering\includegraphics[width=\linewidth]{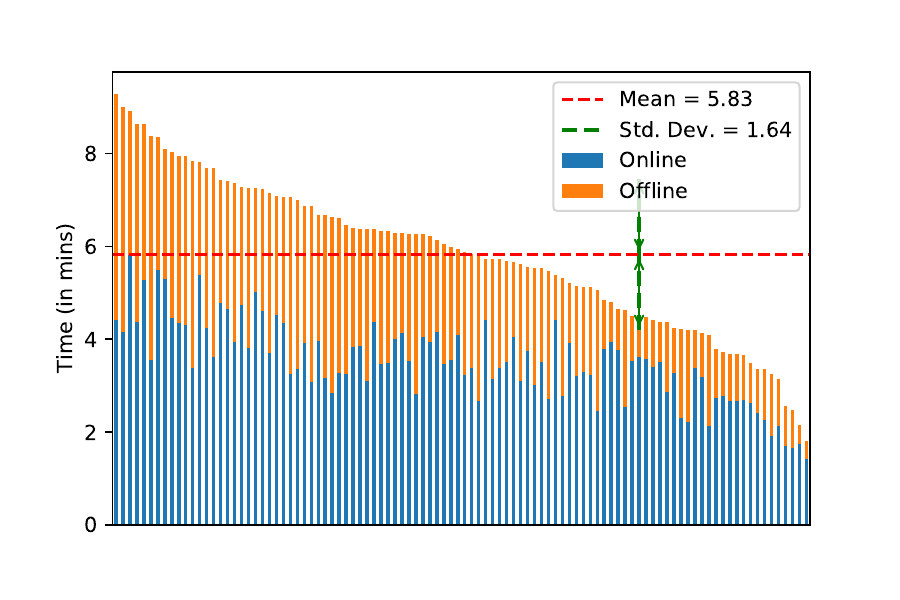}
    \caption{\textit{Credit (4,2)}}
\label{fig:credit:online_offline42}\end{subfigure}
\begin{subfigure}[b]{0.31\linewidth}
\centering \includegraphics[width=\linewidth]{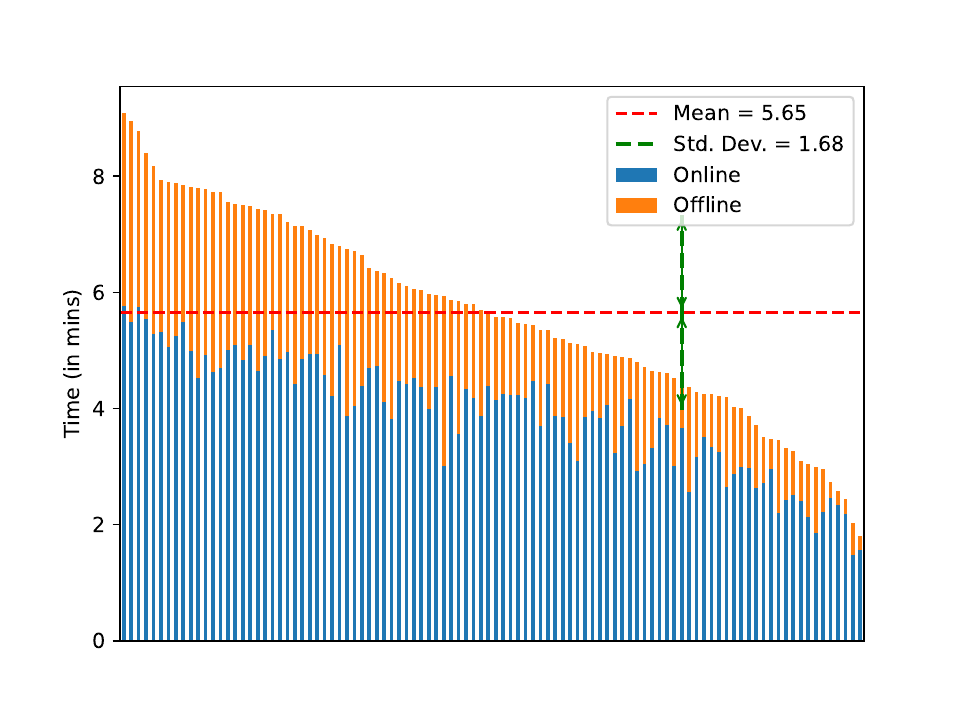}   
\caption{\textit{Adult (4,2)}}
\label{fig:adult:online_offline42}\end{subfigure}
\begin{subfigure}[b]{0.345\linewidth}
\centering    \includegraphics[width=\linewidth]{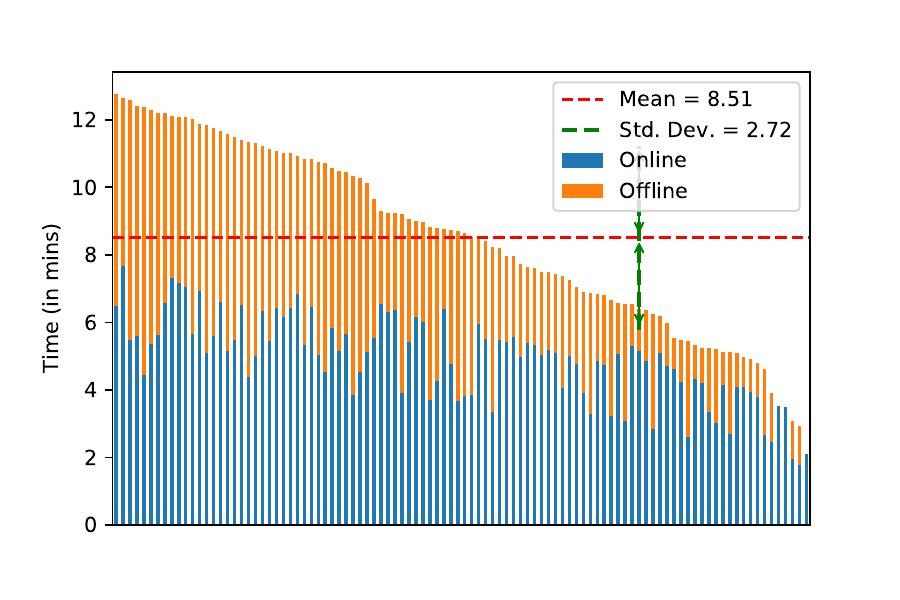}
\caption{\textit{German (4,2)}}
\label{fig:german:online_offline42}\end{subfigure}

 \caption{Proof generation time for 100 random data points.} 
 \label{fig:proof}
\end{figure*}

\begin{figure*}[hbt!] 
\center
\begin{subfigure}[b]{0.32\linewidth}
    \centering
     \includegraphics[width=\linewidth]{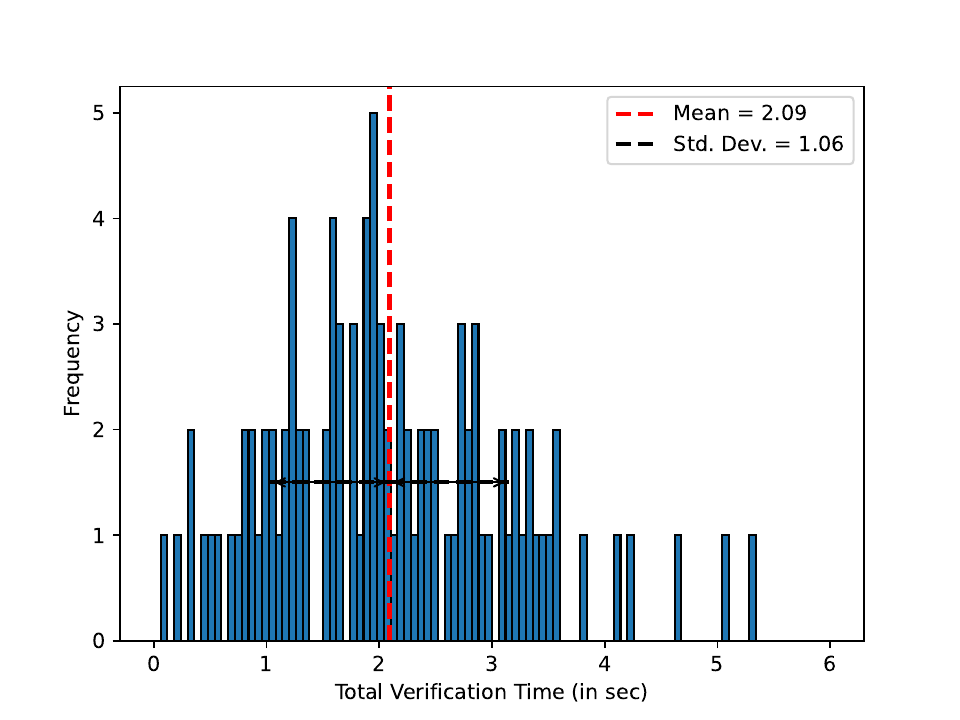}
        \caption{\textit{Credit (2,4)}}
    \label{fig:credit:verify}
\end{subfigure}
\begin{subfigure}[b]{0.32\linewidth}
\centering \includegraphics[width=\linewidth]{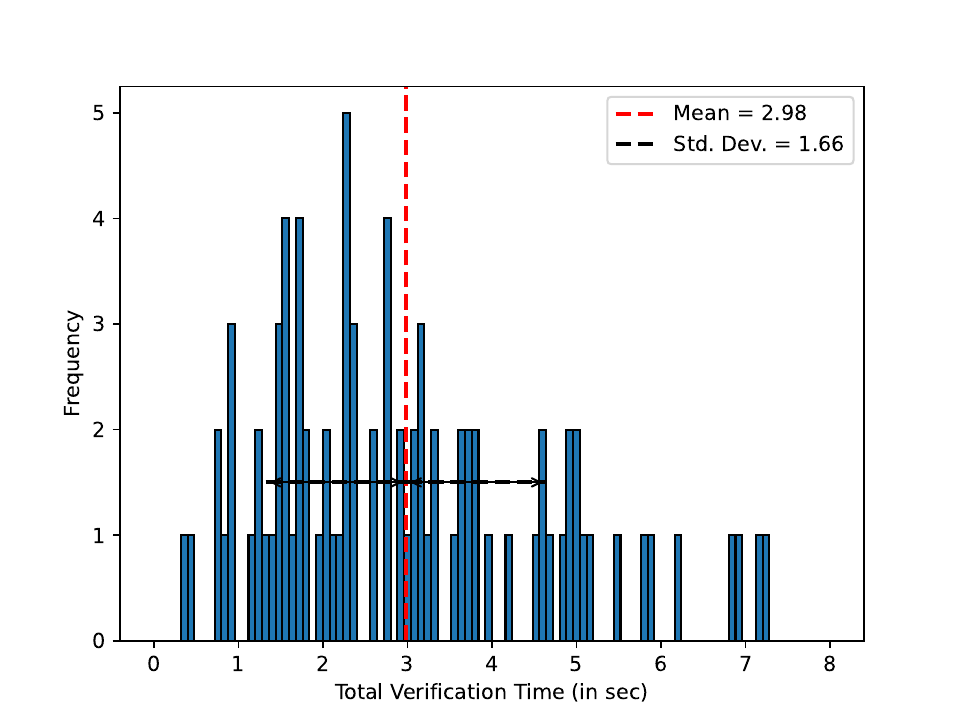}   
\caption{\textit{Adult (8,2)}}
    \label{fig:adult:verify}\end{subfigure}
\begin{subfigure}[b]{0.36\linewidth}
\centering    \includegraphics[width=\linewidth]{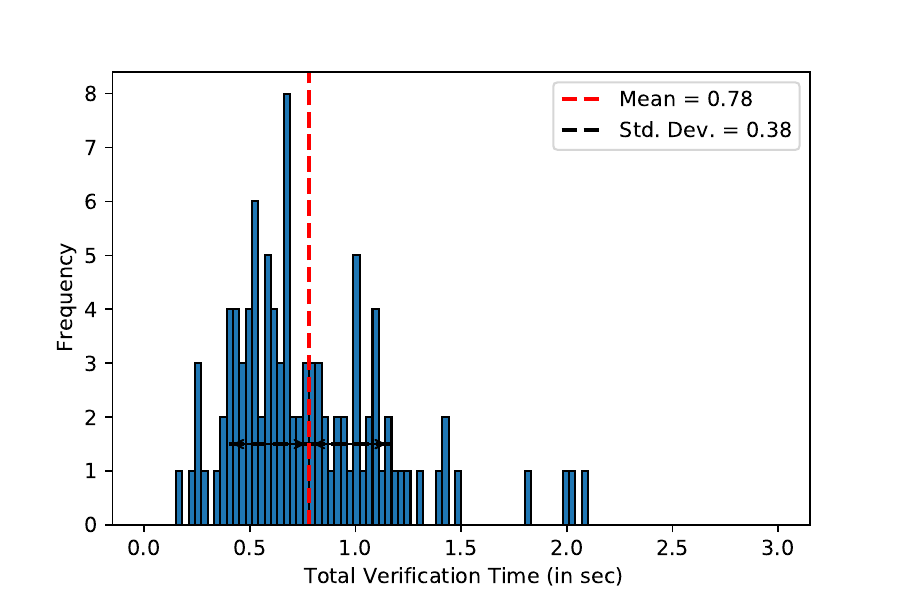}
\caption{\textit{German (2,4)}}
    \label{fig:german:verify}\end{subfigure}\\

\begin{subfigure}[b]{0.345\linewidth}
    \centering
     \includegraphics[width=\linewidth]{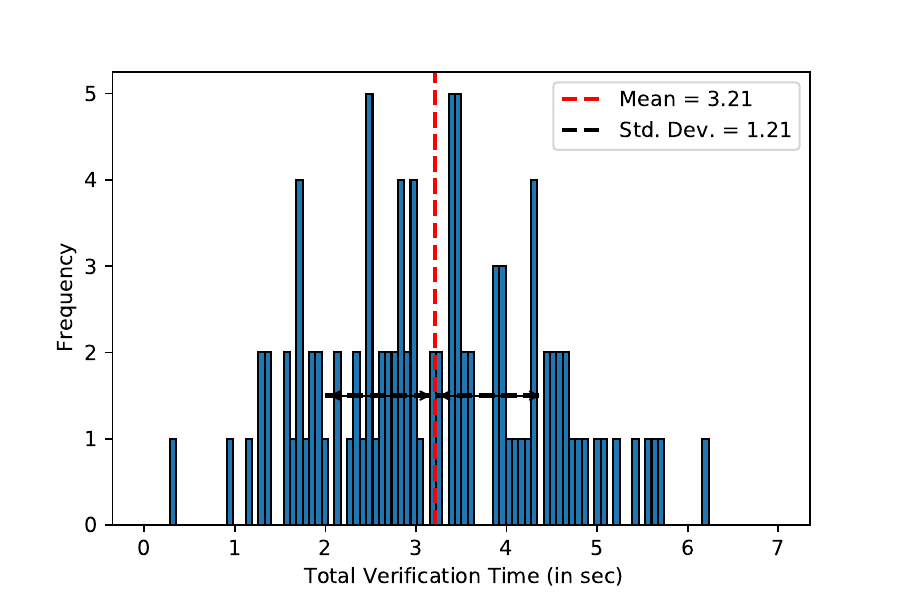}
        \caption{\textit{Credit (4,2)}}
    \label{fig:credit:verify}
\end{subfigure}
\begin{subfigure}[b]{0.305\linewidth}
\centering \includegraphics[width=\linewidth]{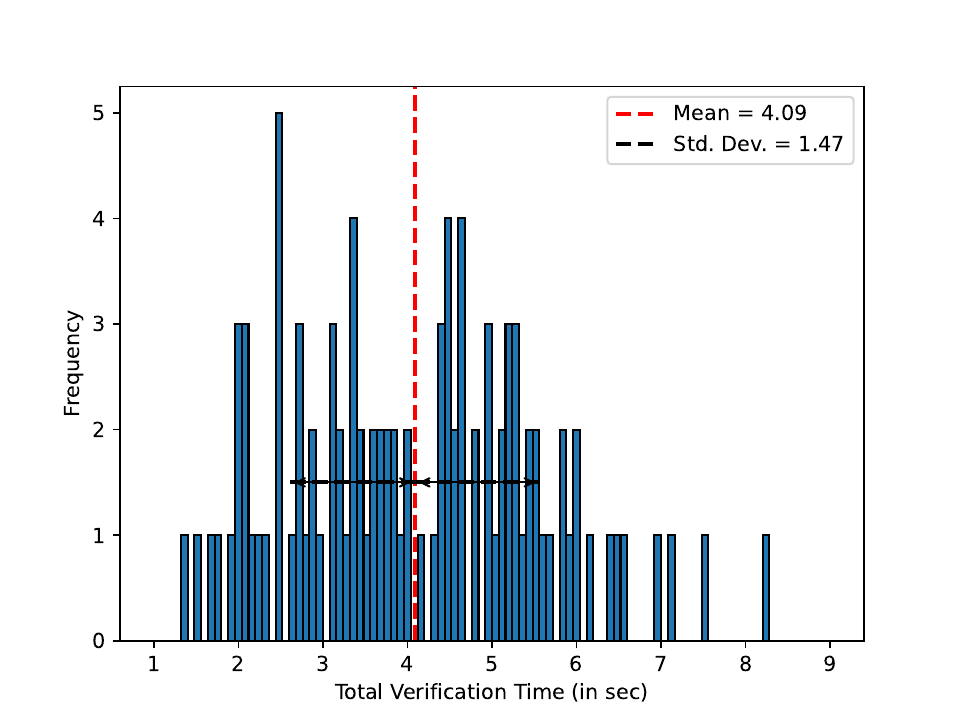}   
\caption{\textit{Adult (4,2)}}
    \label{fig:adult:verify}\end{subfigure}
\begin{subfigure}[b]{0.345\linewidth}
\centering    \includegraphics[width=\linewidth]{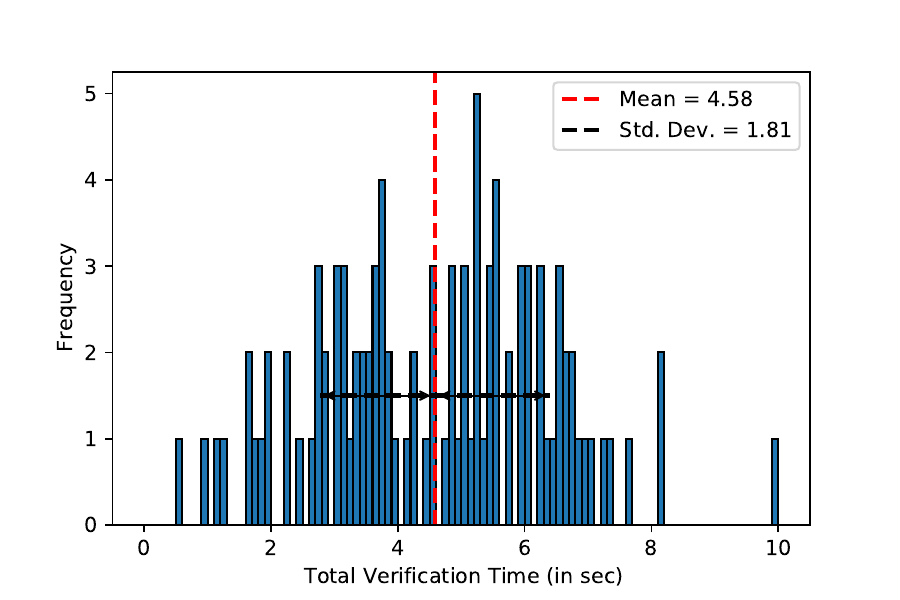}
\caption{\textit{German (4,2)}}
    \label{fig:german:verify}\end{subfigure}\\
    
     \caption{Distribution of verification time for 100 random data points.} 
     \label{fig:verification}
    \end{figure*}

\begin{figure*}[hbt!] 
\center
\begin{subfigure}[b]{0.32\linewidth}
    \centering
     \includegraphics[width=\linewidth]{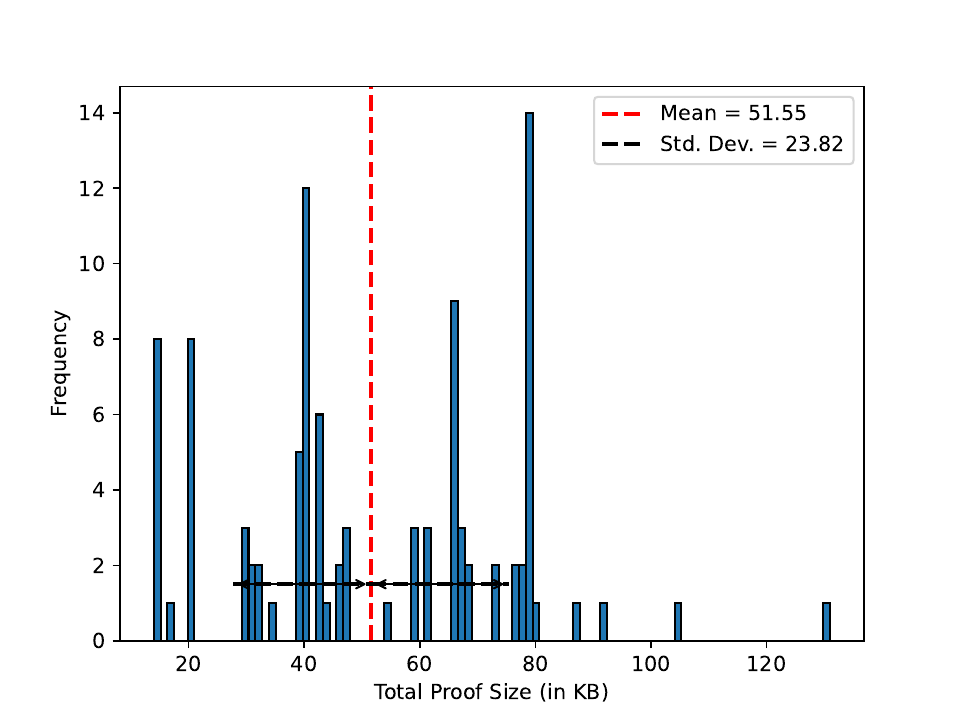}
        \caption{\textit{Credit (2,4)}}
    \label{fig:credit:communication}
\end{subfigure}
\begin{subfigure}[b]{0.32\linewidth}
\centering \includegraphics[width=\linewidth]{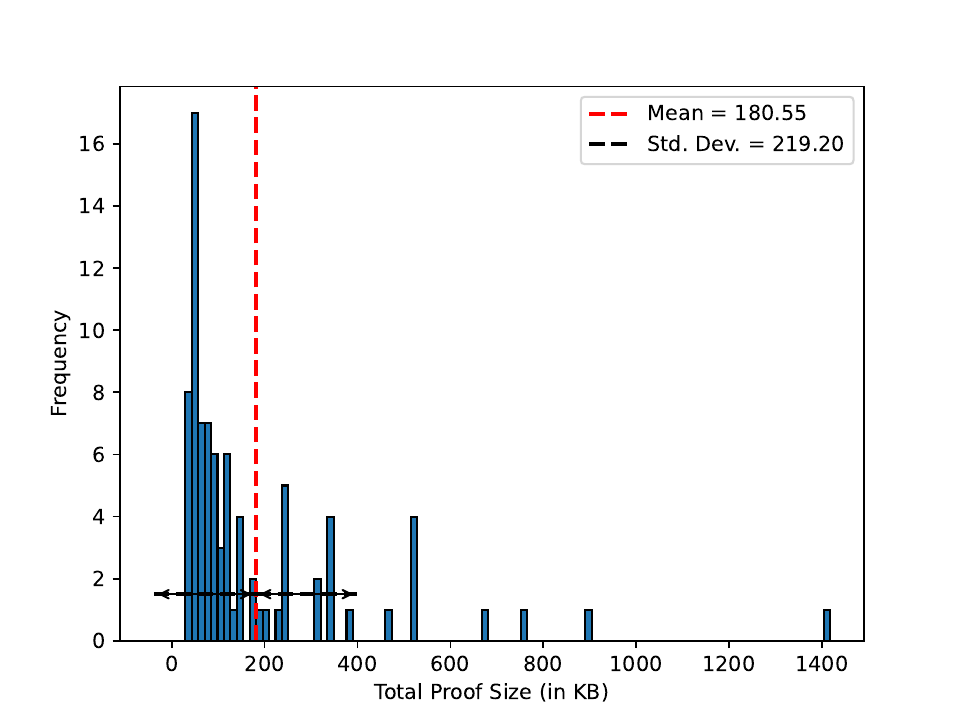}   
\caption{\textit{Adult (8,2)}}
    \label{fig:adult:communicaion}\end{subfigure}
\begin{subfigure}[b]{0.36\linewidth}
\centering    \includegraphics[width=\linewidth]{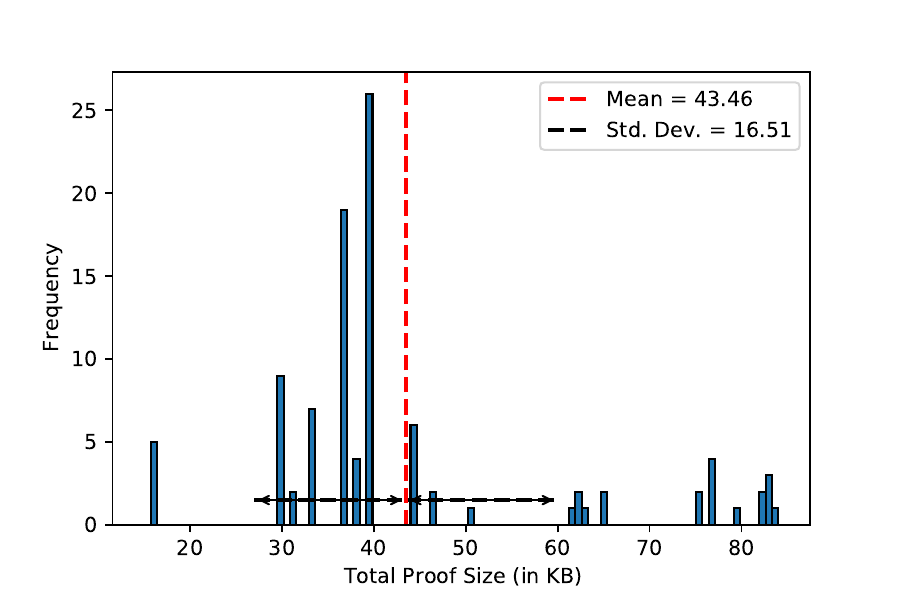}
\caption{\textit{German (2,4)}}
    \label{fig:german:communication}\end{subfigure}\\

\begin{subfigure}[b]{0.345\linewidth}
    \centering
     \includegraphics[width=\linewidth]{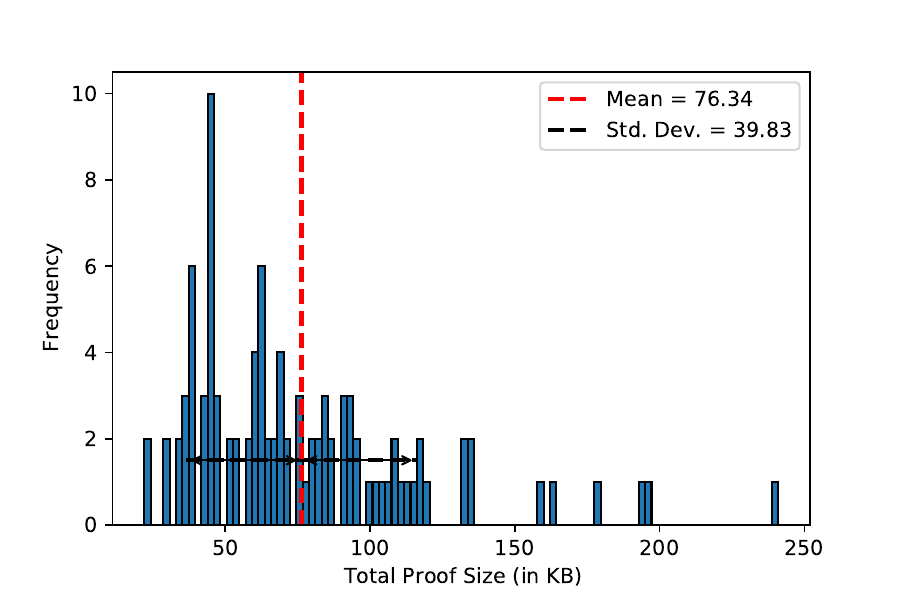}
        \caption{\textit{Credit (4,2)}}
    \label{fig:credit:communication}
\end{subfigure}
\begin{subfigure}[b]{0.305\linewidth}
\centering \includegraphics[width=\linewidth]{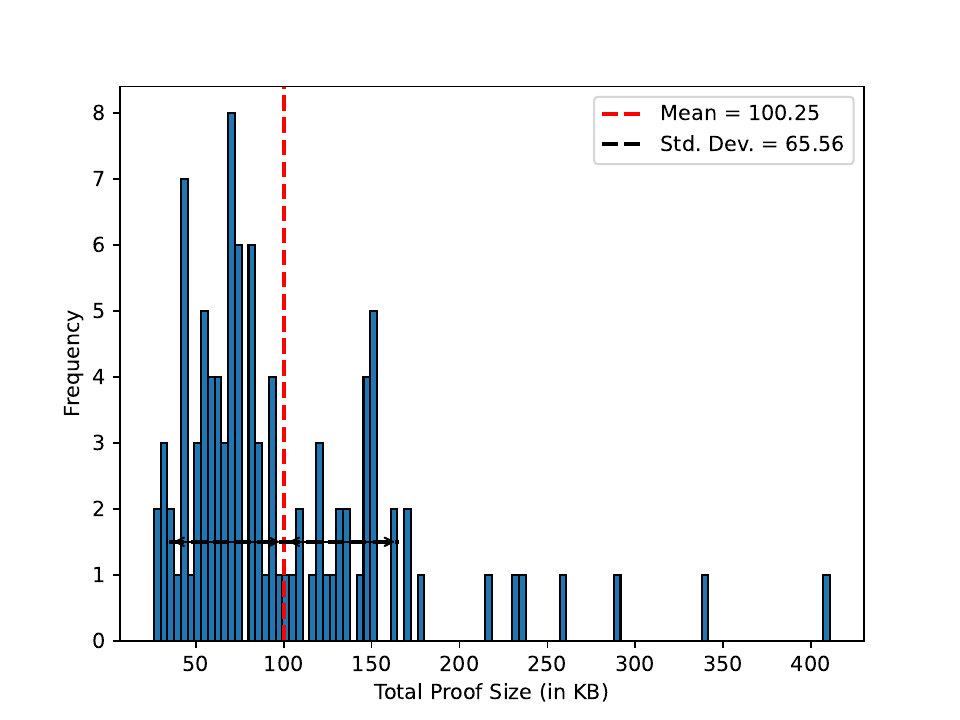}   
\caption{\textit{Adult (4,2)}}
    \label{fig:adult:communicaion}\end{subfigure}
\begin{subfigure}[b]{0.345\linewidth}
\centering    \includegraphics[width=\linewidth]{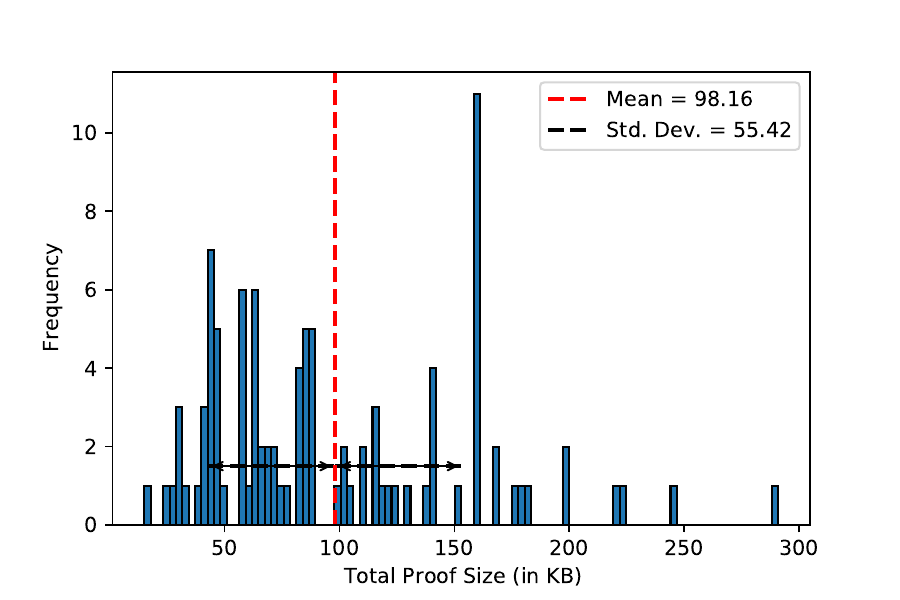}
\caption{\textit{German (4,2)}}
    \label{fig:german:communication}\end{subfigure}\\
    
     \caption{Distribution of communication cost (proof size) for 100 random data points.} 
     \label{fig:communication}
    \end{figure*}
    
\clearpage

\section{Related Work}\label{sec:related work appendix}

\noindent \textbf{Certifiable fairness.} Prior research on certifying fairness of a ML model can be classified into three types. The first line of work issues a certificate of fairness directly from the model weights by framing it as an optimization problem.  \cite{john2020verifying} presented optimization based mechanisms for certifying the (global) individual fairness of 
linear classifiers and kernelized classifiers with polynomial/rbf kernels. \cite{Benussi2022IndividualFG} extended the results to neural networks by encoding (global) individual fairness certification as  a mixed-integer linear programming problem. \cite{kang2022certifying} proposed a notion of distributional fairness and give a framework to compute provable certificates for the same. 

The second line of research has leveraged the connection between robustness and fairness, and proposed fairness-aware training mechanisms akin to adversarial training.  \cite{Ruoss2020} deviced a mechanism for training individually fair representations which can be used to
obtain a certificate of individual fairness for the end-to-end model by proving local robustness.  SenSR ~\cite{Yurochkin2020Training} introduced a distributionally robust optimization approach to enforce individual fairness on a model during training. CertiFair~\cite{khedr2022certifair} enabled certification of (global) individual fairness using off-the-shelf neural network verifiers. Additionally, the authors proposed a fairness aware training methodology with a  modified reguralizer. \cite{smoothing} applied randomized smoothing from adversarial robustness to make neural networks individually fair under a given weighted $\ell_p$ metric. \cite{doherty2023individual} estimated the (global) individual fairness parameter for Bayesian neural networks by designing Fair-FGSM and Fair-PGD -- fairness-aware extensions to gradient-based adversarial attacks for BNNs. 

The final line of work is based on learning theoretic approaches~\cite{yadav2022learningtheoretic,yan2022active,AVOIR} where a third-party audits the fairness of a model in a query-efficient manner. 

The problem of fairness certification has also garnered attention from the formal verification community. 
FairSquare~\cite{FairSquare} encoded a range of global fairness definitions as probabilistic program properties and provides a tool for automatically certifying that a program meets a given fairness property. VeriFair~\cite{Bastani2019} used adaptive concentration inequalities to design a probabilistically sound global fairness certification mechanism for neural networks. \cite{Urban2020} proposes a static analysis framework for certifying fairness of feed-forward neural networks.  Justicia~\cite{Ghosh2020JusticiaAS} presents a stochastic satisfiability
framework for formally verifying different group fairness measures, such as disparate impact, statistical parity, and equalized odds, of supervised
learning algorithms. A recent work, Fairify~\cite{Fairify}, generates a certificate for the global individual fairness of a pre-trained neural network using SMT solvers. 
It is important to note that all the aforementioned approaches focus on certification in the plain text, i.e., they do not preserve model confidentiality.

\noindent\textbf{Verifiable machine learning.} A growing line of work has been using cryptographic primitives to verify certain properties of a ML model without violating its confidentiality. Prior research has primarily focused on verifying the inference and accuracy of models. For instance, 
\cite{ZKDT} proposed a zero-knowledge protocol for tailored for verifying decision trees, while zkCNN~\cite{Liu2021zkCNNZK} introduced an interactive
protocol for verifying model inference for convolutional neural networks. Several other works have focused on non-interactive zero-knowledge inference for neural networks, including \cite{PvCNN, VI2, kang2022scaling, sun2023zkdl, Zen, vCNN}.   VeriML~\cite{VeriML} enabled the verification of the training process of a model that has been outsourced to an untrusted third party. \cite{PoT} proposed a mechanism for generating a cryptographic proof-of-training for logistic regression. 

Most of the prior work on verifying fairness while maintaining model confidentiality~\cite{pentyala2022privfair,BlindJustice,Toreini2023VerifiableFP,Segal21,Park22} has approached the problem in the third-party auditor setting. A recent work \cite{confidant} proposed a fairness-aware training pipeline for decision trees that allows the model owner to cryptographically prove that the learning
algorithm used to train the model was fair by design. In contrast, \name~allows a model owner to issue a certificate of fairness of neural networks by simply inspecting the model weights post-training.

\end{document}